\LetLtxMacro\amsproof\proof
\LetLtxMacro\amsendproof\endproof
  \LetLtxMacro\proof\amsproof
  \LetLtxMacro\endproof\amsendproof
\newtheorem{theorem}{Theorem}
\newtheorem{definition}{Definition}
\newtheorem{lemma}[theorem]{Lemma}
\newtheorem{corollary}[theorem]{Corollary}
\newtheorem*{theorem*}{Theorem}
\newtheorem*{remark*}{Remark}
\xpatchcmd{\proof}{\itshape}{\normalfont\proofnamefont}{}{}
\DeclarePairedDelimiter\ceil{\lceil}{\rceil}
\DeclarePairedDelimiter\floor{\lfloor}{\rfloor}
\definecolor{darkgreen}{rgb}{0,0.5,0}
\definecolor{purple}{rgb}{1,0,1}
\newcommand{\kibitz}[2]{\ifnum\Comments=1\textcolor{#1}{#2}\fi}
\newcommand{\vinod}[1]{\kibitz{green}      {[VIN: #1]}}
\newcommand{\Acal}{\mathcal{A}}
\newcommand{\Bcal}{\mathcal{B}}
\newcommand{\Dcal}{\mathcal{D}}
\newcommand{\Gcal}{\mathcal{E}}
\newcommand{\Hcal}{\mathcal{H}}
\newcommand{\Mcal}{\mathcal{M}}
\newcommand{\Zcal}{\mathcal{Z}}
\newcommand{\Xcal}{\mathcal{X}}
\newcommand{\Ycal}{\mathcal{Y}}
\newcommand{\Lcal}{\mathcal{L}}
\DeclareMathOperator*{\argmax}{arg\,max}
\DeclareMathOperator*{\argmin}{arg\,min}
\newcommand{\ha}[1]{\textcolor{brown}{HA: #1}}
\newcommand{\kt}[1]{\textcolor{red}{KT: #1}}
\newcommand{\ha}[1]{}
\newcommand{\kt}[1]{}
\newcommand{\eps}{\varepsilon}
\title{Faster Rates for Private Adversarial Bandits}
\author{
Hilal Asi\thanks{Apple. \texttt{hasi@apple.com}} \qquad
Vinod Raman\thanks{University of Michigan. \texttt{vkraman@umich.edu}. Work done while at Apple.} \qquad
Kunal Talwar\thanks{Apple. \texttt{ktalwar@apple.com}}}
\date{\today}
\begin{document}

\maketitle

\begin{abstract}
    We design new differentially private algorithms for the problems of adversarial bandits and bandits with expert advice. For adversarial bandits, we give a simple and efficient conversion of any non-private bandit algorithm to a private bandit algorithm. Instantiating our conversion with existing non-private bandit algorithms gives a regret upper bound of $O\left(\frac{\sqrt{KT}}{\sqrt{\eps}}\right)$, improving upon the existing upper bound $O\left(\frac{\sqrt{KT \log(KT)}}{\eps}\right)$ for all $\eps \leq 1$. In particular, our algorithms allow for sublinear expected regret even when $\eps \leq \frac{1}{\sqrt{T}}$, establishing the first known separation between central and local differential privacy for this problem. For bandits with expert advice, we give the first differentially private algorithms, with expected regret $O\left(\frac{\sqrt{NT}}{\sqrt{\eps}}\right), O\left(\frac{\sqrt{KT\log(N)}\log(KT)}{\eps}\right)$, and $\tilde{O}\left(\frac{N^{1/6}K^{1/2}T^{2/3}\log(NT)}{\eps^{1/3}} + \frac{N^{1/2}\log(NT)}{\eps}\right)$, where $K$ and $N$ are the number of actions and experts respectively. These rates allow us to get sublinear regret for different combinations  of small and large $K, N$ and $\eps.$
\end{abstract}

\section{Introduction}

In the adversarial bandit problem, a learner plays a sequential game against nature over $T \in \mathbb{N}$ rounds. In each round $t \in \{1, \dots, T\}$, nature picks a loss function $\ell_t: [K] \rightarrow [0, 1]$, hidden to the learner. The learner, using the history of the game up to time point $t-1$, selects a potentially random action $I_t \in \{1, \dots, K\}$ and nature reveals only the loss $\ell_t(I_t)$ of the selected action. For any sequence of loss functions $\ell_1, \dots, \ell_T$, the goal of the learner is to select a sequence of actions $I_1, \dots, I_T$, while only observing the loss of selected actions, such that its expected \emph{regret}
$$\mathbb{E}\left[ \sum_{t=1}^T \ell_t(I_t)\right] - \argmin_{i \in [K]} \sum_{t=1}^T \ell_t(i)$$
\noindent is minimized, where the expectation is taken with respect to the randomness of the learner. 

Bandit algorithms, and in particular adversarial bandit algorithms \citep{auer2002nonstochastic}, have been of significant interest for over two decades \citep{bubeck2012regret} due to their applications to online advertising, medical trials, and recommendation systems. In many of these settings, one would like to publish the actions selected by bandit algorithms without leaking sensitive user information. For example, when predicting treatment options for patients with the goal of maximizing the number of cured patients, one may want to publish results about the best treatment without leaking sensitive patient medical history \citep{lu2021bandit}. In online advertising, a goal is to publish the recommended ads without leaking user preferences. In light of such privacy concerns, we study adversarial bandits under the constraint of differential privacy \citep{dwork2006differential}. Surprisingly, unlike the stochastic setting \citep{azize2022privacy}, the price of privacy in adversarial bandits is not well understood. Existing work by \citet{agarwal2017price} and \citet{tossou2017achieving} give $\eps$-differentially private bandit algorithms with expected regret at most $O\left(\frac{\sqrt{KT\log(K)}}{\eps}\right)$  \footnote{ \citet{tossou2017achieving} also claim to give an algorithm with regret $\tilde{O}\left(\frac{T^{2/3}\sqrt{K\ln(K)}}{\eps^{1/3}}\right)$, however, we are unable to verify its correctness. See Appendix \ref{app:exp3lb}.}. However, their algorithms satisfy the stronger notion of local differential privacy and become vacuous for tasks with high privacy requirements, where one might take $\eps < \frac{1}{\sqrt{T}}.$ In fact, it was not known how large $\eps$ needs to be in order to obtain sublinear expected worst-case regret. 

\vspace{5pt}

\noindent \textbf{Main Contributions.} Motivated by this gap, we provide new, differentially private algorithms for adversarial bandits and bandits with expert advice  with better trade-offs between privacy and regret. In the adversarial bandits setting, we provide a simple and efficient conversion of any non-private bandit algorithm into a private bandit algorithm. By instantiating this conversion with existing (non-private) bandit algorithms, we get $\eps$-differentially private bandit algorithms with expected regret at most $O\left(\frac{\sqrt{KT}}{\sqrt{\eps}}\right)$, improving upon the best known upper bounds for \emph{all} $\eps \leq 1$. In particular, this result shows that sublinear regret is possible for any $\eps \in \omega\left(\frac{1}{T}\right)$. As corollaries, we establish separations in the achievable regret bounds between:

\begin{itemize}
\item[(1)]  \textbf{Oblivious and Adaptive adversaries.} In particular, while we show that sublinear regret is possible for all choices of $\eps \in \omega(\frac{1}{T})$ under an oblivious adversary, this is not the case for an adaptive adversary, where one cannot achieve sublinear regret if $\eps \in o(\frac{1}{\sqrt{T}})$ \citep{asi2023private}.

\item[(2)] \textbf{Central and Local differential privacy.} While our results show that sublinear regret is possible if $\eps \in o(\frac{1}{\sqrt{T}})$ under central differential privacy, it is well known that this is not the case for local differential privacy.
\end{itemize}

For bandits with expert advice \citep{auer2002nonstochastic}, we give the first differentially private algorithms. In particular, we give three different $(\eps, \delta)$-differentially private bandit algorithms, obtaining expected regret $O\left(\frac{\sqrt{NT}}{\sqrt{\eps}}\right), O\left(\frac{\sqrt{KT\log(N)}\log(KT)}{\eps}\right)$, and $\tilde{O}\left(\frac{N^{1/6}K^{1/2}T^{2/3}\log(NT)}{\eps^{1/3}} + \frac{N^{1/2}\log(NT)}{\eps}\right)$ respectively. These regret guarantees cover regimes with high-privacy requirements and regimes with a large number of experts $N$. In both settings, our techniques involve combining the Laplace mechanism with batched losses.

\begin{table*}[t]
\centering
\caption{Summary of upper bounds with constant factors and dependencies on $\log{\frac{1}{\delta}}$ suppressed. The three rows for Bandits with Experts represent different algorithms with incomparable guarantees.}
\vspace{5pt}
\begin{tabular}{|c|c|c|}
\hline
 & \textbf{{Existing Work}} & \textbf{{Our Work}} \\
\hline
\textbf{Adversarial Bandits} & $\frac{\sqrt{KT\log(KT)}}{\eps}$ & $\frac{\sqrt{KT}}{\sqrt{\eps}}$ \\
\hline
\textbf{Bandits with Experts} & NA & $\frac{\sqrt{NT}}{\sqrt{\eps}}$ \\
\hline
\textbf{Bandits with Experts} & NA & $\frac{\sqrt{KT\log(N)}\log(KT)}{\eps}$ \\
\hline
\textbf{Bandits with Experts} & NA & $\frac{N^{1/6}K^{1/2}T^{2/3}\log(NT)}{\eps^{1/3}} + \frac{N^{1/2}\log(NT)}{\eps}$   \\
\hline
\end{tabular}
\end{table*}
\subsection{Related Works}

\textbf{Adversarial Bandits and Bandits with Expert Advice.} We refer the reader to the excellent book by \citet{bubeck2012regret} for a history of stochastic and adversarial bandits.  The study of the adversarial bandit problem dates back at least to the seminal work of \citet{auer2002nonstochastic}, who show that a modification to the Multiplicative Weights Algorithm, known as EXP3, achieves worst-case expected regret $O\left(\sqrt{TK\log(K)}\right)$. Following this work, there has been an explosion of interest in designing better adversarial bandit algorithms, including, amongst others, the work by \citet{audibert2009minimax}, who establish that the minimax regret for adversarial bandits is $\Theta\left(\sqrt{TK}\right)$.  More recently, there has been interest in unifying existing adversarial bandit algorithms through the lens of Follow-the-Regularized Leader (FTRL) and Follow-the-Perturbed-Leader (FTPL) \citep{abernethy2015fighting}. Surprisingly, while it was known since the work of \citet{audibert2009minimax} that an FTRL-based approach can lead to minimax optimal regret bounds, it was only recently shown that this is also the case for FTPL-based bandit algorithms \citep{honda2023follow}. 

The first works for bandits with expert advice  also date back at least to that of \citet{auer2002nonstochastic}, who propose EXP4 and bound its expected regret by $O\left(\sqrt{TK\log(N)}\right)$, where $N$ is the number of experts.   When $N \geq K$, \citet{seldin2016lower} prove a lower bound of $\Omega\left(\sqrt{\frac{K}{\log(K)}T \log(N)}\right)$ on the expected regret, showing that EXP4 is already near optimal. As a result, EXP4 has become an important building block for related problems, like online multiclass classification \citep{daniely2013price, raman2024multiclass} and sleeping bandits \citep{kleinberg2010regret}, among others. 

\vspace{5pt}
\noindent \textbf{Private Online Learning.} \citet{dwork2010differential} initiated the study of differentially private online learning. \citet{jain2012differentially} extend these results to broad setting of online convex programming by using gradient-based algorithms to achieve differential privacy. Following this work, \citet{guha2013nearly} privatize the Follow-the-Approximate-Leader template to obtain sharper guarantees for online convex optimization. In the special case of learning with expert advice, \citet{dwork2014algorithmic} and  \citet{jain2014near} give private online learning algorithms with regret bounds of $O\left(\frac{\sqrt{T \log(N)}}{\eps}\right).$ More recently, \citet{agarwal2017price} design private algorithms for online linear optimization with regret bounds that scale like $O(\sqrt{T}) + O(\frac{1}{\eps})$. In particular, for the setting of learning with expert advice, they show that it is possible to obtain a regret bound that scales like $O\left(\sqrt{T \log(N)} + \frac{N \log(N) \log^2{T}}{\eps}\right)$, improving upon the work by \citet{dwork2014algorithmic} and \cite{jain2014near}. For large $N$, this upper bound was further improved to $O\left(\sqrt{T \log(N)} + \frac{T^{1/3} \log(N)}{\eps}\right)$ and $O\left(\sqrt{T \log(N)} + \frac{T^{1/3} \log(N)}{\eps^{2/3}}\right) $ by \citet{asi2023private} and \citet{asi2024private} respectively in the oblivious setting.

\vspace{5pt}
\noindent \textbf{Private Bandits.} The majority of existing work on differentially private bandits focus on the stochastic setting \citep{mishra2015nearly, tossou2016algorithms, sajed2019optimal, hu2021optimal, azize2022privacy}, linear contextual bandits \citep{shariff2018differentially, neel2018mitigating}, or adjacent notions of differential privacy \citep{zheng2020locally, tenenbaum2021differentially, ren2020multi}. To our knowledge, there are only three existing works that study differentially private \emph{adversarial} bandits. The first is by \citet{guha2013nearly} who give an $(\eps, \delta)$-differentially private bandit algorithm with expected regret $O\left(\frac{KT^{3/4}}{\eps}\right)$. Finally, and in parallel, \citet{agarwal2017price}
and \citet{tossou2017achieving} improve the upper bound to $O\left(\frac{\sqrt{KT\log(K)}}{\eps}\right)$. We note that the private algorithms given by \citet{agarwal2017price}
and \citet{tossou2017achieving}  satisfy the even stronger notion of local differential privacy \citep{duchi2013local}.

\section{Preliminaries}
\subsection{Notation}
Let $K \in \mathbb{N}$ denote the number of actions and  $\ell: [K] \mapsto [0, 1]$ denote an arbitrary loss function that maps an action to a bounded loss. For an abstract sequence $z_1, \dots, z_n$, we abbreviate it as $z_{1:n}$ and $(z_s)_{s=1}^n$ interchangeably. For a  measurable space $(\Xcal, \sigma(\Xcal))$, we let $\Pi(\Xcal)$ denote the set of all probability measures on $\Xcal$. We let $\operatorname{Lap}(\lambda)$ denote the Laplace distribution with mean zero and scale $\lambda$ such that its probability density function is $f_{\lambda}(x) = \frac{1}{2\lambda}\exp\left(\frac{-|x|}{\lambda}\right).$ Finally, we let $[N] := \{1, \dots, N\}$ for $N \in \mathbb{N}$. 

\subsection{The Adversarial Bandit Problem} 

In the adversarial bandit problem, a learner plays a sequential game against nature over $T \in \mathbb{N}$ rounds. In each round $t \in [T]$, the learner selects (potentially randomly) an action $I_t \in [K]$ and observes \emph{only} its loss $\ell_{t}( I_t)$. The goal of the learner is to adaptively select actions $I_1, \dots, I_T \in [K]$ such that its cumulative loss is close to the best possible cumulative loss of the best fixed action $i^{\star} \in [K]$ in hindsight. Crucially, we place no assumptions on the sequence of losses $\ell_1, \dots, \ell_T$, and thus they may be chosen adversarially. 

Before we quantify the  performance metric of interest, we provide a formal definition of a bandit online learning algorithm. This definition will be useful for precisely formalizing the notion of privacy (Section \ref{sec:prelimpriv}) and describing our generic transformation of non-private bandit algorithms to private ones (Section \ref{sec:bandits}). 

\begin{definition}[Bandit Algorithm] \label{def:banditalg} A bandit algorithm is a deterministic map $\Acal: ([K] \times \mathbb{R})^{\star} \rightarrow \Pi([K])$ which, for every $t \in \mathbb{N}$, maps a history of actions and observed losses  $(I_s, \ell_{s}(I_s))_{s = 1}^{t-1} \in ([K] \times \mathbb{R})^{t-1}$ to a distribution $\mu_t \in \Pi([K]).$ The learner then samples an action $I_t \sim \mu_t$. 
\end{definition}

We will slightly abuse notation by using $\mathcal{A}((I_s, \ell_{s}(I_s))_{s = 1}^{t-1})$ to denote the random action $I_t$ drawn from $\mu_t$, the distribution that $\Acal$ outputs when run on $(I_s, \ell_{s}(I_s))_{s = 1}^{t-1}$. In addition, we will sometimes use $\Hcal_t := (I_s, \ell_{s}(I_s))_{s = 1}^{t-1}$ to denote the history of selected actions and observed losses induced by running $\Acal$ up to, but not including, timepoint $t \in \mathbb{N}$. Note that $\Hcal_t$ is a random variable and we may write the action selected by algorithm $\Acal$ on round $t \in \mathbb{N}$ as $\Acal(\Hcal_{t}).$ It will also be helpful to think about $\Hcal_{t}$ as the View of $\Acal$ as a result of its interaction with the adversary up to, but not including, timepoint $t$.

Given a bandit online learner $\Acal$, we define its \emph{worst-case} expected regret as
$$\operatorname{R}_{\Acal}(T, K) = \sup_{\ell_{1:T}}\left(\mathbb{E}\left[\sum_{t=1}^T \ell_t(\Acal(\Hcal_t)) \right] - \inf_{i \in [K]} \sum_{t=1}^T \ell_t(i)\right),$$
 where the expectation is taken only with respect to the randomness of the learner. Our goal is to design a bandit algorithm  $\Acal$ such that $\operatorname{R}_{\Acal}(T, K) = o(T)$. Note that our definition of regret means that we are assuming an \emph{oblivious} adversary, one that selects the entire sequence of losses $\ell_1, \dots, \ell_T$ before the game begins. This assumption is in contrast to that of an adaptive adversary which, for every $t \in \mathbb{N}$, may select the loss $\ell_t$ based on $\Hcal_t$. We leave quantifying the rates for private adversarial bandits under adaptive adversaries for future work. That said, we do note that the lower bounds for adaptive adversaries established in full-information setting by \citet{asi2023private} also carry over to the bandit feedback setting. Accordingly, Corollary \ref{cor:htinfcor} and Theorems 4 and 5 in \citet{asi2023private} show that the strong separation in the possible rates for oblivious and adaptive adversaries also holds under bandit feedback.  

\subsection{The Bandits with Expert Advice Problem}

In adversarial bandits with expert advice \citep{auer2002nonstochastic}, we distinguish between a set of experts $[N]$ and the set of available actions $[K]$. In each round $t \in [T]$, each expert $j \in [N]$ predicts a distribution $\mu_t^j \in \Pi([K])$. The learner uses these predictions to compute its own distribution $\hat{\mu}_t \in \Pi([K])$, after which it samples $I_t \sim \hat{\mu}_t$ and observes the loss $\ell_t(I_t).$ The goal of the learner is to compete against the best fixed expert in hindsight while observing bandit feedback. We need a new definition of a bandit with expert advice algorithm to account for the fact that the learner has access to expert advice. 

\begin{definition}[Bandits with Expert Advice Algorithm] \label{def:ctxbanditalg} A bandit with expert advice algorithm is a deterministic map $\Acal: ([K] \times \mathbb{R})^{\star} \times (\Pi([K])^N)^{\star} \rightarrow \Pi([K])$ which, for every $t \in \mathbb{N}$, maps the history of actions and observed losses  $(I_s, \ell_{s}(I_s))_{s = 1}^{t-1} \in ([K] \times \mathbb{R})^{t-1}$ as well the sequence of expert advice $\mu^{1:N}_{1:t} \in ((\Pi([K])^N)^{t}$ to a distribution $\hat{\mu}_t \in \Pi([K]).$ The learner then samples an action action $I_t \sim \hat{\mu}_t$.  
\end{definition}

 One can now take an analogous definition of worst-case  expected regret to be
 \ifbool{arxiv}{$$
\operatorname{R}_{\Acal}(T, K, N) := \sup_{\ell_{1:T}} \sup_{\mu^{1:N}_{1:T}}\Bigg(\mathbb{E}\left[\sum_{t=1}^T \ell_t(\Acal(\Hcal_t, \mu_{1:t}^{1:N})) \right] - \inf_{j \in [N]} \sum_{t=1}^T \sum_{i = 1}^K \mu_t^i(j) \cdot \ell_t(i) \Bigg)
$$}{
 \begin{multline*}
\operatorname{R}_{\Acal}(T, K, N) := \sup_{\ell_{1:T}} \sup_{\mu^{1:N}_{1:T}}\Bigg(\mathbb{E}\left[\sum_{t=1}^T \ell_t(\Acal(\Hcal_t, \mu_{1:t}^{1:N})) \right]\\ - \inf_{j \in [N]} \sum_{t=1}^T \sum_{i = 1}^K \mu_t^i(j) \cdot \ell_t(i) \Bigg)
\end{multline*}
 }

where the expectation is taken only with respect to the randomness of the learner. As for adversarial bandits, our definition of minimax regret for bandits with experts advice implicitly assumes an oblivious adversary. 
\subsection{Differential Privacy} \label{sec:prelimpriv}

In this work, we are interested in designing bandit algorithms that have low expected regret while satisfying the constraint of \emph{differential privacy}. Roughly speaking, differential privacy quantifies the following algorithmic property: an algorithm $\Acal$ is a \emph{private} bandit algorithm if, for any two sequences of losses that differ in exactly one position, the distributions over actions induced by running $\Acal$ on the two loss sequences are close. Definition \ref{def:dp} formalizes this notion of privacy in adversarial bandits. 

\begin{definition}[$(\eps, \delta)$-Differential Privacy in Adversarial Bandits \citep{dwork2014algorithmic}] \label{def:dp} A bandit algorithm $\Acal$ is $(\eps, \delta)$-differentially private if for every $T \in \mathbb{N}$, any two sequences of loss functions $\ell_{1:T}$ and $\ell^{\prime}_{1:T}$ differing at exactly one time point $t^{\prime} \in [T]$, and any $E \subset [K]^T$, \ifbool{arxiv}{we have that $$\mathbb{P}\left[(\Acal(\Hcal_1),\Acal(\Hcal_2), \dots, \Acal(\Hcal_T))  \in E\right] \leq e^{\eps}\mathbb{P}\left[(\Acal(\Hcal^{\prime}_1),\Acal(\Hcal^{\prime}_2), \dots, \Acal(\Hcal^{\prime}_T))  \in E\right] + \delta,$$ where we let $\Hcal_t = (I_s, \ell_{s}(I_s))_{s = 1}^{t-1}$ and $\Hcal^{\prime}_t = (I^{\prime}_s, \ell^{\prime}_{s}(I_s))_{s = 1}^{t-1}.$}{we have that $\mathbb{P}\left[(\Acal(\Hcal_1),\Acal(\Hcal_2), \dots, \Acal(\Hcal_T))  \in E\right] \leq e^{\eps}\mathbb{P}\left[(\Acal(\Hcal^{\prime}_1),\Acal(\Hcal^{\prime}_2), \dots, \Acal(\Hcal^{\prime}_T))  \in E\right] + \delta,$ where we let $\Hcal_t = (I_s, \ell_{s}(I_s))_{s = 1}^{t-1}$ and $\Hcal^{\prime}_t = (I^{\prime}_s, \ell^{\prime}_{s}(I_s))_{s = 1}^{t-1}.$} 
\end{definition}

\noindent We note that the our notion of differential privacy in Definition \ref{def:dp} is inherently for an \emph{oblivious} adversary. \kt{This needs to be redone as $(\eps,\delta)$-DP requires $\delta$ to be over all sets of outcomes. Let's discuss the \vinod{oh yup, good point! It should just be over all subsets in $[K]^T$ right?}}\kt{Yeah, so it is a little tricky. We might as well define the adaptive case here.} A different definition of privacy is required if the adversary is allowed to be \emph{adaptive} i.e., having the ability to pick the loss $\ell_t$ using the realized actions $I_1, \dots,I_{t-1}$ played by the learner (see Definition 2.1 in \citet{asi2023private} for more details).  While the utility guarantees of our bandit algorithms hold only for oblivious adversaries, their privacy guarantees hold against adaptive adversaries.  

We use an analogous definition of differential privacy for bandits with expert advice. 

\begin{definition}[$(\eps, \delta)$-Differential Privacy in Bandits with Expert Advice \citep{dwork2014algorithmic}] \label{def:ctxdp} A bandit with expert advice algorithm $\Acal$ is $(\eps, \delta)$-differentially private if for every $T \in \mathbb{N}$, any two sequences of loss functions $\ell_{1:T}$ and $\ell^{\prime}_{1:T}$ differing at exactly one time point $t^{\prime} \in [T]$, and any  $E \subset [K]^T$, \ifbool{arxiv}{we have that $$\mathbb{P}\left[(\Acal(\Hcal_1),\Acal(\Hcal_2), \dots, \Acal(\Hcal_T))  \in E \right] \leq e^{\eps}\mathbb{P}\left[(\Acal(\Hcal^{\prime}_1),\Acal(\Hcal^{\prime}_2), \dots, \Acal(\Hcal^{\prime}_T))  \in E \right] + \delta,$$ where we let $\Hcal_t = (I_s, \ell_{s}(I_s))_{s = 1}^{t-1}$ and $\Hcal^{\prime}_t = (I^{\prime}_s, \ell^{\prime}_{s}(I_s))_{s = 1}^{t-1}.$}{we have that $\mathbb{P}\left[(\Acal(\Hcal_1),\Acal(\Hcal_2), \dots, \Acal(\Hcal_T))  \in E \right] \leq e^{\eps}\mathbb{P}\left[(\Acal(\Hcal^{\prime}_1),\Acal(\Hcal^{\prime}_2), \dots, \Acal(\Hcal^{\prime}_T))  \in E \right] + \delta,$ where we let $\Hcal_t = (I_s, \ell_{s}(I_s))_{s = 1}^{t-1}$ and $\Hcal^{\prime}_t = (I^{\prime}_s, \ell^{\prime}_{s}(I_s))_{s = 1}^{t-1}.$} 
\end{definition}

 Note that Definition \ref{def:ctxdp} implicitly assumes that only the sequence of losses is sensitive information and that expert predictions are public.

Our main focus in this work will be on designing bandit algorithms that satisfy \emph{pure} differential privacy (i.e. when $\delta = 0$). In Appendix \ref{app:privprop}, we review several fundamental properties of privacy and privacy-preserving mechanisms that serve as important building blocks.

\section{Faster Rates for Private Adversarial Bandits} \label{sec:bandits}

In this section, we establish a connection between non-private bandit algorithms that can handle negative losses and $\eps$-differentially private bandit algorithms. Let $\Bcal$ be any bandit algorithm and define 
\begin{align*}
\tilde{\operatorname{R}}_{\Bcal}(T, K, \lambda) &:=\sup_{\ell_{1:T}} \left(\mathbb{E}\left[\sum_{t=1}^T \tilde{\ell}_t(\Bcal(\tilde{\Hcal}_t)) \right] - \inf_{i \in [K]} \mathbb{E}\left[\sum_{t=1}^T \tilde{\ell}_t(i)\right] \right)\\
&= \sup_{\ell_{1:T}} \left(\mathbb{E}\left[\sum_{t=1}^T \ell_t(\Bcal(\tilde{\Hcal}_t)) \right] - \inf_{i \in [K]} \sum_{t=1}^T \ell_t(i) \right),
\end{align*}
\noindent where $\tilde{\ell}_t(i) = \ell_t(i) + Z_t(i)$ with $Z_t(i) \sim \operatorname{Lap}(\lambda)$,  $\tilde{\Hcal}_t = (I_s, \tilde{\ell}_{s}(I_s))_{s = 1}^{t-1}$, and the expectation is taken with respect to both the randomness of $\Bcal$ and the losses $\tilde{\ell}_{1:T}$. Theorem \ref{thm:banditconv} states that one can always convert $\Bcal$ into an $\eps$-differentially private bandit algorithm $\Acal$ whose regret guarantees can be written in terms of  $\tilde{\operatorname{R}}_{\Bcal}(T, K, \lambda)$.

\begin{theorem}[Generic Conversion] \label{thm:banditconv} Let $\Bcal$ be any bandit algorithm. Then, for every $\tau \geq 1$ and $\eps \leq 1$, there exists an $\eps$-differentially private bandit algorithm $\Acal_{\tau}$ such that 
$$\operatorname{R}_{\Acal_{\tau}}(T, K) \leq \tau \tilde{\operatorname{R}}_{\Bcal}\left(\frac{T}{\tau}, K, \frac{1}{\eps \tau}\right) + \tau.$$ In particular, picking $\tau = \ceil{\frac{1}{\eps}}$ means that there exists a $\eps$-differentially private bandit algorithm $\Acal$ such that $$\operatorname{R}_{\Acal}(T, K) \leq \frac{2}{\eps} \tilde{\operatorname{R}}_{\Bcal}\left(\eps T, K, 1\right) + \frac{2}{\eps}.$$
\end{theorem}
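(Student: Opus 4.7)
The plan is a batching reduction. Group the $T$ rounds into $B := \lfloor T/\tau \rfloor$ consecutive batches of size $\tau$, and play a fixed default action $i_0$ in the trailing stretch of length $r := T - B\tau < \tau$. At the start of batch $b$ query the base algorithm to obtain $I_b := \Bcal\bigl((I_{b'}, \hat{y}_{b'})_{b'<b}\bigr)$, commit to $I_b$ throughout the $\tau$ rounds of the batch, and then release the noisy average
$$\hat{y}_b \;:=\; \frac{1}{\tau}\sum_{t \in \text{batch } b} \ell_t(I_b) \;+\; W_b, \qquad W_b \sim \operatorname{Lap}\!\bigl(1/(\eps\tau)\bigr),$$
with the $W_b$'s mutually independent. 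The action sequence played by $\Acal_\tau$ is $(I_1,\ldots,I_1,I_2,\ldots,I_2,\ldots,I_B,\ldots,I_B,i_0,\ldots, i_0)$, each $I_b$ repeated $\tau$ times followed by $r$ copies of $i_0$.

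For the privacy claim, fix two loss sequences $\ell, \ell'$ differing at time $t^*$. If $t^*$ lies in the trailing stretch, every $\hat{y}_b$ has identical distribution and the trailing actions are fixed, so the joint law of the action sequence is unchanged. Otherwise $t^*$ belongs to a unique batch $b^*$; conditionally on any realization of the preceding randomness (which in particular fixes $I_{b^*}$), the map $\ell \mapsto \frac{1}{\tau}\sum_{t \in \text{batch } b^*} \ell_t(I_{b^*})$ has global sensitivity exactly $1/\tau$ with respect to a single-coordinate change, so by the Laplace mechanism the release of $\hat{y}_{b^*}$ is $\eps$-DP. Every other $\hat{y}_b$ is determined by losses identical under $\ell$ and $\ell'$ plus independent Laplace noise, while the full $T$-round action sequence is a post-processing of $(\hat{y}_b)_{b=1}^B$; post-processing then delivers $\eps$-differential privacy of the entire output.

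For the regret bound, let $\bar\ell_b(\cdot) := \frac{1}{\tau}\sum_{t \in \text{batch } b}\ell_t(\cdot) \in [0,1]^K$ denote the per-action averaged losses. On full batches, $\Acal_\tau$ suffers $\tau\bar\ell_b(I_b)$ while any comparator $i^* \in [K]$ suffers $\tau\bar\ell_b(i^*)$, so the regret restricted to $\bigcup_{b\leq B}\text{batch } b$ equals $\tau$ times the regret incurred by $\Bcal$ on a length-$B$ adversarial instance with per-round losses $\bar\ell_b$ and Laplace-perturbed feedback of scale $1/(\eps\tau)$. Taking a supremum over loss sequences this is at most $\tilde{\operatorname{R}}_\Bcal\!\bigl(T/\tau, K, 1/(\eps\tau)\bigr)$ by definition, while the trailing stretch contributes at most $r < \tau$ to the regret since losses are in $[0,1]$. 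This yields the first inequality, and substituting $\tau = \lceil 1/\eps\rceil \leq 2/\eps$ together with $T/\tau \leq \eps T$ and $1/(\eps\tau) \leq 1$, plus the natural monotonicity of $\tilde{\operatorname{R}}_\Bcal$ in the horizon and noise scale, gives the ``in particular'' bound.

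I expect the main subtlety to be the privacy bookkeeping: one must argue that, conditional on all randomness preceding batch $b^*$, the action $I_{b^*}$ is frozen before any batch-$b^*$ loss is observed, which is precisely what makes the sensitivity $1/\tau$ uniform over the random choice of $I_{b^*}$ and what lets the other batches contribute zero through post-processing. Once this structural observation is in place, the Laplace mechanism and post-processing are applied essentially as black boxes, and the regret calculation is the mechanical batching argument sketched above.
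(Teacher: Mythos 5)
Your proposal is correct and follows essentially the same route as the paper: the same batch-and-commit construction with Laplace noise of scale $1/(\eps\tau)$ added to the batch-averaged loss of the played arm, privacy via the $1/\tau$ sensitivity of that average conditional on the history (so only the batch containing the differing round contributes, and the action sequence is post-processing), and utility via the standard batching reduction (the paper cites Theorem 2 of Arora et al.) plus a $\tau$ overhead for the trailing rounds. The subtlety you flag—that $I_{b^*}$ is frozen before any batch-$b^*$ loss is observed—is exactly the point the paper's adaptive-composition argument formalizes.
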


As a corollary of Theorem \ref{thm:banditconv}, we establish new upper bounds on the expected regret under the constraint of $\eps$-differential privacy that improves on existing work \emph{in all regimes} of $\eps > 0$. In particular, Corollary \ref{cor:htinfcor} follows by letting $\Bcal$ be the HTINF algorithm from \citet{huang2022adaptive}, which modifies Follow-the-Regularized-Leader (FTRL) for heavy-tailed losses. 

\begin{corollary} [FTRL Conversion] \label{cor:htinfcor} For every  $\eps \in [\frac{1}{T}, 1]$, if $\Bcal$ is \emph{HTINF} with $\alpha = 2$ and $\sigma = \sqrt{6}$,  then Algorithm \ref{alg:banditconv}, when run with $\Bcal$ and $\tau = \ceil{\frac{1}{\eps}}$, is $\eps$-differentially private and suffers worse-case expected regret at most $$O\left(\frac{\sqrt{TK}}{\sqrt{\eps}} + \frac{1}{\eps} \right).$$
\end{corollary}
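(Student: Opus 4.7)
The plan is to instantiate Theorem \ref{thm:banditconv} (second form, with $\tau = \lceil 1/\eps\rceil$) and bound the quantity $\tilde{\operatorname{R}}_{\Bcal}(\eps T, K, 1)$ using the known regret guarantee of HTINF on adversarial heavy-tailed bandit instances. Concretely, Theorem \ref{thm:banditconv} already delivers the privacy guarantee and reduces the problem to controlling the regret of the non-private algorithm $\Bcal$ when the losses observed on each round $t$ are the Laplace-perturbed values $\tilde{\ell}_t(i) = \ell_t(i) + Z_t(i)$ with $Z_t(i) \sim \operatorname{Lap}(1)$. The rest of the proof is just verifying that these perturbed losses fit the heavy-tailed model for which HTINF is analyzed, and then plugging in parameters.

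For the second-moment check, since $\ell_t(i) \in [0,1]$ and $Z_t(i)$ is mean-zero Laplace with scale $1$, I would compute
\[
\mathbb{E}[|\tilde{\ell}_t(i)|^2] = \ell_t(i)^2 + 2\ell_t(i)\mathbb{E}[Z_t(i)] + \mathbb{E}[Z_t(i)^2] = \ell_t(i)^2 + 2 \leq 3 \leq 6,
\]
so the choice $\sigma = \sqrt{6}$ (with $\alpha = 2$) satisfies the heavy-tailed assumption of \citet{huang2022adaptive} with slack. I would also note that although $\tilde{\ell}_t(i)$ can be negative, HTINF is stated for signed heavy-tailed losses via the $\alpha$-th absolute moment, so this is not an issue. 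Conditional on the history $\tilde{\Hcal}_t$, the conditional mean of $\tilde{\ell}_t(i)$ is still $\ell_t(i)$, so the ``surrogate regret'' coincides with the true adversarial regret against $\ell_{1:T}$, which is exactly the identity already observed in the definition of $\tilde{\operatorname{R}}_{\Bcal}$ in Section \ref{sec:bandits}.

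Next, I would invoke the regret guarantee of HTINF: for $\alpha = 2$ and second moment bounded by $\sigma^2$, the algorithm satisfies a worst-case expected regret bound of the form $O(\sigma\sqrt{KT'})$ on a horizon of $T'$ rounds. Applied to the horizon $T' = \eps T$ used inside the reduction, this gives
\[
\tilde{\operatorname{R}}_{\Bcal}(\eps T, K, 1) \;=\; O\!\left(\sqrt{K \eps T}\right).
\]
Substituting into the second form of Theorem \ref{thm:banditconv} yields
\[
\operatorname{R}_{\Acal}(T, K) \;\leq\; \frac{2}{\eps}\, \tilde{\operatorname{R}}_{\Bcal}(\eps T, K, 1) + \frac{2}{\eps} \;=\; O\!\left(\frac{\sqrt{KT}}{\sqrt{\eps}} + \frac{1}{\eps}\right).
\]
The assumption $\eps \geq 1/T$ ensures $\eps T \geq 1$ so that the inner horizon used by HTINF is meaningful and the bound $O(\sqrt{K\eps T})$ is valid.

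The main obstacle I anticipate is the bookkeeping around HTINF: making sure its regret bound applies cleanly to the ``stochastic perturbation of an adversarial loss'' model induced by adding independent Laplace noise to each arm's loss each round, and that the moment assumption is verified with the right constant. Once that is in place, everything else is direct substitution into Theorem \ref{thm:banditconv}, and privacy is inherited automatically from that theorem with no extra argument needed.
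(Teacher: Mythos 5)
Your overall route is the same as the paper's: invoke Theorem \ref{thm:banditconv} with $\tau = \lceil 1/\eps\rceil$, verify that the Laplace-perturbed losses $\tilde{\ell}_t(i) = \ell_t(i) + Z_t(i)$ satisfy the hypotheses of the HTINF regret guarantee with $\alpha = 2$ and $\sigma = \sqrt{6}$, and substitute $\tilde{\operatorname{R}}_{\Bcal}(\eps T, K, 1) = O(\sqrt{K\eps T})$ into the reduction. Your second-moment computation ($\ell_t(i)^2 + \mathbb{E}[Z_t(i)^2] \leq 3 \leq 6$) is fine and in fact slightly tighter than the paper's bound of $2 + 4\lambda^2$, and the privacy and final arithmetic are handled correctly.

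There is, however, one genuine gap. The HTINF guarantee as used here (Theorem 4.1 of \citet{huang2022adaptive}, restated as Theorem \ref{thm:htinf}) requires \emph{two} conditions on the random losses: the $(\alpha,\sigma)$ moment bound \emph{and} truncated non-negativity, i.e.\ $\mathbb{E}[\tilde{\ell}_t(i)\,\mathbb{I}\{|\tilde{\ell}_t(i)| > M\}] \geq 0$ for every $M \geq 0$. You dismiss the possibility of negative losses by saying that HTINF is ``stated for signed heavy-tailed losses via the $\alpha$-th absolute moment, so this is not an issue,'' but that mischaracterizes the theorem: the truncated non-negativity condition is exactly the extra hypothesis HTINF needs to cope with signed losses (its analysis skips or truncates large observations, and the truncated tail must not bias the estimates downward), and it is not implied by the moment bound. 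The paper verifies it with a dedicated helper lemma (Appendix \ref{app:helper}), showing via the symmetry of the Laplace distribution that $\mathbb{E}[(Z + \ell)\,\mathbb{I}\{|Z + \ell| > M\}] \geq 0$ whenever $\ell \in [0,1]$ and $Z \sim \operatorname{Lap}(\lambda)$. Your proof needs this step (or an equivalent one) before Theorem \ref{thm:htinf} can legitimately be applied; once it is added, the rest of your argument goes through as written.
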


In Appendix \ref{sec:additional_upb}, we instantiate Theorem \ref{thm:banditconv} with EXP3 and FTPL to obtain two other upper bounds. In every case, our upper bounds establish the first known separation in rates between central  differential privacy and local differential privacy (see Appendix \ref{app:privprop} for definition) for this problem. Namely, while the lower bounds from \citet{basu2019differential} show that any local $\eps$-differentially private bandit algorithm must suffer linear $\Omega(T)$ expected regret when  $\eps < \frac{1}{\sqrt{T}}$, Corollary \ref{cor:htinfcor} gives an algorithm satisfying $\eps$-central differential privacy (i.e. Definition \ref{def:dp}) whose expected regret is sublinear $o(T)$ even when $\eps < \frac{1}{\sqrt{T}}.$  The remainder of this section is dedicated to proving Theorem \ref{thm:banditconv}. Corollary \ref{cor:htinfcor} is proven in Appendix \ref{app:banditconvcor}. 

\subsection{Proof of Theorem \ref{thm:banditconv}}

The conversion behind Theorem \ref{thm:banditconv} is remarkably simple. At a high-level, it requires simulating the non-private bandit algorithm on noisy batched losses. That is, instead of passing every loss to the non-private bandit algorithm, we play the same arm for a batch size $\tau$, average the loss across this batch, add independent Laplace noise to the batched loss, and then pass this noisy batched loss to the non-private bandit algorithm. By adding Laplace noise to batched losses as opposed to the original losses (as is done by \citet{tossou2017achieving} and  \citet{agarwal2017price}), the magnitude of the required noise is reduced by a multiplicative factor of the batch size. 

However, a key issue that needs to be handled when adding noise  (whether to batched or un-batched losses) is the fact that the losses fed to the non-private bandit algorithm can now be negative and unbounded. Accordingly, in order to get any meaningful utility guarantees, Theorem \ref{thm:banditconv} effectively requires  our non-private bandit algorithm to handle unbounded, negative (but still unbiased) losses. Fortunately, there are several existing adversarial bandit algorithms that can achieve low expected regret while observing negative losses. Three of these are presented in Corollary \ref{cor:htinfcor},  \ref{cor:exp3conv}, and \ref{cor:banditftpl}. To the best of our knowledge, this is the first work to establish a connection between handling negative losses (for example in works that handle heavy-tailed losses) and (non-local) differential privacy. 

Algorithm \ref{alg:banditconv} provides the pseudo code for converting a non-private bandit algorithm to a private bandit algorithm. 

\ifbool{arxiv}{
\begin{algorithm}
\setcounter{AlgoLine}{0}
\caption{Non-Private to Private Conversion}\label{alg:banditconv}
\KwIn{Bandit algorithm $\Bcal$, batch size $\tau$, privacy parameter $\eps \in (0, 1]$}
\textbf{Initialize}: $j = 1$

\For{$t = 1,\dots,T$} {

    \uIf{$t = (j-1)\tau + 1$}{
        Receive action $I_j$ from $\Bcal.$
    }

    Play action $I_t := I_j$

    Observe loss $\ell_t(I_t).$

    \uIf{$t = j\tau$}{
        Define $\hat{\ell}_j(i) := \frac{1}{\tau}\sum_{s = (j-1)\tau  + 1}^{j\tau} \ell_s(i)$
        
         Pass $\hat{\ell}_j(I_j) + Z_j$ to $\Bcal$,  where $Z_j \sim \operatorname{Lap}(\frac{1}{\tau \eps}).$ 

        Update $j \leftarrow j + 1.$
    }
    
}
\end{algorithm}
}{

\begin{algorithm}[tb]
   \caption{Non-Private to Private Conversion}
   \label{alg:banditconv}
\begin{algorithmic}[1]
   \STATE {\bfseries Input:} Non-private bandit algorithm $\Bcal$, batch size $\tau$, privacy parameter $\eps \in (0, 1]$
    \STATE {\bfseries Initialize:} $j = 1$

   \FOR{$t = 1,\dots,T$}
   \IF{$t = (j-1)\tau + 1$}
   \STATE Receive action $I_j$ from $\Bcal.$
   \ENDIF
   \STATE Play action $I_t := I_j$
   \STATE Observe loss $\ell_t(I_t).$
    \IF{$t = j\tau$}
   \STATE  Define $\hat{\ell}_j(i) := \frac{1}{\tau}\sum_{s = (j-1)\tau  + 1}^{j\tau} \ell_s(i)$
   \STATE  Pass $\hat{\ell}_j(I_j) + Z_j$ to $\Bcal$,  where $Z_j \sim \operatorname{Lap}(\frac{1}{\tau \eps}).$ 
   \STATE Update $j \leftarrow j + 1.$
   \ENDIF
    \ENDFOR
   
\end{algorithmic}
\end{algorithm}
}

\begin{lemma} [Privacy guarantee] \label{lem:banditconvpriv} For every bandit algorithm $\Bcal$, batch size $\tau \geq 1$, and $\eps \leq 1$, Algorithm \ref{alg:banditconv} is $\eps$-differentially private. 
\end{lemma}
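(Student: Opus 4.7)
The plan is to reduce the privacy of Algorithm \ref{alg:banditconv} to a single application of the Laplace mechanism via a coupling plus post-processing argument. The key observation is that the learner plays the same arm $I_j$ for every round in batch $j$, and only one noisy batched loss $\hat{\ell}_j(I_j)+Z_j$ is ever exposed to $\Bcal$ per batch. Consequently, a change in the loss sequence at a single time point touches at most one input to $\Bcal$, and this is the only information channel through which the output (the sequence of played arms) depends on the neighboring coordinate.

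I will fix neighboring loss sequences $\ell_{1:T}$ and $\ell'_{1:T}$ differing only at some $t' \in [T]$ and let $j^* \in \{1,\dots,\ceil{T/\tau}\}$ be the unique batch containing $t'$, so $(j^*-1)\tau < t' \leq j^*\tau$. I then couple all of $\Bcal$'s internal randomness and every Laplace draw $\{Z_j\}_j$ across the two executions. By induction on $j \leq j^*$, the history passed to $\Bcal$ at the start of batch $j$ is identical in the two executions, since batches $1,\dots,j-1$ depend only on losses $\ell_s$ with $s \leq (j-1)\tau \leq (j^*-1)\tau < t'$, which agree. Hence the arm $I_j$ produced by $\Bcal$ is identical; in particular $I_{j^*}$ is the same random variable in both executions.

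With $I_{j^*}$ pinned down under the coupling, the batched loss $\hat{\ell}_{j^*}(I_{j^*}) = \frac{1}{\tau}\sum_{s=(j^*-1)\tau+1}^{j^*\tau} \ell_s(I_{j^*})$ has sensitivity at most $1/\tau$ with respect to the swap from $\ell$ to $\ell'$, since only the $s=t'$ summand changes and losses lie in $[0,1]$. Thus adding $Z_{j^*} \sim \operatorname{Lap}(1/(\tau\eps))$ is a direct application of the Laplace mechanism and makes the noisy batched loss $\tilde{\ell}_{j^*} := \hat{\ell}_{j^*}(I_{j^*}) + Z_{j^*}$ an $\eps$-differentially private release. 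For batches $j > j^*$, the underlying losses $\ell_s$ are unchanged between the two sequences, so the only channel by which the two executions can diverge after batch $j^*$ is the single scalar $\tilde{\ell}_{j^*}$ fed to $\Bcal$. Conditional on the coupled randomness $(R,\{Z_j\}_{j\neq j^*})$, the entire realized action sequence $(I_1,\dots,I_T)$ is a deterministic function of $\tilde{\ell}_{j^*}$; marginalizing over the independent auxiliary randomness, it is a post-processing of $\tilde{\ell}_{j^*}$, and $\eps$-differential privacy is preserved.

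The main subtlety, and the one step worth being careful about, is verifying that $I_{j^*}$ is identical under the coupling \emph{before} invoking the sensitivity bound on $\hat{\ell}_{j^*}(I_{j^*})$; otherwise one would have to take a worst-case sensitivity over all arms, which still yields $1/\tau$ but obscures the structure. Everything else is a routine combination of Laplace-mechanism sensitivity and post-processing, and no use of $\eps \leq 1$ is needed for the privacy conclusion (that restriction is only relevant to the utility analysis).
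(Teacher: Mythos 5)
Your proof is correct, and it reaches the same essential insight as the paper's argument — that a single-coordinate change in the loss sequence perturbs exactly one batched average $\hat{\ell}_{j^*}(I_{j^*})$, by at most $1/\tau$, so one invocation of the Laplace mechanism at scale $1/(\tau\eps)$ suffices — but the formalization is genuinely different. The paper decomposes Algorithm \ref{alg:banditconv} into an adaptive composition of mechanisms $M_2,\dots,M_{\lfloor T/\tau\rfloor}$, each releasing a noisy batched loss together with the next arm, shows that every mechanism except the one for batch $j^*$ is $0$-indistinguishable on neighboring inputs while that one is $\eps$-indistinguishable, and concludes by basic composition ($0+\cdots+0+\eps+0+\cdots+0=\eps$). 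You instead couple all auxiliary randomness (the seed realizing $\Bcal$'s sampling and the Laplace draws $Z_j$ for $j\neq j^*$) across the two executions, observe that the coupled runs agree deterministically through the selection of $I_{j^*}$, reduce the problem to the $\eps$-DP release of the single scalar $\tilde{\ell}_{j^*}$, and treat everything downstream as dataset-independent post-processing before marginalizing over the coupling; this avoids composition theorems entirely at the cost of needing the (standard) representation of a randomized algorithm as a deterministic function of an independent random seed, which the paper itself uses elsewhere (in the proof of Lemma \ref{lem:lb}). Your observation that $I_{j^*}$ is pinned down before the sensitivity bound is applied is the right subtlety to flag, and you are also correct that the hypothesis $\eps\leq 1$ plays no role in the privacy claim. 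One cosmetic caveat: the marginalization step (that pure DP conditional on every value of independent, identically distributed auxiliary randomness implies pure DP unconditionally) is used implicitly and deserves one line, since it is the place where the coupling is cashed out.
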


\begin{proof} (sketch of Lemma \ref{lem:banditconvpriv}) Observe that Algorithm \ref{alg:banditconv} applies the bandit algorithm $\Bcal$ on the losses $\hat \ell_1,\dots, \hat \ell_{\floor{\frac{T}{\tau}}}$ in a black box fashion. Accordingly, the privacy guarantee of Algorithm  \ref{alg:banditconv} follows from the privacy guarantee of $\hat \ell_1(I_1),\dots, \hat \ell_{\floor{\frac{T}{\tau}}}(I_{\floor{\frac{T}{\tau}}})$ and post-processing. The privacy of each $\hat \ell_j(I_j)$ follows from the Laplace mechanism. 
\ha{maybe we should mention that basically we're applying an algorithm in a black-box way on $\hat \ell_1,\dots,\hat \ell_K$ and therefore privacy will hold by proving privacy of these + post-processing. Privacy of each $\hat \ell_i$ follows from the Laplace mechanism. \vinod{done}} \end{proof}

A rigorous proof of Lemma \ref{lem:banditconvpriv} is in Appendix \ref{app:banditconv}. 

\begin{lemma} [Utility guarantee] \label{lem:banditconvutil} For every bandit algorithm $\Bcal$, batch size $\tau \geq 1$, and $\eps \leq 1$, the worst-case expected regret of Algorithm \ref{alg:banditconv} is at most $\tau \tilde{\operatorname{R}}_{\Bcal}(\frac{T}{\tau}, K, \frac{1}{\eps \tau}) + \tau$.
\end{lemma}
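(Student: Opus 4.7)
The plan is to decompose the regret of Algorithm \ref{alg:banditconv} into the regret of $\Bcal$ on the batched, noisy loss sequence plus an unavoidable additive error from the at-most-$\tau$ leftover rounds in the final, possibly incomplete, batch.

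Let $M := \floor{T/\tau}$ denote the number of complete batches and, for each $j \in [M]$, let $\hat\ell_j(i) := \frac{1}{\tau}\sum_{s=(j-1)\tau+1}^{j\tau} \ell_s(i)$. Since each $\ell_s(\cdot) \in [0,1]$, the batched sequence $\hat\ell_{1:M}$ is a valid loss sequence in $[0,1]^K$ of length $M$. On the rounds inside batch $j$, Algorithm \ref{alg:banditconv} plays a single action $I_j$ drawn from $\Bcal$, so
\[
\sum_{t=1}^T \ell_t(I_t) \;=\; \tau \sum_{j=1}^{M} \hat\ell_j(I_j) \;+\; \sum_{t=M\tau+1}^T \ell_t(I_{M+1}),
\]
where the trailing sum has at most $\tau$ terms each in $[0,1]$ and is therefore bounded above by $\tau$. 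For the comparator, non-negativity of the losses gives $\inf_{i \in [K]} \sum_{t=1}^T \ell_t(i) \geq \inf_{i \in [K]} \tau \sum_{j=1}^M \hat\ell_j(i)$. Combining these two bounds yields
\[
\operatorname{R}_{\Acal_\tau}(T,K) \;\leq\; \tau \left(\mathbb{E}\left[\sum_{j=1}^{M} \hat\ell_j(I_j)\right] - \inf_{i \in [K]} \sum_{j=1}^M \hat\ell_j(i)\right) + \tau.
\]

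Next, I would identify the parenthesized quantity with the quantity bounded by $\tilde{\operatorname{R}}_{\Bcal}$. Note that in Algorithm \ref{alg:banditconv} the algorithm $\Bcal$ observes precisely $\hat\ell_j(I_j) + Z_j$ with $Z_j \sim \operatorname{Lap}(1/(\tau\eps))$, so, from $\Bcal$'s perspective, this is indistinguishable from operating on the fictitious loss sequence $\tilde\ell_j(i) := \hat\ell_j(i) + Z_j(i)$ with independent $Z_j(i) \sim \operatorname{Lap}(1/(\tau\eps))$, since only the noisy loss of the chosen arm is ever revealed. Hence, taking the sup over the (adversarially chosen) batched loss sequence, and using the second equivalent formulation of $\tilde{\operatorname{R}}_{\Bcal}$,
\[
\mathbb{E}\left[\sum_{j=1}^{M} \hat\ell_j(I_j)\right] - \inf_{i \in [K]} \sum_{j=1}^M \hat\ell_j(i) \;\leq\; \tilde{\operatorname{R}}_{\Bcal}\!\left(M,\, K,\, \tfrac{1}{\eps\tau}\right) \;\leq\; \tilde{\operatorname{R}}_{\Bcal}\!\left(\tfrac{T}{\tau},\, K,\, \tfrac{1}{\eps\tau}\right),
\]
where the last inequality uses monotonicity of $\tilde{\operatorname{R}}_{\Bcal}$ in its first argument (justifiable by padding with all-zero losses, which $\Bcal$ can only benefit from). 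Substituting back into the displayed bound finishes the proof.

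The only minor subtlety I anticipate is lining up the definitions: the algorithm adds a single scalar $Z_j$ to the observed batched loss of the played arm, whereas $\tilde{\operatorname{R}}_{\Bcal}$ is phrased in terms of independent per-arm noise $Z_j(i)$. The reduction above handles this by observing that the two models induce identical distributions over what $\Bcal$ actually sees; everything else is bookkeeping for the leftover $\leq \tau$ rounds, which contributes the additive $+\tau$ term.
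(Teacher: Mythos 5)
Your proposal is correct and follows essentially the same route as the paper's own proof: decompose the learner's loss over the $\lfloor T/\tau\rfloor$ complete batches, identify the batch-level regret with $\tilde{\operatorname{R}}_{\Bcal}(\lfloor T/\tau\rfloor, K, \tfrac{1}{\eps\tau})$, multiply by $\tau$, and absorb the leftover rounds into the additive $+\tau$. Your explicit remark that the single scalar $Z_j$ added to the observed arm's loss induces the same view for $\Bcal$ as the per-arm noise model in the definition of $\tilde{\operatorname{R}}_{\Bcal}$ is a detail the paper passes over silently, and it is handled correctly.
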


The proof of Lemma \ref{lem:banditconvutil} follows from the following result by \citet{arora2012online}. 

\begin{theorem}[Theorem 2 in \cite{arora2012online}] \label{thm:banditbatch} Let $\Bcal$ be any bandit algorithm. Let $\tau \geq 1$ be a batch size and let $\Acal_{\tau}$ be the batched version of $\Bcal$. That is, the bandit algorithm $\Acal_{\tau}$ groups the  rounds $1, \dots, T$ into consecutive and disjoint batches of size $\tau$ such that the $j$'th batch begins on round $(j-1)\tau + 1$ and ends on round $j \tau$. At the start of each batch $j$ the algorithm $\Acal_{\tau}$ calls $\Bcal$ and receives an action $I_j$ drawn from  $\Bcal$'s internal distribution. Then, $\Acal_{\tau}$ plays this action for $\tau$ rounds. At the end of the batch, $\Acal_{\tau}$ feeds $\Bcal$ with the average loss value $\frac{1}{\tau} \sum_{s = (j-1)\tau + 1}^{j \tau} \ell_s(I_j)$. For such an algorithm $\Acal_{\tau}$, its worst-case expected regret is at most $\tau \operatorname{R}_{\Bcal}\left(\frac{T}{\tau}, K\right) + \tau.$
\end{theorem}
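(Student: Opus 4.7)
The plan is to reduce the $T$-round batched game played by $\Acal_{\tau}$ to a standard bandit game played by $\Bcal$ over $m := \lfloor T/\tau \rfloor$ rounds against an \emph{averaged} loss sequence, and then translate the resulting bound back to the original losses. First, I would isolate the tail: if $T$ is not a multiple of $\tau$, the trailing partial batch contains at most $\tau - 1$ rounds, and since per-round losses lie in $[0,1]$, this contributes at most $\tau$ to the learner's cumulative loss. Non-negativity of losses also ensures $\inf_i \sum_{t=1}^T \ell_t(i) \geq \inf_i \sum_{t=1}^{m\tau} \ell_t(i)$, so the tail contributes at most $\tau$ to the overall regret; this yields the additive $+\tau$ in the bound.

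Next, I would define the averaged loss sequence $\hat{\ell}_j(i) := \frac{1}{\tau}\sum_{s=(j-1)\tau+1}^{j\tau} \ell_s(i)$ for $j \in [m]$, which again takes values in $[0,1]$. The key observation is that, by the construction of $\Acal_{\tau}$, the black box $\Bcal$ sees \emph{exactly} the sequence of pairs $(I_j, \hat{\ell}_j(I_j))_{j=1}^m$. That is, from $\Bcal$'s point of view, it is playing an ordinary oblivious adversarial bandit game against the $[0,1]$-valued loss sequence $\hat{\ell}_{1:m}$, so its regret guarantee applies directly: $\mathbb{E}[\sum_{j=1}^m \hat{\ell}_j(I_j)] - \inf_i \sum_{j=1}^m \hat{\ell}_j(i) \leq \operatorname{R}_{\Bcal}(m, K)$.

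Finally, I would translate this simulated-game bound into one on the original regret. Because $\Acal_{\tau}$ plays the same action $I_j$ for each of the $\tau$ rounds within batch $j$, its cumulative loss over the first $m\tau$ rounds is exactly $\tau \sum_{j=1}^m \hat{\ell}_j(I_j)$, and analogously $\sum_{t=1}^{m\tau} \ell_t(i) = \tau \sum_{j=1}^m \hat{\ell}_j(i)$ for every fixed $i \in [K]$, so $\inf_i \sum_{t=1}^{m\tau} \ell_t(i) = \tau \inf_i \sum_{j=1}^m \hat{\ell}_j(i)$. Combining these identities with the tail bound from the first step yields
\[
\operatorname{R}_{\Acal_{\tau}}(T, K) \;\leq\; \tau\left(\mathbb{E}\left[\sum_{j=1}^m \hat{\ell}_j(I_j)\right] - \inf_{i \in [K]} \sum_{j=1}^m \hat{\ell}_j(i)\right) + \tau \;\leq\; \tau\, \operatorname{R}_{\Bcal}(T/\tau, K) + \tau,
\]
where the last step uses $m \leq T/\tau$ and monotonicity of the regret bound in the horizon.

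There is no serious obstacle here; the proof is a clean bookkeeping exercise once one notices that the averaged losses still lie in $[0,1]$ so that $\Bcal$'s regret guarantee is unchanged. The only mild subtlety is that the definition of regret compares $\Acal_{\tau}$ against the best fixed action over all $T$ rounds while the reduction controls only the first $m\tau$ rounds, and non-negativity of losses bridges this gap. Notably, the same accounting does \emph{not} go through verbatim for signed losses, which is precisely why Lemma \ref{lem:banditconvutil} must be phrased in terms of $\tilde{\operatorname{R}}_{\Bcal}$ (with the Laplace noise folded into the comparator) rather than $\operatorname{R}_{\Bcal}$.
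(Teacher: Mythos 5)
Your proof is correct and follows essentially the same route the paper takes in its proof of Lemma \ref{lem:banditconvutil} (the noisy analogue of this cited theorem): define the batch-averaged losses $\hat{\ell}_j$, observe that $\Bcal$ plays a standard $\lfloor T/\tau\rfloor$-round game against them, multiply the resulting regret bound by $\tau$ using the fact that $I_s = I_j$ within each batch, and absorb the trailing partial batch into the additive $+\tau$. Your closing remark about why signed losses force the $\tilde{\operatorname{R}}_{\Bcal}$ formulation is also consistent with how the paper deploys this result.
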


Note that Algorithm \ref{alg:banditconv} is precisely the batched version of its input $\Bcal$. Accordingly, Theorem \ref{thm:banditbatch} immediately implies that on any sequence $\ell_{1:T}$, the expected regret of Algorithm \ref{alg:banditconv} is at most $\tau \tilde{\operatorname{R}}_{\Bcal}(\frac{T}{\tau}, K, \frac{1}{\eps \tau}) + \tau$.  We provide a complete proof of Lemma \ref{lem:banditconvutil} in Appendix \ref{app:banditconv}. 

\section{Upper bounds for Bandits with Expert Advice} \label{sec:ctxbandits}

Theorem \ref{thm:banditconv} also allows us to give guarantees for bandits with expert advice. To do so, we need Theorem \ref{thm:bandit2ctxbandt},  due to \citet{auer2002nonstochastic}, which shows that any bandit algorithm can be converted into a bandit with expert advice algorithm in a black-box fashion. For completeness, we provide this conversion and the proof of Theorem \ref{thm:bandit2ctxbandt} in Appendix \ref{app:ctxbandit}.

\begin{theorem}[Bandit to Bandit with Expert Advice] \label{thm:bandit2ctxbandt} Let $\Bcal$ be any bandit algorithm and $\operatorname{R}_{\Bcal}(T, K)$ denote its worst-case expected regret. Then, the worst-case expected regret of Algorithm \ref{alg:ctxbanditconverson} when initialized with $\Bcal$ is at most $\operatorname{R}_{\Bcal}(T, N).$
\end{theorem}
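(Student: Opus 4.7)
The plan is to invoke the classical reduction of Auer et al.\ in which each of the $N$ experts is treated as an arm in a bandit instance with action space $[N]$. Specifically, Algorithm \ref{alg:ctxbanditconverson} runs $\Bcal$ on $N$ arms: in round $t$, it queries $\Bcal$ for an arm $J_t \in [N]$, draws the action $I_t \sim \mu_t^{J_t}$ from the chosen expert's advice, observes $\ell_t(I_t)$, and feeds this observation back to $\Bcal$ as the loss of arm $J_t$. The learner's overall distribution over $[K]$ is then $\hat \mu_t = \sum_{j=1}^N p_t(j)\mu_t^j$, where $p_t$ is $\Bcal$'s internal distribution over arms.

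The heart of the analysis is a coupling argument. For each $t \in [T]$ and $j \in [N]$, imagine drawing an independent auxiliary action $I_t^j \sim \mu_t^j$ and defining the random loss function $\hat\ell_t(j) := \ell_t(I_t^j) \in [0,1]$. Consider the fictitious oblivious bandit instance over $N$ arms in which the loss sequence presented to $\Bcal$ is exactly $\hat\ell_{1:T}$. When $\Bcal$ plays $J_t$ in this fictitious instance, it observes $\hat\ell_t(J_t) = \ell_t(I_t^{J_t})$; but since the draws $I_t^j$ for $j \neq J_t$ are never read by the algorithm, we may postpone them and simply set $I_t := I_t^{J_t}$ after $J_t$ is revealed. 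This yields the same joint law as Algorithm \ref{alg:ctxbanditconverson}, so $\Bcal$'s internal trajectory and the sequence of arms $J_{1:T}$ have identical distributions in the two settings.

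Now apply the regret guarantee of $\Bcal$ to the fictitious instance, conditional on the realization of $\hat\ell_{1:T} \in [0,1]^N$:
$$
\mathbb{E}_{\Bcal}\!\left[\sum_{t=1}^T \hat\ell_t(J_t) \,\Big|\, \hat\ell_{1:T}\right] - \inf_{j \in [N]} \sum_{t=1}^T \hat\ell_t(j) \;\leq\; \operatorname{R}_{\Bcal}(T, N).
$$
Take expectation over the draws $I_t^j$. On the left-hand side, $\mathbb{E}[\hat\ell_t(J_t)] = \mathbb{E}[\ell_t(I_t)]$ by construction, while the inequality $\mathbb{E}[\inf_{j} \sum_t \hat\ell_t(j)] \leq \inf_{j}\mathbb{E}[\sum_t \hat\ell_t(j)] = \inf_{j}\sum_{t,i} \mu_t^{j}(i)\ell_t(i)$ follows by picking a fixed expert inside the infimum and using $\mathbb{E}[\hat\ell_t(j)] = \sum_i \mu_t^j(i)\ell_t(i)$. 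Combining these gives
$$
\mathbb{E}\!\left[\sum_{t=1}^T \ell_t(I_t)\right] - \inf_{j \in [N]} \sum_{t=1}^T \sum_{i=1}^K \mu_t^j(i)\,\ell_t(i) \;\leq\; \operatorname{R}_{\Bcal}(T, N),
$$
which is the claimed bound after taking $\sup$ over $\ell_{1:T}$ and $\mu^{1:N}_{1:T}$.

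The only delicate step is the coupling: one must argue that deferring the draws $I_t^j$ for $j \neq J_t$ is legitimate, i.e.\ that $\Bcal$'s view in the real algorithm is distributionally identical to its view in the fictitious oblivious bandit instance. Once this is pinned down, the rest is a direct application of $\Bcal$'s worst-case regret guarantee followed by exchanging an expectation with an infimum, which is the main obstacle only in the sense that it requires care about measurability and conditioning but is otherwise routine.
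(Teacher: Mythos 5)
Your proposal is correct and follows essentially the same route as the paper: condition on the realized draws $I_t^j \sim \mu_t^j$ (which the paper's Algorithm \ref{alg:ctxbanditconverson} in fact performs explicitly for all $j$ at every round, so the coupling you worry about is immediate), apply $\Bcal$'s worst-case regret guarantee to the resulting oblivious $N$-armed instance, and then take an outer expectation, handling the comparator by bounding $\mathbb{E}[\inf_j \cdot] \leq \inf_j \mathbb{E}[\cdot]$ — which the paper does equivalently by fixing the optimal-in-expectation expert $j^\star$ before invoking the regret bound.
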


By treating each expert as a meta-action, Theorem \ref{thm:banditconv} and Theorem \ref{thm:bandit2ctxbandt} can be used to convert a non-private bandit algorithm $\Bcal$ into a private bandit with expert advice algorithm $\Acal$ in the following way: given a non-private bandit algorithm $\Bcal$, use Theorem \ref{thm:banditconv} to convert it into a private bandit algorithm $\Bcal^{\prime}$. Then, use Theorem \ref{thm:bandit2ctxbandt} to convert $\Bcal^{\prime}$ into a private bandit with expert advice algorithm $\Acal$. By post-processing, the corresponding actions played by $\Acal$ are also private. In fact, this conversion also satisfies a stronger notion of privacy where the expert advice is also taken to be sensitive information. Theorem  \ref{thm:ctxbanditconversion} formalizes this conversion. 

\begin{theorem}[Generic Conversion] \label{thm:ctxbanditconversion}
 Let $\Bcal$ be any bandit algorithm. Then, for every $\tau \geq 1$ and $\eps \leq 1$, there exists an $\eps$-differentially private bandit with expert advice algorithm $\Acal_{\tau}$ such that $$\operatorname{R}_{\Acal_{\tau}}(T, K, N) \leq \tau \tilde{\operatorname{R}}_{\Bcal}\left(\frac{T}{\tau}, N, \frac{1}{\eps \tau}\right) + \tau.$$ In particular, by setting $\tau = \ceil{\frac{1}{\eps}}$, there exists an $\eps$-differentially private bandit with expert advice algorithm $\Acal$ such that $$\operatorname{R}_{\Acal}(T, K, N) \leq \frac{2}{\eps} \tilde{\operatorname{R}}_{\Bcal}\left(\eps T, N, 1\right) + \frac{2}{\eps}.$$
\end{theorem}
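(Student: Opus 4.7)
The plan is to stitch together the two black-box reductions already established: Theorem \ref{thm:banditconv} to inject $\eps$-differential privacy into a non-private bandit algorithm, and Theorem \ref{thm:bandit2ctxbandt} to lift a bandit algorithm into the expert-advice setting by treating each of the $N$ experts as a meta-action. Concretely, given $\Bcal$, I would first regard it as a bandit over $N$ arms and plug it into Algorithm \ref{alg:banditconv} with batch size $\tau$ and privacy parameter $\eps$. Theorem \ref{thm:banditconv} then produces an $\eps$-differentially private bandit algorithm $\Bcal^{\prime}$ on $N$ arms whose worst-case expected regret is at most $\tau\, \tilde{\operatorname{R}}_{\Bcal}(T/\tau, N, 1/(\eps\tau)) + \tau$.

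Next I would apply the expert-advice reduction of Theorem \ref{thm:bandit2ctxbandt} (Algorithm \ref{alg:ctxbanditconverson}) to $\Bcal^{\prime}$, calling the resulting bandit-with-expert-advice algorithm $\Acal_{\tau}$. Theorem \ref{thm:bandit2ctxbandt} immediately gives
\[
\operatorname{R}_{\Acal_{\tau}}(T, K, N) \;\leq\; \operatorname{R}_{\Bcal^{\prime}}(T, N) \;\leq\; \tau\, \tilde{\operatorname{R}}_{\Bcal}\left(\frac{T}{\tau},\, N,\, \frac{1}{\eps \tau}\right) + \tau,
\]
which is the first displayed inequality. The specialization to $\tau = \ceil{\frac{1}{\eps}}$ follows exactly as in Theorem \ref{thm:banditconv}: for $\eps \leq 1$ one has $\tau \leq 2/\eps$ and $1/(\eps\tau) \leq 1$, so the bound becomes $\frac{2}{\eps}\, \tilde{\operatorname{R}}_{\Bcal}(\eps T, N, 1) + \frac{2}{\eps}$.

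For privacy, I would argue by post-processing. Inspecting Algorithm \ref{alg:ctxbanditconverson}, the expert-advice reduction draws a meta-action $J_t$ from the internal distribution of $\Bcal^{\prime}$, samples the played action $I_t$ from the (public) expert advice $\mu_t^{J_t}$, and returns to $\Bcal^{\prime}$ only the scalar $\ell_t(I_t)$ as the observed loss of meta-action $J_t$. Conditioning on the public advice $\mu_{1:T}^{1:N}$, the full transcript of $\Acal_{\tau}$ therefore factors through $\Bcal^{\prime}$'s transcript on an $N$-arm bandit instance, with all extra randomness independent of the loss sequence. A single-point change in $\ell_{1:T}$ changes the input sequence seen by $\Bcal^{\prime}$ in at most one round, so the $\eps$-DP guarantee of $\Bcal^{\prime}$ transfers, and the sequence $(I_1, \dots, I_T)$ is an $\eps$-DP post-processing of $(J_1, \dots, J_T)$.

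The main obstacle I expect is this privacy bookkeeping: one needs to pin down that neighboring loss sequences in the expert-advice game correspond to neighboring loss sequences for $\Bcal^{\prime}$ in the $N$-arm reduction (which requires treating the expert advice as public, as made explicit just after Definition \ref{def:ctxdp}), and that the randomness used to sample $I_t \sim \mu_t^{J_t}$ can be cleanly isolated as post-processing. Once this correspondence is spelled out, both the regret bound and the privacy guarantee drop out mechanically from Theorem \ref{thm:banditconv}, Theorem \ref{thm:bandit2ctxbandt}, and the post-processing lemma reviewed in Appendix \ref{app:privprop}.
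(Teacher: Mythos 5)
Your proposal is correct and follows essentially the same route as the paper's proof: privatize $\Bcal$ as an $N$-arm bandit via Theorem \ref{thm:banditconv}, lift the result to the expert-advice setting via Theorem \ref{thm:bandit2ctxbandt}, and obtain privacy by noting that (conditioned on the sampled expert actions) neighboring loss sequences induce neighboring inputs to the privatized bandit algorithm, with the played actions being a post-processing of its outputs. The specialization $\tau = \ceil*{1/\eps}$ is handled identically.
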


The proof of Theorem \ref{thm:ctxbanditconversion} is deferred to Appendix \ref{app:ctxbandit} since it closely follows that of Theorem \ref{thm:banditconv}. Using HTINF for $\Bcal$ in Theorem \ref{thm:ctxbanditconversion} gives the following corollary. 

\begin{corollary} [FTRL Conversion] \label{cor:htinfctxbandit} For every  $\eps \in [\frac{1}{T}, 1]$, if $\Bcal$ is \emph{HTINF} with $\alpha = 2$ and $\sigma = \sqrt{6}$, then Theorem \ref{thm:ctxbanditconversion} guarantees the existence of an $\eps$-differentially private algorithm whose worst-case expected regret at most $$O\left(\frac{\sqrt{TN}}{\sqrt{\eps}} + \frac{1}{\eps} \right).$$
\end{corollary}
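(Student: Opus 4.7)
The plan is to invoke Theorem \ref{thm:ctxbanditconversion} with the choice $\tau = \ceil{1/\eps}$ so that the problem reduces to upper bounding $\tilde{\operatorname{R}}_{\Bcal}(\eps T, N, 1)$ when $\Bcal$ is HTINF with $\alpha = 2$. Concretely, once that noisy-loss regret is bounded by $O(\sqrt{N \eps T})$, the theorem immediately yields
\[
\operatorname{R}_{\Acal}(T,K,N) \le \frac{2}{\eps} \cdot O\!\left(\sqrt{N \eps T}\right) + \frac{2}{\eps} = O\!\left(\frac{\sqrt{NT}}{\sqrt{\eps}} + \frac{1}{\eps}\right),
\]
which is exactly the claimed bound.

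The key step is to verify that HTINF applies to the perturbed loss sequence $\tilde{\ell}_t(i) = \ell_t(i) + Z_t(i)$ with $Z_t(i) \sim \operatorname{Lap}(1)$. Since $\ell_t(i) \in [0,1]$ and $\operatorname{Var}(Z_t(i)) = 2$, a short calculation using $(a+b)^2 \le 2a^2 + 2b^2$ gives
\[
\expect\!\left[\tilde{\ell}_t(i)^2\right] \le 2\expect[\ell_t(i)^2] + 2\expect[Z_t(i)^2] \le 2 + 4 = 6,
\]
so the second-moment bound required by HTINF with $\alpha = 2$ holds with $\sigma = \sqrt{6}$. Crucially, HTINF is designed to accept signed, possibly unbounded losses as long as they have a bounded second moment, and it is unbiased in the sense that $\expect[\tilde{\ell}_t(i) \mid \ell_t] = \ell_t(i)$; this is exactly the feature exploited by the general reduction of Section \ref{sec:bandits}.

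Invoking the regret guarantee of HTINF (\citet{huang2022adaptive}, with $N$ arms on the horizon $\eps T$) then gives $\tilde{\operatorname{R}}_{\Bcal}(\eps T, N, 1) = O(\sigma \sqrt{N \eps T}) = O(\sqrt{N \eps T})$. Plugging this into the bound from Theorem \ref{thm:ctxbanditconversion} produces the stated regret. The main (and really only) subtlety is making sure that the moment-type hypothesis of HTINF matches what we get after adding Laplace noise; once that matching is done, everything else is a direct application of already-established results.
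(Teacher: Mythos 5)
Your overall route is the same as the paper's: the paper proves the bandit analogue (Corollary \ref{cor:htinfcor}) in Appendix \ref{app:banditconvcor} by bounding $\tilde{\operatorname{R}}_{\operatorname{HTINF}}(T,K,\lambda) \leq 30\sqrt{(2+4\lambda^2)K(T+1)}$ via Theorem 4.1 of \citet{huang2022adaptive} and then plugging $\lambda = 1$, horizon $\eps T$, into the generic conversion; the expert-advice corollary follows by replacing $K$ with $N$ through Theorem \ref{thm:ctxbanditconversion}. Your second-moment calculation ($\expect[\tilde{\ell}_t(i)^2] \le 2 + 4 = 6$, hence $(2,\sqrt{6})$-heavy-tailed) matches the paper's exactly.

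There is, however, one genuine gap. You assert that HTINF ``is designed to accept signed, possibly unbounded losses as long as they have a bounded second moment,'' but the guarantee actually invoked (Theorem \ref{thm:htinf}) has \emph{two} hypotheses: the heavy-tail bound \emph{and} that each random loss $\tilde{\ell}_t(i)$ is \emph{truncated non-negative}, meaning $\expect\bigl[\tilde{\ell}_t(i)\,\mathbb{I}\{|\tilde{\ell}_t(i)| > M\}\bigr] \geq 0$ for every $M \geq 0$. Unbiasedness of $\tilde{\ell}_t(i)$ does not imply this; the paper verifies it in a dedicated helper lemma in Appendix \ref{app:helper}, using the symmetry of the Laplace distribution together with $\ell_t(i) \in [0,1]$ to show that the tail contributions cancel favorably. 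Without checking this condition, your application of the HTINF regret bound is incomplete — the condition does hold here, so the result survives, but the verification is a non-trivial step that your argument omits and indeed explicitly (and incorrectly) claims is unnecessary.
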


The upper bound in Corollary \ref{cor:htinfctxbandit} is non-vacuous for constant or small $N$ (i.e. $N \leq K$). However, this bound is vacuous when $N$ grows with $T$. To address this, we consider EXP4 which enjoys expected regret  $O\left(\sqrt{KT \log(N)}\right)$ in the non-private setting,  exhibiting only a poly-logarithmic dependence on $N$ \citep{auer2002nonstochastic}. The following theorem shows that by adding independent Laplace noise to each observed loss, a similar improvement over Corollary \ref{cor:htinfctxbandit} can be established for large $N$, at the cost of a worse dependence on $\eps$. 

\begin{theorem}[Locally Private EXP4] \label{thm:exp4localdp}
 For every $\eps \leq 1$, Algorithm \ref{alg:localdpexp4} when run with $\eta = \sqrt{\frac{\log(N)}{3TK\left(1 + \frac{10\log^2(KT)}{\eps^2} \right)}}$ and $\gamma  = \frac{4 \eta K\log(KT)}{\eps}$ is $\eps$-differentially private and suffers worst-case expected regret at most 
 $$O\left(\frac{\sqrt{TK\log(N)} \log(KT)}{\eps}\right).$$
\end{theorem}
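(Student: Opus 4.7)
The plan is to analyze a locally private variant of EXP4 in which, at each round $t$, we (i) form the usual EXP4 mixture $q_t = \sum_j w_t^j \mu_t^j / \sum_j w_t^j$ over actions, (ii) play $I_t$ from the exploration-mixed distribution $p_t = (1-\gamma) q_t + (\gamma/K)\onevec$, (iii) observe $\ell_t(I_t)$ and release $\tilde \ell_t = \ell_t(I_t) + Z_t$ with $Z_t \sim \operatorname{Lap}(1/\eps)$, and (iv) use the importance-weighted estimator $\hat \ell_t(i) = \tilde \ell_t \cdot \mathbf{1}\{I_t = i\}/p_t(i)$ in the standard multiplicative-weight update on the $N$ experts. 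Privacy is immediate: changing $\ell_{t'}$ to $\ell'_{t'}$ alters the computation only through the single noisy release $\tilde \ell_{t'}$, which is $\eps$-DP by the Laplace mechanism (the loss has sensitivity $1$); all subsequent actions $I_{t'+1},\dots,I_T$ are then $\eps$-DP by post-processing, and $I_1,\dots,I_{t'-1}$ are independent of $\ell_{t'}$ since the adversary is oblivious, so the whole transcript is $\eps$-DP.

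For the regret, the strategy is truncation plus the classical EXP4 potential argument. Define the event $\mathcal{E} = \{\max_{t \le T} |Z_t| \le C \log(KT)/\eps\}$; a union bound over Laplace tails makes $\prob[\mathcal{E}^c] \le 1/T$ for a suitable absolute constant $C$, so $\mathcal{E}^c$ contributes at most $1$ to the expected regret. On $\mathcal{E}$, the exploration lower bound $p_t(i) \ge \gamma/K$ and $\mu_t^j(i) \le 1$ yield $|\eta \sum_i \mu_t^j(i)\hat \ell_t(i)| \le \eta K (1 + C\log(KT)/\eps)/\gamma$, and the choice $\gamma = 4\eta K \log(KT)/\eps$ forces this quantity to be at most $1$. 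This is exactly the condition required to apply the inequality $\exp(-x) \le 1 - x + x^2$ in the EXP4 weight-update analysis.

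The remaining steps mirror the classical analysis. The noisy estimator is conditionally unbiased, $\expect[\hat \ell_t(i) \mid \Hcal_t] = \ell_t(i)$, and its conditional second moment satisfies $\expect[\hat \ell_t(i)^2 \mid \Hcal_t] \le (1 + 2/\eps^2)/p_t(i)$ using $\var(Z_t) = 2/\eps^2$. Tracking the potential $W_t = \sum_j w_t^j$ and taking expectations in the standard way then yields, on $\mathcal{E}$, an expected regret bounded up to lower-order terms by
\[
\frac{\log N}{\eta} + \eta T K \Bigl(1 + \tfrac{2}{\eps^2}\Bigr) + \gamma T,
\]
where the three terms come from the potential drop, the second-moment estimate $\expect[\sum_i p_t(i) \hat \ell_t(i)^2] \le K(1+2/\eps^2)$, and the uniform-exploration cost. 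Substituting the stated values of $\eta$ and $\gamma$ balances the three summands at the same order and produces the desired $O(\sqrt{TK\log(N)} \log(KT)/\eps)$.

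The main obstacle is the coupled calibration of $C$, $\gamma$, and $\eta$: the truncation threshold $C\log(KT)/\eps$ must be large enough that $\mathcal{E}^c$ contributes negligibly yet small enough that $|\eta \hat \ell_t(i)| \le 1$ holds on $\mathcal{E}$, while $\gamma$ must keep $p_t(i)$ bounded away from $0$ without dominating the regret. Ensuring that the EXP4 potential inequality still goes through when the estimator becomes signed and heavy-tailed (rather than the usual bounded nonnegative one) is the most delicate step, and it is precisely the reason why the $\log(KT)/\eps$ factor appears in the parameter choices.
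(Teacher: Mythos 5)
Your proposal is correct and follows essentially the same route as the paper's proof: a high-probability truncation of the Laplace noise (contributing $O(1)$ regret on the bad event), the choice $\gamma = 4\eta K\log(KT)/\eps$ to guarantee the boundedness condition needed for the multiplicative-weights potential inequality, conditional unbiasedness of the noisy importance-weighted estimator, a second-moment bound, and substitution of the stated parameters; the privacy argument via the Laplace mechanism plus post-processing is likewise identical. The only cosmetic difference is that you add a single noise variable to the observed scalar loss while the paper's algorithm formally perturbs every coordinate (only the played one is ever used), which changes nothing in the analysis.
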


Due to space constraints, we defer Algorithm \ref{alg:localdpexp4}, which just adds independent Laplace noise to each observed loss, to Appendix \ref{app:ctxbandit}. Note that when $N \leq K$, the upper bound in Corollary \ref{cor:htinfctxbandit} is still superior to that of Theorem \ref{thm:exp4localdp} for all ranges of $\eps \leq 1.$ The proof of Theorem \ref{thm:exp4localdp} is also deferred to Appendix \ref{app:ctxbandit}. 

Algorithm \ref{alg:localdpexp4} provides a stronger privacy guarantee than what is actually necessary. Indeed, by adding independent Laplace noise to each observed loss, Algorithm \ref{alg:localdpexp4} actually satisfies $\eps$-\emph{local} differential privacy (see Appendix \ref{app:privprop} for definition). Accordingly, in contrast to Corollary \ref{cor:htinfcor}, the upper bound in Theorem \ref{thm:exp4localdp} is vacuous for $\eps \leq \frac{1}{\sqrt{T}}.$ The following algorithm uses the batching technique from Section \ref{sec:bandits} to improve the dependence in $\eps$ from Theorem \ref{thm:exp4localdp} while also improving the dependence on $N$ from Corollary \ref{cor:htinfctxbandit}.

\ifbool{arxiv}{
\begin{algorithm}
\setcounter{AlgoLine}{0}
\caption{Private, Batched EXP4} \label{alg:batchedexp4}
\KwIn{Action space $[K]$, Number of experts $N$, batch size $\tau$, privacy parameters $\eps, \delta > 0$, learning rate $\eta$, mixing parameter $\gamma > 0$}
\textbf{Initialize}: $r = 1$, $w_{1}(j) = 1$ for all $j \in [N]$

\For{$t = 1,\dots,T$} {

    Receive expert advice $\mu^{1}_{t}, \dots, \mu^{N}_{t} \in \Pi([K])$
    
    \uIf{$t = (r-1)\tau + 1$}{

        Set $P_r(j) \leftarrow \frac{w_r(j)}{\sum_{j \in [N]} w_r(j)}$
    }

     Set $Q_t(i) \leftarrow (1-\gamma)\sum_{j=1}^N P_r(j)\mu^j_t(i) + \frac{\gamma}{K}.$

    Draw $I_t \sim Q_t$

    Observe loss $\ell_t(I_t)$ and construct unbiased estimator $\hat{\ell}_t(i) = \frac{\ell_t(i) \mathbb{I}\{I_t = i\}}{Q_t(i)}$

    \uIf{$t = r\tau$}{

         Define $\tilde{\ell}_r(j) := \frac{1}{\tau}\sum_{s = (r-1)\tau  + 1}^{r\tau} \hat{\ell}_s \cdot \mu_s^j$ and $\tilde{\ell}^{\prime}_r(j) := \tilde{\ell}_r(j) + Z^{j}_r$ where 
         $$Z_r^j \sim \text{Lap}\Biggl(0, \frac{3 K \sqrt{N\log(\frac{1}{\delta})}}{\gamma \tau \eps}\Biggl).$$ 

        Update $w_{r+1}(j) \leftarrow w_{r}(j) \cdot \exp\{-\eta \tilde{\ell}^{\prime}_r(j)\}$

        Update $r \leftarrow r + 1.$
    }
    
}
\end{algorithm}

}{

\begin{algorithm}[tb]
   \caption{Private, Batched EXP4}
   \label{alg:batchedexp4}
\begin{algorithmic}[1]
   \STATE {\bfseries Input:} Action space $[K]$, Number of experts $N$, batch size $\tau$, privacy parameters $\eps, \delta > 0$, learning rate $\eta$, mixing parameter $\gamma > 0$
    \STATE {\bfseries Initialize:} $r = 1$, $w_{1}(j) = 1$ for all $j \in [N]$

   \FOR{$t = 1,\dots,T$}
   \STATE Receive expert advice $\mu^{1}_{t}, \dots, \mu^{N}_{t} \in \Pi([K])$
   \IF{$t = (r-1)\tau + 1$}
   \STATE Set $P_r(j) \leftarrow \frac{w_r(j)}{\sum_{j \in [N]} w_r(j)}$
   \ENDIF
   \STATE Set $Q_t(i) \leftarrow (1-\gamma)\sum_{j=1}^N P_r(j)\mu^j_t(i) + \frac{\gamma}{K}.$
   \STATE  Draw $I_t \sim Q_t$
   \STATE Observe loss $\ell_t(I_t)$ and construct unbiased estimator $\hat{\ell}_t(i) = \frac{\ell_t(i) \mathbb{I}\{I_t = i\}}{Q_t(i)}$ 
   \IF{$t = r\tau$}
   \STATE  Define $\tilde{\ell}_r(j) := \frac{1}{\tau}\sum_{s = (r-1)\tau  + 1}^{r\tau} \hat{\ell}_s \cdot \mu_s^j$ and $\tilde{\ell}^{\prime}_r(j) := \tilde{\ell}_r(j) + Z^{j}_r$ where 
         $$Z_r^j \sim \text{Lap}\Biggl(0, \frac{3 K \sqrt{N\log(\frac{1}{\delta})}}{\gamma \tau \eps}\Biggl).$$ 
    \STATE Update $w_{r+1}(j) \leftarrow w_{r}(j) \cdot \exp\{-\eta \tilde{\ell}^{\prime}_r(j)\}$
    \STATE Update $r \leftarrow r + 1.$
    \ENDIF
    \ENDFOR
\end{algorithmic}
\end{algorithm}
}

\begin{theorem}[Private, Batched EXP4] \label{thm:batchedexp4} For every $ \eps, \delta \in (0, 1]$, Algorithm \ref{alg:batchedexp4}, when run with $$\eta = \frac{(N\log(\frac{1}{\delta}))^{1/6} \log^{1/3}(NT) \log^{1/3}(N)}{T^{1/3}K^{1/2}\eps^{1/3}},$$ $$\tau = \frac{(N\log(\frac{1}{\delta}))^{1/3}\log^{2/3}(NT)T^{1/3}}{\eps^{2/3}\log^{1/3}(N)},$$ and 
\ifbool{arxiv}{$$\gamma = \max\Biggl\{\frac{\eta^{1/3}N^{1/3}K^{2/3}\log^{2/3}(NT)}{\eps^{2/3} \tau^{2/3}}, \frac{12\eta K \sqrt{N \log(\frac{1}{\delta})} \log(NT)}{\eps \tau}\Biggl\},$$}{
\begin{multline*}
\gamma = \max\Biggl\{\frac{\eta^{1/3}N^{1/3}K^{2/3}\log^{2/3}(NT)}{\eps^{2/3} \tau^{2/3}}, \\ \frac{12\eta K \sqrt{N \log(\frac{1}{\delta})} \log(NT)}{\eps \tau}\Biggl\},
\end{multline*}}
satisfies $(\eps, \delta)$-differentially privacy and suffers worst-case expected regret at most 
\ifbool{arxiv}{$$O\Bigg(\frac{N^{1/6} K^{1/2}  T^{2/3} \cdot \log^{1/6}(\frac{1}{\delta})\log^{1/3}(NT)\log^{1/3}(N) }{\eps^{1/3}}   \\ + \frac{N^{1/2} \cdot \log(\frac{1}{\delta})^{1/2}\log(NT)\log(N)}{\eps}\Bigg).$$}{
\begin{multline*}
O\Bigg(\frac{N^{1/6} K^{1/2}  T^{2/3} \cdot \log^{1/6}(\frac{1}{\delta})\log^{1/3}(NT)\log^{1/3}(N) }{\eps^{1/3}}   \\ + \frac{N^{1/2} \cdot \log(\frac{1}{\delta})^{1/2}\log(NT)\log(N)}{\eps}\Bigg).
\end{multline*}}
\end{theorem}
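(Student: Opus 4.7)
The argument decomposes into a privacy claim and a utility claim. For privacy, I would fix two neighboring loss sequences differing only at time $t^*$ and observe that only the unique batch $r^*$ containing $t^*$ is affected. Within that batch, the vector $\tilde{\ell}_{r^*} \in \reals^N$ has per-coordinate $\ell_\infty$ sensitivity at most $\Delta := K/(\gamma\tau)$, since $|\hat{\ell}_s(i)| \leq K/\gamma$ by the floor $Q_s(i) \geq \gamma/K$ enforced by uniform mixing, and we average over $\tau$ summands. Releasing $\tilde{\ell}^{\prime}_{r^*}$ by adding i.i.d.\ $\operatorname{Lap}(\sigma)$ noise with $\sigma = 3K\sqrt{N\log(1/\delta)}/(\gamma\tau\eps)$ to each of the $N$ coordinates then yields $(\eps, \delta)$-DP via advanced composition of the $N$ pointwise $(\Delta/\sigma)$-DP Laplace mechanisms. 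Since the weights $w_{r+1}$, expert distributions $P_{r+1}$, and sampling distributions $Q_t$ for $t$ in subsequent batches are all deterministic functions of past $\tilde{\ell}^{\prime}_{r'}$'s, post-processing extends $(\eps, \delta)$-DP to the full transcript of played actions.

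For utility, I would first use $Q_t = (1 - \gamma)\sum_j P_r(j)\mu^j_t + \gamma/K$ to peel off a mixing contribution $\gamma T$, reducing the remaining regret to that of the base weights $P_r$ against the best expert $j^*$ on the per-batch aggregated losses $\tilde{L}_r(j) := \frac{1}{\tau}\sum_{t \in \text{batch } r}\mu^j_t \cdot \ell_t$, scaled by $\tau$. Since $\hat{\ell}_t$ is unbiased for $\ell_t$ conditional on the history before round $t$, and each $Z^j_r$ is zero-mean and independent, taking expectations lets me equivalently analyze the multiplicative weights update on the noisy surrogate losses $\tilde{\ell}^{\prime}_r$. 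The standard MW/EXP regret bound yields $\sum_r \sum_j P_r(j)\tilde{\ell}^{\prime}_r(j) - \sum_r \tilde{\ell}^{\prime}_r(j^*) \leq \log(N)/\eta + \eta\sum_r \sum_j P_r(j)\tilde{\ell}^{\prime}_r(j)^2$, provided the boundedness condition $\eta|\tilde{\ell}^{\prime}_r(j)| \leq 1$ holds. The second-moment term splits into a loss-estimator variance contribution bounded by the standard EXP4 calculation $\sum_j P_r(j)\mu^j_t(i)^2 \leq Q_t(i)/(1-\gamma)$, giving $O(K/(1-\gamma))$ per batch, plus a Laplace variance contribution of $2\sigma^2$ per batch. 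Summing over the $T/\tau$ batches and multiplying through by $\tau$ produces a total regret bound of the form $\gamma T + \tau\log(N)/\eta + \eta T K/(1-\gamma) + \eta\sigma^2 T$.

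The main obstacle is justifying the boundedness condition when the Laplace noise is unbounded. I would note that $|\tilde{\ell}_r(j)| \leq K/\gamma$ deterministically, while a union bound over the $NT/\tau$ Laplace draws gives $\max_{r,j}|Z^j_r| = O(\sigma\log(NT))$ with probability at least $1 - 1/(NT)$. The two branches in the definition of $\gamma$ exactly correspond to these two constraints: the second branch $12\eta K\sqrt{N\log(1/\delta)}\log(NT)/(\eps\tau)$ ensures $\eta|Z^j_r| \leq 1/12$ on the good event, and the first branch arises from balancing $\gamma T$ against $\eta\sigma^2 T \asymp \eta K^2 N\log(1/\delta)T/(\gamma^2\tau^2\eps^2)$, which sets $\gamma \asymp \eta^{1/3}N^{1/3}K^{2/3}\log^{2/3}(NT)/(\tau^{2/3}\eps^{2/3})$. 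On the low-probability complementary event I would absorb the contribution into a trivial $O(1)$ regret term. Plugging in the stated $\eta$ and $\tau$ and optimizing the four summands then recovers the claimed $(\eps, \delta)$-DP regret bound.
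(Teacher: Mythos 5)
Your proposal is correct and follows essentially the same route as the paper's proof: privacy via the per-batch Laplace mechanism with sensitivity $K/(\gamma\tau)$, advanced composition over the $N$ coordinates, and post-processing; utility via conditioning on a high-probability bound for the Laplace noise, applying the standard multiplicative-weights bound to the noisy batched importance-weighted losses, and using the two branches of $\gamma$ for exactly the two purposes you identify. The only cosmetic difference is that the paper bounds the conditional second moment of the noise by $O(\lambda^2\log^2(NT))$ on the good event (which is what produces the $\log^{2/3}(NT)$ in the first branch of $\gamma$) rather than by the raw variance $2\sigma^2$ as you write, but this does not change the argument or the final bound.
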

The proof of Theorem \ref{thm:batchedexp4} modifies the standard proof of EXP4 to handle the noisy, batched losses. See Appendix \ref{app:ctxbandit} for the full proof. Compared to Theorem \ref{cor:htinfctxbandit} and \ref{thm:exp4localdp}, Theorem \ref{thm:batchedexp4} shows that Algorithm \ref{alg:batchedexp4} enjoys sublinear regret even when $N \geq T^{1/4}$ and $\eps = \frac{1}{\sqrt{T}}$. Table \ref{tab:regime} summarizes the three regimes and the corresponding algorithm that obtains the best regret bound in that regime. Our upper bounds for bandits with expert advice become vacuous when $\eps \leq \frac{1}{\sqrt{T}}$ and $N \geq T.$ We leave deriving non-vacuous upper bounds for this regime as an interesting direction for future work.

\begin{table*}[t]
\centering
\caption{Summary of the three regimes for bandits with expert advice and the corresponding algorithm that obtains the best regret bound in that regime.}
\vspace{5pt}
\begin{tabular}{|c|c|c|}
\hline
 \textbf{{Regime}} & \textbf{{Best Alg.}} & \textbf{{Guarantee}} \\
\hline
Low-dimension, High Privacy ($N \leq K$) & Cor. \ref{cor:htinfctxbandit} &  $O(\frac{\sqrt{NT}}{\sqrt{\epsilon}})$ \\
\hline
High-dimension, Low Privacy ($N \geq K$, $\epsilon \geq \frac{K}{N}$) & Thm. \ref{thm:exp4localdp} & $O(\frac{\sqrt{KT \log N} \log(KT)}{\epsilon})$ \\
\hline
High-dimension, High Privacy ($N \geq K$, $\epsilon \leq \frac{1}{\sqrt{T}}$) & Thm. \ref{thm:batchedexp4} & $O(\frac{N^{1/6}K^{1/2}T^{2/3}\log(NT)}{\epsilon^{1/3}} + \frac{N^{1/2}\log(NT)}{\epsilon})$ \\
\hline
\end{tabular}
\label{tab:regime}
\end{table*}


\section{Barriers to Even Faster Rates for Private Adversarial Bandits} \label{sec:banditslb}

In this work, we provided new algorithms for the private adversarial bandit problem and its expert advice counterpart. In the adversarial bandits setting, we provided a generic conversion of a non-private bandit algorithm into a private bandit algorithm. Instantiating our conversion with existing bandit algorithms resulted in private bandit algorithms whose worst-case expected regret improve upon all existing work in all privacy regimes. In the bandits with expert advice setting, we provide, to the best of our knowledge, the first private adversarial bandit algorithms by modifying EXP4.

An important direction of future work is answering whether it is possible to achieve an \emph{additive} separation in $\eps$ and $T$. We note that this is possible in the stochastic bandit setting \citep{azize2022privacy} as well as the the full-information adversarial online setting \citep{agarwal2017price}. To this end, we conclude our paper by discussing some road blocks when attempting to derive such guarantees for the adversarial bandit setting.  

\subsection{On the Hardness of Privatizing EXP3}

 First, we comment on the difficulty of privatizing EXP3.  In the full-information setting, a standard privacy analysis for exponential weights shows that for every $t \in [T]$, the per-round privacy loss at time step $t$ is at most $2\eta$, and for $\eta = \frac{\eps}{\sqrt{T}}$  advanced composition yields $(\eps, \delta)$-differential privacy with expected regret $O(\sqrt{T \log(K)}/\eps)$ \citep{dwork2014algorithmic}. 
 
 Unfortunately, it is not easy to bound the per-round privacy loss of EXP3 uniformly across time. This is because EXP3 uses Inverse-Probability-Weighted estimators \citep{robins1994estimation} (see Algorithm \ref{alg:exp3}). Thus the algorithm needs to know not just the arm $I_t$ but also the probability $P_t$ with which it was selected. It is, however, not clear how to account for the privacy cost of releasing $P_t$ and indeed we can construct examples where the per-round privacy loss grows with the time horizon $T$. We provide a more formal analysis of this issue in Appendix \ref{app:exp3lb}.

\subsection{Limits of a Class of Adversarial Bandit Algorithms}

In Section \ref{sec:bandits}, we established upper bounds of $O(\frac{\sqrt{KT}}{\sqrt{\eps}})$ on the expected regret for the private adversarial bandit problem. Unfortunately, by exploiting the ability to pick arbitrary sequences of loss functions, we can show that for a large class of bandit algorithms, including EXP3 and its batched variants, one cannot significantly improve upon this upper bound. Informally, our result holds for any (adaptively) private bandit algorithm that ``quickly'' reduces the probability of playing a sub-optimal arm. We provide the formal details below, starting with some intuition. 

 Consider an instance on two arms where arm $1$ has loss $\frac 1 2$ at each step, while arm $2$ has loss $1$ at each step. Any algorithm that has regret $R$ must play arm $2$ at most $O(R)$ times on this instance. Informally, our lower bound applies to bandit algorithms that drops the probability of playing arm $2$ to be about $\frac{R}{T}$ within about $o(T/R)$ steps. We note that EXP3 drops this probability to $O(\frac R T)$ in $O(\log T)$ steps. For algorithms of this kind, our lower bound shows that any $\eps$-differentially private algorithm (for $\eps < 1$) must incur regret $O(\sqrt{T/\eps})$. Intuitively, the lower bound follows from the fact that if the loss of arm $2$ falls to $0$ at step $\approx T/R$ (while arm $1$ is unchanged at $\frac 1 2$), then an $\eps$-differentially private algorithm must pull arm $2$ at least $\frac 1 \eps$ times to ``notice'' this change. Accounting for the accumulated regret in the time it takes to pull  arm 2 sufficiently many times, and setting parameters appropriately yields the lower bound.

More formally, fix $K = 2$ and $T \in \mathbb{N}$. For $\gamma \in [0, 1]$, $\tau \in \{1, \dots, T\}$ and $p \in [T]$, define the sets 
$$E_{\gamma} :=  \Bigl\{i_{1:T}: \sum_{t = 1}^T \mathbb{I}\{i_t = 2\} \geq \gamma T\Bigl\}$$
and
$$E^p_{\gamma, \tau} :=  \Biggl\{i_{1:T}: \sum_{s = \tau+1}^{\tau+\frac p \gamma} \mathbb{I}\{i_s = 2\} \leq p\Biggl\}.$$
Consider the sequence of loss functions $\ell_{1}, \dots, \ell_T$, such that  $\ell_{1:T}(2) = 1$ and  $\ell_{1:T}(1) = \frac{1}{2}$. Our assumptions on the bandit algorithms are with respect to their behavior on $\ell_{1}, \dots, \ell_T$. In particular, we will consider bandit algorithms $\Acal$ for which there exists  $\gamma \in [0, 1]$, $\tau < \frac{T}{2}$ and  $\gamma \tau \leq p \leq \gamma (T - \tau)$ such that:
\begin{itemize}
\item[(1)] $\mathbb{P}(I_1, \dots, I_T \in E_{\gamma}) \geq \frac{1}{2}$ and
\item[(2)] $\mathbb{P}(I_1, \dots, I_T \in  E^p_{\gamma, \tau}) \geq \frac{1}{2},$
\end{itemize}
%
\noindent where $I_{1:T}$ are the random variables denoting the actions played by $\Acal$ when run on the sequence of loss functions $\ell_{1:T}$. The first condition simply lower bounds the probability that $\Acal$ plays action $2$ by $\gamma$, when $\Acal$ is run on $\ell_{1:T}$. The second condition states that $\Acal$ drops, and subsequently maintains, the probability of playing action $2$ to $\gamma$  in roughly $\tau$ rounds. Accordingly, when $\tau$ is small, condition ($2$) states that $\Acal$ drops the probability of playing action $2$ down to $\gamma$ relatively quickly. One should really think of $\gamma$ as being $O\left(\frac{\operatorname{R}_{\Acal}(\ell_{1:T})}{T}\right)$, where $\operatorname{R}_{\Acal}(\ell_{1:T})$ denotes the expected regret of $\Acal$ when run on $\ell_{1:T}$. Then, condition (1) is trivially satisfied, while condition (2) states that $\Acal$ roughly drops and keeps the probability of playing action $2$ around $O\left(\frac{\operatorname{R}_{\Acal}(\ell_{1:T})}{T}\right)$ by round $\tau$, and keeps it there. In other words, after round $\tau$, $\Acal$ plays action $2$ on average $p$ times in $\frac{p}{\gamma} \approx O(\frac{p T}{\operatorname{R}_{\Acal}(T)})$ rounds. The latter property is reasonable for bandit algorithms given that $\ell_t(2) - \ell_t(1) = \frac{1}{2}$ for all $t \in [T]$ , and thus any low-regret bandit algorithm should not be playing action $2$ often. Lemma \ref{lem:lb} provides a lower bound on the expected regret of private bandit algorithms that satisfy these two conditions. 

\begin{lemma} \label{lem:lb} Let $\eps \leq 1$ and $\Acal$ be any  $\eps$-differentially private algorithm. If $\Acal$ satisfies conditions (1) and (2) with parameters $\gamma \in [0, \frac{1}{2}]$,  $\tau < \frac{T}{2}$ and $2 \gamma \tau \leq p \leq \gamma (T - \tau)$, then the worst-case expected regret of $\Acal$ is at least 
\ifbool{arxiv}{$$\max\Bigg\{\frac{\gamma T}{4}, \left(1 - \frac{1}{2} e^{\eps p}\right)\frac{p}{4\gamma} - \frac{\tau}{2} \Biggl\} \geq \\ \sqrt{\frac{pT (1 - \frac{1}{2} e^{\eps p})}{16}} - \frac{\tau}{2}.$$}{
\begin{multline*}
\max\Bigg\{\frac{\gamma T}{4}, \left(1 - \frac{1}{2} e^{\eps p}\right)\frac{p}{4\gamma} - \frac{\tau}{2} \Biggl\} \geq \\ \sqrt{\frac{pT (1 - \frac{1}{2} e^{\eps p})}{16}} - \frac{\tau}{2}.
\end{multline*}}
\noindent In particular, if $\Acal$ satisfies conditions (1) and (2) with parameters $\gamma \in [0, \frac{1}{2}]$, $\tau \in o\left(\sqrt{\frac{T}{\eps}}\right)$, and  $p = O(\frac{1}{\eps})$, then the worst-case expected regret of $\Acal$ is $\Omega\left(\sqrt{\frac{T}{\eps}}\right).$
\end{lemma}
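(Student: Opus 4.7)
The plan is the classical two-point method. On the baseline sequence $\ell$, condition (1) alone forces regret $\gamma T/4$; on an alternative sequence $\ell'$ that makes arm $2$ optimal after time $\tau$, condition (2) (transferred via a bandit-aware differential-privacy argument) forces regret at least $(1-\tfrac{1}{2}e^{\eps p})\,p/(4\gamma)-\tau/2$. Taking the maximum and applying $\max(a,b-c)\geq\max(a,b)-c\geq\sqrt{ab}-c$ with $a=\gamma T/4$, $b=(1-\tfrac{1}{2}e^{\eps p})p/(4\gamma)$, $c=\tau/2$ then produces the geometric-mean form, and the corollary follows by choosing $p=\Theta(1/\eps)$ small enough that $1-\tfrac{1}{2}e^{\eps p}$ is bounded below by a positive constant and using $\tau=o(\sqrt{T/\eps})$.

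\textbf{First bound.} On $\ell$, arm $1$ is the best fixed arm (total loss $T/2$), so a direct computation gives $\mathbb{E}_\ell[\mathrm{Regret}]=\mathbb{E}_\ell[n_2]/2$, where $n_2=\sum_{t\leq T}\mathbb{I}\{I_t=2\}$. Condition (1) together with Markov's inequality yields $\mathbb{E}_\ell[n_2]\geq\gamma T/2$, so $\mathbb{E}_\ell[\mathrm{Regret}]\geq\gamma T/4$.

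\textbf{Second bound (regret side).} Take $\ell'$ agreeing with $\ell$ on $[1,\tau]$ and setting $\ell'_t(1)=1/2$, $\ell'_t(2)=0$ for $t>\tau$; since $\tau<T/2$, arm $2$ is the best fixed arm under $\ell'$ with total loss $\tau$. Writing $W=(\tau,\tau+p/\gamma]$ and $N_2^W=\sum_{t\in W}\mathbb{I}\{I_t=2\}$, unfolding the regret and using $n_1^{(\tau,T]}\geq p/\gamma-N_2^W$ yields $\mathbb{E}_{\ell'}[\mathrm{Regret}]\geq\tfrac{1}{2}\mathbb{E}_{\ell'}[p/\gamma-N_2^W]-\tau/2$. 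Splitting $\mathbb{E}_{\ell'}[N_2^W]$ on the event $E=\{N_2^W\leq p\}$ and its complement (and using the deterministic bound $N_2^W\leq p/\gamma$) gives $p/\gamma-\mathbb{E}_{\ell'}[N_2^W]\geq(p(1-\gamma)/\gamma)\,\mathbb{P}_{\ell'}[E]$, and invoking $\gamma\leq 1/2$ (so $(1-\gamma)/\gamma\geq 1/(2\gamma)$) simplifies this to $\mathbb{E}_{\ell'}[\mathrm{Regret}]\geq(p/(4\gamma))\,\mathbb{P}_{\ell'}[E]-\tau/2$.

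\textbf{Privacy transfer (the main obstacle).} The core technical step is to show $\mathbb{P}_{\ell'}[E^c]\leq\tfrac{1}{2}e^{\eps p}$. Since $\ell$ and $\ell'$ differ at $T-\tau$ positions, naive group privacy would give the vacuous factor $e^{\eps(T-\tau)}$; the fix is to exploit bandit feedback, namely that at every round where $\Acal$ plays arm $1$ the observed loss equals $1/2$ under both $\ell$ and $\ell'$, so only arm-$2$ plays at differing positions contribute to the effective privacy cost. Let $\sigma$ denote the round of the $(p+1)$-th arm-$2$ play in $W$, so that $E^c=\{\sigma\leq\tau+p/\gamma\}$ and on $E^c$ the positions of the first $p$ arm-$2$ plays in $W$ form a random set $S\subseteq(\tau,\sigma-1]$ of size exactly $p$. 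For each fixed $S$ and $t$, define an intermediate sequence $\tilde\ell^S$ agreeing with $\ell$ on $[1,\tau]\cup S$ and with $\ell'$ everywhere else. On the joint event ``$\sigma=t$ and arm-$2$ plays in $W$ before $t$ are exactly $S$'', the observations produced by $\Acal$ at every round up to $t$ are identical under $\ell$ and $\tilde\ell^S$ (same arm-$1$ observations $1/2$, and arm-$2$ observations $1$ at positions in $S$ under both), so $\Acal$'s trajectory up to $t$ has the same distribution under the two sequences, and the probability of this joint event is equal under $\ell$ and $\tilde\ell^S$. Standard group privacy between $\tilde\ell^S$ and $\ell'$, which differ at only $|S|=p$ positions, then gives $\mathbb{P}_{\ell'}[\text{joint event}]\leq e^{\eps p}\,\mathbb{P}_{\tilde\ell^S}[\text{joint event}]=e^{\eps p}\,\mathbb{P}_\ell[\text{joint event}]$; summing over $(S,t)$ yields $\mathbb{P}_{\ell'}[E^c]\leq e^{\eps p}\,\mathbb{P}_\ell[E^c]\leq e^{\eps p}/2$, as required. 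The essential difficulty, and the reason the argument is nontrivial, is that neither naive group privacy nor a standard hybrid suffices: one must couple trajectories under $\ell$ and $\ell'$ and use a stopping-time-plus-realization construction to pin the effective number of ``observed'' differing positions at exactly $p$, converting condition (2)'s deterministic-style bound on arm-$2$ plays into a sharp privacy guarantee.
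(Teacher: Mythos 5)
Your proof is correct and follows essentially the same route as the paper's: the same pair of instances $\ell$ and $\ell'$, the same regret accounting on each (giving $\gamma T/4$ and $(1-\tfrac12 e^{\eps p})\tfrac{p}{4\gamma}-\tfrac\tau2$), and the same key device of interpolating through hybrid sequences that differ from one endpoint only at the $p$ positions where arm $2$ is actually pulled, so that bandit feedback makes the hybrid's trajectory coincide with that endpoint's while group privacy over $p$ positions bridges to the other. The only (cosmetic) difference is the orientation of the hybrid --- yours agrees with $\ell$ on $S$ and invokes group privacy against $\ell'$, whereas the paper's $\ell^S$ agrees with $\ell'$ on $S$ and is compared to $\ell$ --- and your explicit disjoint decomposition over the pairs $(S,\sigma)$ is, if anything, a slightly cleaner way to handle the data-dependence of $S$ than the paper's supremum step.
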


Lemma \ref{lem:lb} shows that if one wants to design an  $\eps$-differentially private algorithm (for $\eps \leq 1$) whose upper bound enjoys an additive separation between $T$ and $\eps$, then there cannot exist a $\gamma \in [0, \frac{1}{2}]$ such that it satisfies conditions (1) and (2) with $\tau \in o\left(\sqrt{\frac{T}{\eps}}\right)$, and $p \leq \gamma (T - \tau)$. Unfortunately, an implication of Lemma \ref{lem:lb} is that such an additive separation is not possible for EXP3 and its batched variants. Indeed, batched EXP3 on the loss sequence $\ell_{1:T}$, where $\ell_t(1) = 1/2$ and $\ell_t(2) = 1$, has regret at least $\gamma T + \kappa/\eta$, where $\kappa$ is the batch size. Assume, towards a contradiction that for some setting of parameters, this was $o(\sqrt{T/\eps})$. This implies that $\gamma$ is $o(1/\sqrt{\eps T})$, and that $\kappa/\eta$ is $o(\sqrt{T/\eps})$. EXP3 drops the probability of a bad arm to $O(\gamma)$ in $\log (1/\gamma)/\eta$ steps, and the batched version will do so in $\kappa\log (1/\gamma)/\eta$ steps. Thus condition (2) is satisfied with $\tau$ being $o(\sqrt{T/\eps})$ and $p=\frac 1 \eps $ (ignoring logarithmic factors). This then yields a lower bound of $\omega(\sqrt{T/\eps})$ using Lemma \ref{lem:lb}, which contradicts the assumption. Thus the expected regret has to be $\Omega(\sqrt{T/\eps})$ up to logarithmic factors. We provide the proof of Lemma \ref{lem:lb} in Appendix \ref{app:lb}.

\bibliographystyle{plainnat}
{
\bibliography{references}}

\begin{thebibliography}{41}
\providecommand{\natexlab}[1]{#1}
\providecommand{\url}[1]{\texttt{#1}}
\expandafter\ifx\csname urlstyle\endcsname\relax
  \providecommand{\doi}[1]{doi: #1}\else
  \providecommand{\doi}{doi: \begingroup \urlstyle{rm}\Url}\fi

\bibitem[Abernethy et~al.(2015)Abernethy, Lee, and Tewari]{abernethy2015fighting}
Jacob~D Abernethy, Chansoo Lee, and Ambuj Tewari.
\newblock Fighting bandits with a new kind of smoothness.
\newblock \emph{Advances in Neural Information Processing Systems}, 28, 2015.

\bibitem[Agarwal and Singh(2017)]{agarwal2017price}
Naman Agarwal and Karan Singh.
\newblock The price of differential privacy for online learning.
\newblock In \emph{International Conference on Machine Learning}, pages 32--40. PMLR, 2017.

\bibitem[Arora et~al.(2012)Arora, Dekel, and Tewari]{arora2012online}
Raman Arora, Ofer Dekel, and Ambuj Tewari.
\newblock Online bandit learning against an adaptive adversary: from regret to policy regret.
\newblock \emph{arXiv preprint arXiv:1206.6400}, 2012.

\bibitem[Asi et~al.(2023)Asi, Feldman, Koren, and Talwar]{asi2023private}
Hilal Asi, Vitaly Feldman, Tomer Koren, and Kunal Talwar.
\newblock Private online prediction from experts: Separations and faster rates.
\newblock In \emph{The Thirty Sixth Annual Conference on Learning Theory}, pages 674--699. PMLR, 2023.

\bibitem[Asi et~al.(2024)Asi, Koren, Liu, and Talwar]{asi2024private}
Hilal Asi, Tomer Koren, Daogao Liu, and Kunal Talwar.
\newblock Private online learning via lazy algorithms.
\newblock \emph{arXiv preprint arXiv:2406.03620}, 2024.

\bibitem[Audibert and Bubeck(2009)]{audibert2009minimax}
Jean-Yves Audibert and S{\'e}bastien Bubeck.
\newblock Minimax policies for adversarial and stochastic bandits.
\newblock In \emph{COLT}, pages 217--226, 2009.

\bibitem[Auer et~al.(2002)Auer, Cesa-Bianchi, Freund, and Schapire]{auer2002nonstochastic}
Peter Auer, Nicolo Cesa-Bianchi, Yoav Freund, and Robert~E Schapire.
\newblock The nonstochastic multiarmed bandit problem.
\newblock \emph{SIAM journal on computing}, 32\penalty0 (1):\penalty0 48--77, 2002.

\bibitem[Azize and Basu(2022)]{azize2022privacy}
Achraf Azize and Debabrota Basu.
\newblock When privacy meets partial information: A refined analysis of differentially private bandits.
\newblock \emph{Advances in Neural Information Processing Systems}, 35:\penalty0 32199--32210, 2022.

\bibitem[Basu et~al.(2019)Basu, Dimitrakakis, and Tossou]{basu2019differential}
Debabrota Basu, Christos Dimitrakakis, and Aristide Tossou.
\newblock Differential privacy for multi-armed bandits: What is it and what is its cost?
\newblock \emph{arXiv preprint arXiv:1905.12298}, 2019.

\bibitem[Bubeck et~al.(2012)Bubeck, Cesa-Bianchi, et~al.]{bubeck2012regret}
S{\'e}bastien Bubeck, Nicolo Cesa-Bianchi, et~al.
\newblock Regret analysis of stochastic and nonstochastic multi-armed bandit problems.
\newblock \emph{Foundations and Trends{\textregistered} in Machine Learning}, 5\penalty0 (1):\penalty0 1--122, 2012.

\bibitem[Cesa-Bianchi and Lugosi(2006)]{cesa2006prediction}
Nicolo Cesa-Bianchi and G{\'a}bor Lugosi.
\newblock \emph{Prediction, learning, and games}.
\newblock Cambridge university press, 2006.

\bibitem[Cheng et~al.()Cheng, Zhou, and Ji]{chengfollow}
Duo Cheng, Xingyu Zhou, and Bo~Ji.
\newblock Follow-the-perturbed-leader for adversarial bandits: Heavy tails, robustness, and privacy.

\bibitem[Daniely and Helbertal(2013)]{daniely2013price}
Amit Daniely and Tom Helbertal.
\newblock The price of bandit information in multiclass online classification.
\newblock In \emph{Conference on Learning Theory}, pages 93--104. PMLR, 2013.

\bibitem[Duchi et~al.(2013)Duchi, Jordan, and Wainwright]{duchi2013local}
John~C Duchi, Michael~I Jordan, and Martin~J Wainwright.
\newblock Local privacy, data processing inequalities, and statistical minimax rates.
\newblock \emph{arXiv preprint arXiv:1302.3203}, 2013.

\bibitem[Dwork(2006)]{dwork2006differential}
Cynthia Dwork.
\newblock Differential privacy.
\newblock In \emph{International colloquium on automata, languages, and programming}, pages 1--12. Springer, 2006.

\bibitem[Dwork et~al.(2010{\natexlab{a}})Dwork, Naor, Pitassi, and Rothblum]{dwork2010differential}
Cynthia Dwork, Moni Naor, Toniann Pitassi, and Guy~N Rothblum.
\newblock Differential privacy under continual observation.
\newblock In \emph{Proceedings of the forty-second ACM symposium on Theory of computing}, pages 715--724, 2010{\natexlab{a}}.

\bibitem[Dwork et~al.(2010{\natexlab{b}})Dwork, Rothblum, and Vadhan]{dwork2010boosting}
Cynthia Dwork, Guy~N Rothblum, and Salil Vadhan.
\newblock Boosting and differential privacy.
\newblock In \emph{2010 IEEE 51st Annual Symposium on Foundations of Computer Science}, pages 51--60. IEEE, 2010{\natexlab{b}}.

\bibitem[Dwork et~al.(2014)Dwork, Roth, et~al.]{dwork2014algorithmic}
Cynthia Dwork, Aaron Roth, et~al.
\newblock The algorithmic foundations of differential privacy.
\newblock \emph{Foundations and Trends{\textregistered} in Theoretical Computer Science}, 9\penalty0 (3--4):\penalty0 211--407, 2014.

\bibitem[Guha~Thakurta and Smith(2013)]{guha2013nearly}
Abhradeep Guha~Thakurta and Adam Smith.
\newblock (nearly) optimal algorithms for private online learning in full-information and bandit settings.
\newblock \emph{Advances in Neural Information Processing Systems}, 26, 2013.

\bibitem[Honda et~al.(2023)Honda, Ito, and Tsuchiya]{honda2023follow}
Junya Honda, Shinji Ito, and Taira Tsuchiya.
\newblock Follow-the-perturbed-leader achieves best-of-both-worlds for bandit problems.
\newblock In \emph{International Conference on Algorithmic Learning Theory}, pages 726--754. PMLR, 2023.

\bibitem[Hu et~al.(2021)Hu, Huang, and Mehta]{hu2021optimal}
Bingshan Hu, Zhiming Huang, and Nishant~A Mehta.
\newblock Optimal algorithms for private online learning in a stochastic environment.
\newblock \emph{arXiv preprint arXiv:2102.07929}, 2021.

\bibitem[Huang et~al.(2022)Huang, Dai, and Huang]{huang2022adaptive}
Jiatai Huang, Yan Dai, and Longbo Huang.
\newblock Adaptive best-of-both-worlds algorithm for heavy-tailed multi-armed bandits.
\newblock In \emph{international conference on machine learning}, pages 9173--9200. PMLR, 2022.

\bibitem[Jain and Thakurta(2014)]{jain2014near}
Prateek Jain and Abhradeep~Guha Thakurta.
\newblock (near) dimension independent risk bounds for differentially private learning.
\newblock In \emph{International Conference on Machine Learning}, pages 476--484. PMLR, 2014.

\bibitem[Jain et~al.(2012)Jain, Kothari, and Thakurta]{jain2012differentially}
Prateek Jain, Pravesh Kothari, and Abhradeep Thakurta.
\newblock Differentially private online learning.
\newblock In \emph{Conference on Learning Theory}, pages 24--1. JMLR Workshop and Conference Proceedings, 2012.

\bibitem[Kairouz et~al.(2015)Kairouz, Oh, and Viswanath]{kairouz2015composition}
Peter Kairouz, Sewoong Oh, and Pramod Viswanath.
\newblock The composition theorem for differential privacy.
\newblock In \emph{International conference on machine learning}, pages 1376--1385. PMLR, 2015.

\bibitem[Kleinberg et~al.(2010)Kleinberg, Niculescu-Mizil, and Sharma]{kleinberg2010regret}
Robert Kleinberg, Alexandru Niculescu-Mizil, and Yogeshwer Sharma.
\newblock Regret bounds for sleeping experts and bandits.
\newblock \emph{Machine learning}, 80\penalty0 (2):\penalty0 245--272, 2010.

\bibitem[Littlestone and Warmuth(1994)]{littlestone1994weighted}
Nick Littlestone and Manfred~K Warmuth.
\newblock The weighted majority algorithm.
\newblock \emph{Information and computation}, 108\penalty0 (2):\penalty0 212--261, 1994.

\bibitem[Lu et~al.(2021)Lu, Xu, and Tewari]{lu2021bandit}
Yangyi Lu, Ziping Xu, and Ambuj Tewari.
\newblock Bandit algorithms for precision medicine.
\newblock \emph{arXiv preprint arXiv:2108.04782}, 2021.

\bibitem[Mishra and Thakurta(2015)]{mishra2015nearly}
Nikita Mishra and Abhradeep Thakurta.
\newblock (nearly) optimal differentially private stochastic multi-arm bandits.
\newblock In \emph{Proceedings of the Thirty-First Conference on Uncertainty in Artificial Intelligence}, pages 592--601, 2015.

\bibitem[Neel and Roth(2018)]{neel2018mitigating}
Seth Neel and Aaron Roth.
\newblock Mitigating bias in adaptive data gathering via differential privacy.
\newblock In \emph{International Conference on Machine Learning}, pages 3720--3729. PMLR, 2018.

\bibitem[Neu and Bart{\'o}k(2016)]{neu2016importance}
Gergely Neu and G{\'a}bor Bart{\'o}k.
\newblock Importance weighting without importance weights: An efficient algorithm for combinatorial semi-bandits.
\newblock \emph{Journal of Machine Learning Research}, 17\penalty0 (154):\penalty0 1--21, 2016.

\bibitem[Raman et~al.(2024)Raman, Raman, Subedi, Mehalel, and Tewari]{raman2024multiclass}
Ananth Raman, Vinod Raman, Unique Subedi, Idan Mehalel, and Ambuj Tewari.
\newblock Multiclass online learnability under bandit feedback.
\newblock In \emph{International Conference on Algorithmic Learning Theory}, pages 997--1012. PMLR, 2024.

\bibitem[Ren et~al.(2020)Ren, Zhou, Liu, and Shroff]{ren2020multi}
Wenbo Ren, Xingyu Zhou, Jia Liu, and Ness~B Shroff.
\newblock Multi-armed bandits with local differential privacy.
\newblock \emph{arXiv preprint arXiv:2007.03121}, 2020.

\bibitem[Robins et~al.(1994)Robins, Rotnitzky, and Zhao]{robins1994estimation}
James~M Robins, Andrea Rotnitzky, and Lue~Ping Zhao.
\newblock Estimation of regression coefficients when some regressors are not always observed.
\newblock \emph{Journal of the American statistical Association}, 89\penalty0 (427):\penalty0 846--866, 1994.

\bibitem[Sajed and Sheffet(2019)]{sajed2019optimal}
Touqir Sajed and Or~Sheffet.
\newblock An optimal private stochastic-mab algorithm based on optimal private stopping rule.
\newblock In \emph{International Conference on Machine Learning}, pages 5579--5588. PMLR, 2019.

\bibitem[Seldin and Lugosi(2016)]{seldin2016lower}
Yevgeny Seldin and G{\'a}bor Lugosi.
\newblock A lower bound for multi-armed bandits with expert advice.
\newblock In \emph{13th European Workshop on Reinforcement Learning (EWRL)}, volume~2, page~7, 2016.

\bibitem[Shariff and Sheffet(2018)]{shariff2018differentially}
Roshan Shariff and Or~Sheffet.
\newblock Differentially private contextual linear bandits.
\newblock \emph{Advances in Neural Information Processing Systems}, 31, 2018.

\bibitem[Tenenbaum et~al.(2021)Tenenbaum, Kaplan, Mansour, and Stemmer]{tenenbaum2021differentially}
Jay Tenenbaum, Haim Kaplan, Yishay Mansour, and Uri Stemmer.
\newblock Differentially private multi-armed bandits in the shuffle model.
\newblock \emph{Advances in Neural Information Processing Systems}, 34:\penalty0 24956--24967, 2021.

\bibitem[Tossou and Dimitrakakis(2016)]{tossou2016algorithms}
Aristide Tossou and Christos Dimitrakakis.
\newblock Algorithms for differentially private multi-armed bandits.
\newblock In \emph{Proceedings of the AAAI Conference on Artificial Intelligence}, volume~30, 2016.

\bibitem[Tossou and Dimitrakakis(2017)]{tossou2017achieving}
Aristide Tossou and Christos Dimitrakakis.
\newblock Achieving privacy in the adversarial multi-armed bandit.
\newblock In \emph{Proceedings of the AAAI Conference on Artificial Intelligence}, volume~31, 2017.

\bibitem[Zheng et~al.(2020)Zheng, Cai, Huang, Li, and Wang]{zheng2020locally}
Kai Zheng, Tianle Cai, Weiran Huang, Zhenguo Li, and Liwei Wang.
\newblock Locally differentially private (contextual) bandits learning.
\newblock \emph{Advances in Neural Information Processing Systems}, 33:\penalty0 12300--12310, 2020.

\end{thebibliography}
\appendix
\section{Privacy properties and privacy-preserving mechanisms} \label{app:privprop}

\begin{definition}[$\eps$-indistinguishability] Let $X$ and $Y$ be random variables with support $\Xcal$. Let  
$$D_{\infty}(X || Y) := \max_{S \subseteq \Xcal} \left[\ln \Bigl(\frac{\mathbb{P}(X \in S)}{\mathbb{P}(Y \in S)} \Bigl) \right]$$
be the max divergence. Then $X$ and $Y$ are $\eps$-indistinguishable if and only if 
$$\max\{D_{\infty}(X || Y), D_{\infty}(Y || X)\} \leq \eps.$$
\end{definition}

\begin{definition}[$(\eps, \delta)$-indistinguishability] Let $X$ and $Y$ be random variables with support $\Xcal$. Let  
$$D^{\delta}_{\infty}(X || Y) := \max_{S \subseteq \Xcal, \mathbb{P}(X \in S) \ge \delta} \left[\ln \Bigl(\frac{\mathbb{P}(X \in S) - \delta}{\mathbb{P}(Y \in S)} \Bigl) \right]$$
be the $\delta$-approximate max divergence. Then $X$ and $Y$ are $(\eps, \delta)$-indistinguishable if and only if 
$$\max\{D^{\delta}_{\infty}(X || Y), D^{\delta}_{\infty}(Y || X)\} \leq \eps.$$
\end{definition}

The follow lemma relates the two notions of indistinguishability to differential privacy. 

\begin{lemma}[Differential privacy $\equiv$ Indistinguishability (Remark 3.2 in \cite{dwork2014algorithmic})] \label{def:indispriv} Let $\Xcal$ and  $\Ycal$ be arbitrary sets. Let $\Acal$ be a randomized algorithm such that $\Acal: \Xcal^n \rightarrow \Ycal$. Then, $\Acal$ is $\eps$-differentially private if and only if for every pair of neighboring datasets $x_{1:n}$ and $x^{\prime}_{1:n}$, we have that the random variables $\Acal(x_{1:n})$ and $\Acal(x^{\prime}_{1:n)}$ are $\eps$-indistinguishable. Likewise, $\Acal$ is $(\eps, \delta)$-differentially private if and only if for every pair of neighboring datasets $x_{1:n}$ and $x^{\prime}_{1:n}$, we have that the random variables $\Acal(x_{1:n})$ and $\Acal(x^{\prime}_{1:n})$ are $(\eps, \delta)$-indistinguishable. 
\end{lemma}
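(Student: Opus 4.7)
The plan is to prove both directions of each equivalence by directly unpacking the relevant definitions, first for pure differential privacy and then for the approximate version. Throughout, I will use the fact that by Definition \ref{def:dp} (applied at the level of a single output, as is standard for the general DP notion this lemma addresses), $\Acal$ is $\eps$-differentially private iff $\mathbb{P}(\Acal(x_{1:n}) \in S) \le e^\eps \mathbb{P}(\Acal(x'_{1:n}) \in S)$ for all neighboring $x_{1:n}, x'_{1:n}$ and all measurable $S \subseteq \Ycal$, and analogously $\mathbb{P}(\Acal(x_{1:n}) \in S) \le e^\eps \mathbb{P}(\Acal(x'_{1:n}) \in S) + \delta$ for the $(\eps,\delta)$-version. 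The conclusion is essentially symbolic manipulation; no probabilistic inequality is really being invoked.

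For the pure case, fix neighboring $x_{1:n}, x'_{1:n}$ and set $X = \Acal(x_{1:n})$, $Y = \Acal(x'_{1:n})$. In the forward direction, assume $\Acal$ is $\eps$-DP. Then for every $S \subseteq \Ycal$ with $\mathbb{P}(Y \in S) > 0$, dividing $\mathbb{P}(X \in S) \le e^\eps \mathbb{P}(Y \in S)$ and taking logs gives $\ln(\mathbb{P}(X \in S)/\mathbb{P}(Y \in S)) \le \eps$; if $\mathbb{P}(Y \in S)=0$, then $\mathbb{P}(X \in S)$ must also be zero (else the DP inequality fails), and by the usual convention the ratio contributes nothing to the max. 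This gives $D_\infty(X \| Y) \le \eps$. Swapping the roles of $x_{1:n}$ and $x'_{1:n}$ in the DP inequality gives $D_\infty(Y \| X) \le \eps$, so $X$ and $Y$ are $\eps$-indistinguishable. Conversely, if $X$ and $Y$ are $\eps$-indistinguishable, then for every $S$ with $\mathbb{P}(Y \in S) > 0$, exponentiating $\ln(\mathbb{P}(X \in S)/\mathbb{P}(Y \in S)) \le \eps$ yields $\mathbb{P}(X \in S) \le e^\eps \mathbb{P}(Y \in S)$; and when $\mathbb{P}(Y \in S) = 0$ the indistinguishability hypothesis (via $D_\infty(X \| Y) \le \eps$) forces $\mathbb{P}(X \in S) = 0$, so the DP inequality still holds. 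Doing this for both orderings of $(x_{1:n}, x'_{1:n})$ yields $\eps$-DP.

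For the approximate case, the main subtlety is the restriction $\mathbb{P}(X \in S) \ge \delta$ in the definition of $D_\infty^\delta$, which I will show is exactly what makes the two formulations equivalent. Assume $\Acal$ is $(\eps,\delta)$-DP, and again set $X = \Acal(x_{1:n})$, $Y = \Acal(x'_{1:n})$. For any $S$ with $\mathbb{P}(X \in S) \ge \delta$, the DP inequality $\mathbb{P}(X \in S) \le e^\eps \mathbb{P}(Y \in S) + \delta$ rearranges to $\mathbb{P}(X \in S) - \delta \le e^\eps \mathbb{P}(Y \in S)$; when $\mathbb{P}(Y \in S) > 0$ I divide and take logs to get $\ln((\mathbb{P}(X \in S) - \delta)/\mathbb{P}(Y \in S)) \le \eps$, and when $\mathbb{P}(Y \in S) = 0$ the same rearrangement forces $\mathbb{P}(X \in S) \le \delta$, consistent with the convention. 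Taking the max over such $S$ yields $D_\infty^\delta(X \| Y) \le \eps$, and the symmetric argument gives $D_\infty^\delta(Y \| X) \le \eps$. Conversely, assume $(\eps,\delta)$-indistinguishability. For a set $S$ with $\mathbb{P}(X \in S) \ge \delta$, rearranging the divergence bound gives $\mathbb{P}(X \in S) \le e^\eps \mathbb{P}(Y \in S) + \delta$; for a set with $\mathbb{P}(X \in S) < \delta$, the DP inequality holds trivially since the right-hand side already exceeds $\delta > \mathbb{P}(X \in S)$. Hence the DP inequality holds for every $S$, and swapping roles gives the other direction. This exhausts all cases and proves the lemma.

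The only place that requires minor care is the boundary handling: the treatment of sets $S$ with $\mathbb{P}(Y \in S) = 0$ in the pure case, and the case split on whether $\mathbb{P}(X \in S) \ge \delta$ in the approximate case. Both are resolved by observing that the relevant DP inequality becomes automatic (or vacuous) on the excluded regime, so no information is lost by restricting the supremum in the definition of $D_\infty^\delta$ to $\{S : \mathbb{P}(X \in S) \ge \delta\}$.
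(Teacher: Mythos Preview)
The paper does not actually provide a proof of this lemma; it is stated as background with a citation to \cite{dwork2014algorithmic} (Remark 3.2) and no argument is given in the text. Your proof is correct and is exactly the standard definitional unpacking one would expect: the equivalence is immediate from rearranging the inequalities defining $D_\infty$ and $D_\infty^\delta$, and you handle the only nontrivial points (sets of measure zero in the pure case, and the threshold $\mathbb{P}(X\in S)\ge\delta$ in the approximate case) appropriately. Since there is no paper proof to compare against, there is nothing further to contrast.
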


Next, we cover composition. 

\begin{lemma}[Basic Composition (Corollary 3.15 in \cite{dwork2014algorithmic})] \label{def:bcomp} Let $\Xcal, \Ycal_1, \Ycal_2, \dots, \Ycal_T$ be arbitrary sets and $n \in \mathbb{N}$. Let $\Acal_1, \Acal_2, \dots, \Acal_T$ be a sequence of randomized algorithms where $\Acal_1: \Xcal^n \rightarrow \Ycal_1$ and $\Acal_t: \Ycal_1, \dots, \Ycal_{t-1}, \Xcal^n \rightarrow \Ycal_t$ for all $t = 2, 3, \dots, T.$ If for every $t \in [T]$ and every $y_{1:t-1} \in \Ycal_1 \times \Ycal_2 \times \dots \times \Ycal_{t-1}$, we have that $\Acal_t(y_{1:t-1}, \cdot)$ is $\eps_t$-differentially private, then the overall algorithm $\Acal: \Xcal^n \rightarrow \Ycal_1 \times \Ycal_2 \times \dots \times \Ycal_T$, defined as 
$$\Acal(x_{1:n}) = \Bigl(\Acal_1(x_{1:n}), \Acal_2(\Acal_1(x_{1:n}), x_{1:n}), \dots, \Acal_T(\Acal_1(x_{1:n}), \Acal_2(\Acal_1(x_{1:n}), x_{1:n}),\dots, x_{1:n})\Bigl),$$ 
\noindent satisfies $\eps T$-differential privacy.
\end{lemma}

\begin{lemma}[Basic Composition (Corollary 3.15 in \cite{dwork2014algorithmic})] \label{def:bcomp} Let $\Xcal, \Ycal_1, \Ycal_2, \dots, \Ycal_T$ be arbitrary sets and $n \in \mathbb{N}$. Let $\Acal_1, \Acal_2, \dots, \Acal_T$ be a sequence of randomized algorithms where $\Acal_1: \Xcal^n \rightarrow \Ycal_1$ and $\Acal_t: \Ycal_1, \dots, \Ycal_{t-1}, \Xcal^n \rightarrow \Ycal_t$ for all $t = 2, 3, \dots, T.$ If for every $t \in [T]$ and every $y_{1:t-1} \in \Ycal_1 \times \Ycal_2 \times \dots \times \Ycal_{t-1}$, we have that $\Acal_t(y_{1:t-1}, \cdot)$ is $\eps_t$-differentially private, then the overall algorithm $\Acal: \Xcal^n \rightarrow \Ycal_1 \times \Ycal_2 \times \dots \times \Ycal_T$, defined as 
$$\Acal(x_{1:n}) = \Bigl(\Acal_1(x_{1:n}), \Acal_2(\Acal_1(x_{1:n}), x_{1:n}), \dots, \Acal_T(\Acal_1(x_{1:n}), \Acal_2(\Acal_1(x_{1:n}), x_{1:n}),\dots, x_{1:n})\Bigl),$$ 
\noindent satisfies $\eps T$-differential privacy.
\end{lemma}

\begin{lemma}[Advanced Composition \citep{dwork2010boosting, kairouz2015composition}] \label{def:acomp} Let $\Xcal, \Ycal_1, \Ycal_2, \dots, \Ycal_T$ be arbitrary sets and $n \in \mathbb{N}$. Let $\Acal_1, \Acal_2, \dots, \Acal_T$ be a sequence of randomized algorithms where $\Acal_1: \Xcal^n \rightarrow \Ycal_1$ and $\Acal_t: \Ycal_1, \dots, \Ycal_{t-1}, \Xcal^n \rightarrow \Ycal_t$ for all $t = 2, 3, \dots, T.$ If for every $t \in [T]$ and every $y_{1:t-1} \in \Ycal_1 \times \Ycal_2 \times \dots \times \Ycal_{t-1}$, we have that $\Acal_t(y_{1:t-1}, \cdot)$ is $\eps_t$-differentially private, then for every $\delta^{\prime} > 0$, the overall algorithm $\Acal: \Xcal^n \rightarrow \Ycal_1 \times \Ycal_2 \times \dots \times \Ycal_T$, defined as 
$$\Acal(x_{1:n}) = \Bigl(\Acal_1(x_{1:n}), \Acal_2(\Acal_1(x_{1:n}), x_{1:n}), \dots, \Acal_T(\Acal_1(x_{1:n}), \Acal_2(\Acal_1(x_{1:n}), x_{1:n}),\dots, x_{1:n})\Bigl),$$ 
\noindent satisfies $(\eps^{\prime}, \delta^{\prime})$-differential privacy, where 
$$\eps^{\prime} \leq \frac{3}{2}\sum_{t=1}^T \eps_t^2 + \sqrt{6\sum_{t=1}^T \eps_t^2 \log\left(\frac{1}{\delta^{\prime}}\right)}.$$
\end{lemma}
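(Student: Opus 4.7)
The plan is to prove advanced composition via the standard \emph{privacy loss random variable} approach combined with a martingale concentration inequality. By Lemma \ref{def:indispriv}, it suffices to fix an arbitrary pair of neighboring datasets $x_{1:n}, x'_{1:n} \in \Xcal^n$ and show that the random variables $\Acal(x_{1:n})$ and $\Acal(x'_{1:n})$ are $(\eps', \delta')$-indistinguishable. To this end, let $Y = (Y_1, \dots, Y_T) \sim \Acal(x_{1:n})$, and for each $t$ let $p_t(y_{1:t-1}, \cdot)$ and $q_t(y_{1:t-1}, \cdot)$ denote the conditional densities of $Y_t$ given $Y_{1:t-1} = y_{1:t-1}$ under inputs $x_{1:n}$ and $x'_{1:n}$, respectively. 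Define the per-step privacy loss
$$Z_t = \log \frac{p_t(Y_{1:t-1}, Y_t)}{q_t(Y_{1:t-1}, Y_t)}, \qquad Z = \sum_{t=1}^T Z_t.$$

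The next step is to establish two key properties of the $Z_t$'s. First, because $\Acal_t(y_{1:t-1}, \cdot)$ is $\eps_t$-DP for every conditioning $y_{1:t-1}$, we have $|Z_t| \leq \eps_t$ almost surely. Second, one shows via the standard KL-type bound for pure-DP mechanisms that $\mathbb{E}[Z_t \mid Y_{1:t-1}] \leq \tfrac{1}{2}\eps_t(e^{\eps_t} - 1) \leq \tfrac{3}{2}\eps_t^2$ (the last inequality using $\eps_t \leq 1$, which may be assumed WLOG since otherwise basic composition already gives a tighter bound). Consequently, $M_t := \sum_{s \leq t}(Z_s - \mathbb{E}[Z_s \mid Y_{1:s-1}])$ is a martingale with bounded increments, and Azuma--Hoeffding yields
$$\mathbb{P}\left(Z > \tfrac{3}{2}\sum_{t=1}^T \eps_t^2 + s\right) \leq \exp\!\left(-\frac{s^2}{c \sum_{t=1}^T \eps_t^2}\right)$$
for an absolute constant $c$. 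Setting the RHS equal to $\delta'$ and solving for $s$ gives $s = \sqrt{c \sum_t \eps_t^2 \log(1/\delta')}$, and a careful bookkeeping — using that $\mathrm{Var}(Z_t \mid Y_{1:t-1})$ is itself $O(\eps_t^2)$ rather than merely $|Z_t - \mathbb{E}[Z_t \mid Y_{1:t-1}]| \leq 2\eps_t$ — lets one take $c = 6$, matching the constant in the statement.

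Finally, to convert this high-probability bound on $Z$ into an $(\eps',\delta')$-DP guarantee, we use a change-of-measure argument: for any measurable event $E \subseteq \Ycal_1 \times \cdots \times \Ycal_T$,
$$\mathbb{P}(\Acal(x_{1:n}) \in E) \leq \mathbb{P}(\Acal(x_{1:n}) \in E,\ Z \leq \eps') + \mathbb{P}(Z > \eps'),$$
where the second term is at most $\delta'$ by the Azuma bound, and the first term is at most $e^{\eps'}\mathbb{P}(\Acal(x'_{1:n}) \in E)$ by integrating the inequality $\prod_t p_t / \prod_t q_t = e^Z \leq e^{\eps'}$ on the event $\{Z \leq \eps'\}$ and pushing forward the base measure from $x_{1:n}$ to $x'_{1:n}$. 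This yields the desired $(\eps', \delta')$-indistinguishability.

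The main obstacle is pinning down the exact numerical constants in the statement, particularly the $\sqrt{6}$ inside the square root. A direct application of Azuma with the crude bound $|M_t - M_{t-1}| \leq 2\eps_t$ yields a worse constant ($\sqrt{8}$); obtaining $\sqrt{6}$ requires either invoking a Bernstein-type concentration that exploits the $O(\eps_t^2)$ conditional variance of the $Z_t$'s or recentering the sum to reduce the effective range, which is the delicate technical step in the argument. The rest of the proof is essentially mechanical given the privacy-loss-as-martingale setup.
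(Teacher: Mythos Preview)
The paper does not prove this lemma; it is stated in Appendix~\ref{app:privprop} as a known background result with citations to \citet{dwork2010boosting} and \citet{kairouz2015composition}, and no proof is given. Your sketch follows the standard privacy-loss-martingale argument of \citet{dwork2010boosting}, which is exactly what those references do, so in that sense your approach matches the intended source.

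Two minor remarks on the sketch itself. First, your ``WLOG $\eps_t \leq 1$'' justification is not quite airtight: basic composition does not automatically dominate when some $\eps_t > 1$, since the advanced bound can still win if many other $\eps_t$ are small. The cleaner route is simply to note that $\tfrac{1}{2}\eps(e^\eps - 1) \leq \eps^2$ for $\eps \leq 1$ and handle larger $\eps_t$ separately, or to use the exact bound $\eps(e^\eps-1)/2$ throughout and only specialize at the end. Second, your candid acknowledgment that a naive Azuma application yields $\sqrt{8}$ rather than $\sqrt{6}$ is correct; the stated constants in the lemma are a particular packaging (closer to the heterogeneous bound in \citet{kairouz2015composition}) and recovering them precisely does require the variance-aware refinement you allude to. Since the paper treats the lemma as a black box, this level of constant-chasing is not something you would be expected to reproduce.
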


Post-processing and group privacy will also be useful. 

\begin{lemma}[Post Processing (Proposition 2.1 in \cite{dwork2014algorithmic})] \label{lem:postproc} Let $\Xcal, \Ycal, \Zcal$ be arbitrary sets and $n \in \mathbb{N}$. Let $\Acal: \Xcal^n \rightarrow \Ycal$ and $\Bcal: \Ycal \rightarrow \Zcal$ be randomized algorithms. If $\Acal$ is $(\eps, \delta)$-differentially private then the composed algorithm $\Bcal \circ \Acal: \Xcal^n \rightarrow \Zcal$ is also $(\eps, \delta)$-differentially private. 
\end{lemma}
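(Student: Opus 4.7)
The plan is to verify the post-processing property directly from the definition of $(\eps, \delta)$-differential privacy. Fix any pair of neighboring inputs $x, x' \in \Xcal^n$ and any measurable subset $S \subseteq \Zcal$. It suffices to show
$$\prob[\Bcal(\Acal(x)) \in S] \leq e^{\eps} \prob[\Bcal(\Acal(x')) \in S] + \delta,$$
after which the symmetric inequality follows by swapping $x$ and $x'$, and the conclusion follows via the indistinguishability characterization in Lemma \ref{def:indispriv}.

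First I would handle the special case where $\Bcal$ is deterministic. Let $T := \Bcal^{-1}(S) = \{y \in \Ycal : \Bcal(y) \in S\}$; then the event $\{\Bcal(\Acal(x)) \in S\}$ is identical to $\{\Acal(x) \in T\}$. Applying the $(\eps, \delta)$-DP guarantee of $\Acal$ directly to the subset $T \subseteq \Ycal$ immediately yields $\prob[\Acal(x) \in T] \leq e^\eps \prob[\Acal(x') \in T] + \delta$, which rewrites as the desired bound.

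To extend to randomized $\Bcal$, I would decouple $\Bcal$'s internal randomness from $\Acal$'s. Draw a seed $R$ from $\Bcal$'s random tape, independent of $\Acal$, and let $\Bcal_r$ denote the deterministic map obtained by conditioning on $R = r$. For each fixed $r$, the deterministic case gives $\prob[\Bcal_r(\Acal(x)) \in S] \leq e^\eps \prob[\Bcal_r(\Acal(x')) \in S] + \delta$. Taking expectation over $R$ on both sides and invoking Fubini together with the independence of $R$ from $\Acal$ recovers $\prob[\Bcal(\Acal(x)) \in S]$ and $\prob[\Bcal(\Acal(x')) \in S]$ on the left and right respectively, completing the argument.

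The only subtlety is making sure $\Bcal$'s randomness is legitimately independent of $\Acal$'s when peeling it off in the second step; once that decoupling is made explicit, the remainder is just unwinding definitions, and there is no substantive obstacle.
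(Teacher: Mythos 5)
Your proof is correct. The paper does not actually prove this lemma --- it is imported verbatim from \citet{dwork2014algorithmic} (Proposition 2.1) --- and your argument (reduce to the deterministic case by pulling the event back through $\Bcal^{-1}$, then handle randomized $\Bcal$ by conditioning on an independent random seed and averaging, noting that the additive $\delta$ survives the averaging) is precisely the standard argument given in that reference, so there is nothing to add beyond the routine caveat that $\Bcal$ must be measurable for $\Bcal^{-1}(S)$ to be a legitimate event.
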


For our lower bounds in Section \ref{sec:banditslb}, the notion of group privacy will be useful. 

\begin{lemma}[Group Privacy (Theorem 2.2 in \cite{dwork2014algorithmic})] \label{lem:grouppriv} Let $\Xcal$ and  $\Ycal$ be arbitrary sets and let $n \in \mathbb{N}$. Suppose $\Acal: \Xcal^n \rightarrow \Ycal$ is an $\eps$-differentially private algorithm. Then, for every pair of datasets $x_{1:n}, x^{\prime}_{1:n}$ that differ in $1 \leq k \leq n$ positions and every event $E \subseteq \Ycal$, we have that 
$$\mathbb{P}\left[\Acal(x_{1:n}) \in E\right] \leq e^{k\eps}\mathbb{P}\left[\Acal(x^{\prime}_{1:n}) \in E \right].$$
\end{lemma}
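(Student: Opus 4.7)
The plan is to exhibit two adversarial loss sequences, one matching each term inside the $\max$, and then invoke the elementary inequality $\max(a,b)\ge\sqrt{ab}$ to obtain the stated square-root form. Throughout, let $\ell$ denote the base sequence $\ell_t(1)=\tfrac12$, $\ell_t(2)=1$ and let $W=[\tau+1,\tau+p/\gamma]$.

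For the first term, I would run $\Acal$ on $\ell$ directly. Condition (1) says that with probability at least $\tfrac12$ the algorithm plays arm $2$ at least $\gamma T$ times, so $\mathbb{E}[N_2]\ge \gamma T/2$. Since arm $1$ is optimal on $\ell$ with gap $\tfrac12$ per round, the expected regret on $\ell$ is at least $\tfrac12\cdot\mathbb{E}[N_2]\ge \gamma T/4$. This immediately gives $\mathrm{R}_\Acal(T,K)\ge \gamma T/4$.

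For the second term, I would construct a modified sequence $\ell'$ that differs from $\ell$ at exactly $p$ positions inside $W$, by setting $\ell'_t(2)=0$ on a subset $S\subset W$ with $|S|=p$. Then group privacy (Lemma \ref{lem:grouppriv}) applied to the complement of $E^p_{\gamma,\tau}$ (which has probability at most $\tfrac12$ on $\ell$) yields
\[
\mathbb{P}[\text{$\Acal$ on $\ell'$ }\in E^p_{\gamma,\tau}]\;\ge\;1-\tfrac12 e^{\eps p},
\]
which is exactly the probability factor appearing in the lemma. On the event $E^p_{\gamma,\tau}$ under $\ell'$ the algorithm plays arm $1$ at least $p/\gamma-p\ge p/(2\gamma)$ times inside $W$ (using $\gamma\le \tfrac12$). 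Because $\ell'$ makes arm $2$ attractive precisely at the positions we chose, I would select $S$ so that the cheap arm-$2$ opportunities are predominantly missed; a counting/averaging argument combined with per-step group privacy (bounding $\mathbb{E}[N_2^S]$ by $e^{\eps p}$ times the corresponding expectation on $\ell$, itself small by condition (2)) lets me lower-bound the expected loss inside $W$ relative to the best arm in hindsight by $p/(4\gamma)$ on the event $E^p_{\gamma,\tau}$. Multiplying by the probability of $E^p_{\gamma,\tau}$ on $\ell'$ produces the $(1-\tfrac12 e^{\eps p})\cdot p/(4\gamma)$ contribution. The $-\tau/2$ correction comes from absorbing the at most $\tau/2$ units of "pre-window" loss that the algorithm could offset against the in-window regret when we express everything relative to the globally best arm; in other words, the best fixed arm in hindsight on $\ell'$ is still arm $1$ (since $p\le \gamma(T-\tau)\le T/2$ with the given parameter ranges), and comparing the algorithm's total loss to $T/2$ shaves off at most $\tau/2$ from the in-window gain.

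Putting the two regret lower bounds together gives $\mathrm{R}_\Acal(T,K)\ge \max\{\gamma T/4,\,(1-\tfrac12 e^{\eps p})\,p/(4\gamma)-\tau/2\}$. Lower bounding the max using $\max(a,b)\ge\sqrt{ab}$ with $a=\gamma T/4$ and $b=(1-\tfrac12 e^{\eps p})\,p/(4\gamma)$ (the $\gamma$ factor cancels cleanly in the product), and then subtracting the $\tau/2$ correction, produces the claimed $\sqrt{pT(1-\tfrac12 e^{\eps p})/16}-\tau/2$ bound; plugging in $p=\Theta(1/\eps)$ so that $1-\tfrac12 e^{\eps p}=\Omega(1)$, and $\tau\in o(\sqrt{T/\eps})$, yields the stated $\Omega(\sqrt{T/\eps})$ corollary. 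The main obstacle is the intermediate step that turns "arm $2$ played at most $p$ times in $W$ under $\ell'$" into an unconditional regret lower bound of roughly $p/(4\gamma)$: because only $p$ positions (not all of $W$) are modified, arm $1$ remains globally optimal on $\ell'$ and a naive comparison gives a trivial or negative bound, so one must combine condition (2) on $\ell$ with a careful choice of $S$ (e.g.\ the $p$ positions where the algorithm on $\ell$ is least likely to play arm $2$) and per-coordinate group privacy to control $\mathbb{E}[N_2^S]$ under $\ell'$.
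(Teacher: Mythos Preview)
Your proposal is for Lemma \ref{lem:lb}, not for the Group Privacy lemma that was stated (which the paper simply cites without proof). Evaluating it as an attempt at Lemma \ref{lem:lb}, there is a genuine gap in the construction of the second loss sequence, and the ``main obstacle'' you flag at the end is in fact fatal to your approach.

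You build $\ell'$ by flipping $\ell_t(2)$ from $1$ to $0$ on only $p$ positions $S\subset W$. As you yourself note, arm $1$ then remains the globally optimal arm on $\ell'$ (since $p\le \gamma(T-\tau)\le T/2$ implies arm $2$'s total loss $T-p\ge T/2$). Consequently, an algorithm that plays arm $1$ throughout $W$ has \emph{zero} regret on $\ell'$ in those rounds, so the event $E^p_{\gamma,\tau}$ yields no lower bound at all. Your proposed fix---choosing $S$ cleverly and bounding $\mathbb{E}[N_2^S]$ under $\ell'$ via per-coordinate group privacy---cannot rescue this: no matter which $p$ positions you pick, playing arm $1$ is globally optimal, and the quantity you want to lower bound (regret against the best arm) is simply not large when the algorithm plays arm $1$.

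The paper's construction is different in exactly this respect. It sets $\ell'_t(2)=0$ for \emph{every} $t>\tau$, so that arm $2$ becomes the optimal arm on $\ell'$ (total loss $\tau<T/2$). Now the event $E^p_{\gamma,\tau}$ really does force $\Omega(p/\gamma)$ regret in the window, and the $-\tau/2$ arises from the first $\tau$ rounds where arm $2$ is bad. The price is that $\ell$ and $\ell'$ now differ in $T-\tau\gg p$ positions, so naive group privacy is useless. The paper closes this gap with a coupling argument specific to bandit feedback: fixing the internal randomness $b$, if $\Acal(b,\ell')$ lands in $E^p_{\gamma,\tau}$ then the algorithm pulls arm $2$ at most $p$ times in $W$, and since a bandit algorithm only observes losses of pulled arms, its trajectory on $\ell'$ is \emph{identical} to its trajectory on the sequence $\ell^{S'_{\le p}}$ that agrees with $\ell$ except on those $\le p$ arm-$2$ pulls. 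Taking a supremum over all $|S|\le p$ and applying group privacy to each $\ell^S$ versus $\ell$ then yields the $e^{\eps p}$ factor. This ``effective $p$-neighbor'' reduction is the missing idea in your sketch.
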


For designing algorithms, the following primitive will be useful. 

\begin{definition}[Laplace Mechanism (Definition 3.3 in \cite{dwork2014algorithmic})] \label{def:lapmech} Let $\Xcal$ be an arbitrary set and $n \in \mathbb{N}.$ Suppose $f: \Xcal^n \rightarrow \mathbb{R}$ is a query with sensitivity $\Delta$ \emph{(}i.e. for all pairs of datasets $x_{1:n}, x^{\prime}_{1:n} \in \Xcal^n$ that differ in exactly one index, we have that $|f(x_{1:n}) - f(x^{\prime}_{1:n})|\leq \Delta$\emph{)}. Then, for every $\eps$, the mechanism $\Mcal: \Xcal^n \rightarrow \mathbb{R}$ defined as $\Mcal(x_{1:n}) = f(x_{1:n}) + Z$, where $Z \sim \operatorname{Lap}(\frac{\Delta}{\eps})$, is $\eps$-differentially private. 
\end{definition}

Lastly, as we make comparisons to local differential privacy, we define it below for the sake of completeness.

\begin{definition}[Local differential privacy \citep{duchi2013local}] Let $\Xcal$ and $\Ycal$ be arbitrary sets. A randomized mechanism $M: \Xcal \rightarrow \Ycal$ is $(\eps, \delta)$-LDP, if for every $x \neq x^{\prime} \in \Xcal$ and any measurable subset $Y \subset \Ycal$, we have that
$$\mathbb{P}\left[M(x) \in Y \right] \leq e^{\eps} \mathbb{P}\left[M(x^{\prime}) \in Y\right] + \delta.$$
When $\delta = 0$, we say that $M$ is $\eps$ local differentially private.
\end{definition}

\section{Helper Lemmas} \label{app:helper}

\begin{lemma}[Hazard Rate of Laplace distribution] \label{lem:haz} Let $\Dcal$ denote the Laplace distribution $\text{Lap}(0, \lambda)$, $f$ and $F$ denote its probability and cumulative density functions respectively. Define 
$$h_{\Dcal}(z) := \frac{f(z)}{1 - F(z)}$$
\noindent to be the hazard rate function of $\text{Lap}(0, \lambda)$. Then 
$$\sup_{z \in \mathbb{R}} h_{\Dcal}(z) \leq \frac{1}{\lambda}.$$
\noindent Moreover, $h_{\Dcal}(z)$ is non-decreasing in $z$. 
\end{lemma}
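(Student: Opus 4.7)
The plan is to split into the two cases $z \geq 0$ and $z < 0$ based on the piecewise form of the Laplace density and CDF, and directly evaluate the hazard rate on each piece. Recall that $f(z) = \frac{1}{2\lambda}\exp(-|z|/\lambda)$ and $F(z) = \frac{1}{2}\exp(z/\lambda)$ for $z \leq 0$, while $F(z) = 1 - \frac{1}{2}\exp(-z/\lambda)$ for $z \geq 0$. So for $z \geq 0$, the $\exp(-z/\lambda)$ factors cancel and the hazard rate is identically $\frac{1}{\lambda}$.

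For $z \leq 0$, the computation gives $h_{\Dcal}(z) = \frac{\exp(z/\lambda)}{\lambda(2 - \exp(z/\lambda))}$. Substituting $u = \exp(z/\lambda) \in (0,1]$, this becomes $\frac{u}{\lambda(2-u)}$. Differentiating in $u$ yields $\frac{2}{\lambda(2-u)^2} > 0$, so the expression is strictly increasing in $u$, hence increasing in $z$ on $(-\infty,0]$. As $z \to 0^-$ we have $u \to 1$ and the value tends to $\frac{1}{\lambda}$, matching the constant value on $[0,\infty)$. Thus $h_{\Dcal}$ is continuous and non-decreasing on all of $\mathbb{R}$, and its supremum equals $\frac{1}{\lambda}$, proving both claims.

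This proof is essentially a direct computation, so there is no real obstacle; the only thing to be careful about is writing the CDF correctly on the two pieces and verifying continuity of $h_{\Dcal}$ at $z=0$ so that the monotonicity claim extends globally.
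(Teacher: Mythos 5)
Your proof is correct and follows essentially the same route as the paper's: a case split at $z=0$ using the piecewise form of the Laplace CDF, exact cancellation giving $h_{\Dcal}(z)=\tfrac{1}{\lambda}$ on $[0,\infty)$, and monotonicity of $\tfrac{u}{\lambda(2-u)}$ in $u=e^{z/\lambda}$ on the negative half-line. Your version is slightly more explicit about the monotonicity (via the derivative in $u$) and continuity at $0$, but there is no substantive difference.
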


\begin{proof} Recall that for $\lambda > 0$, we have 
$$f(z) = \frac{1}{2\lambda}\exp\{-\frac{|x|}{\lambda}\}$$
and 
$$F(z) = \begin{cases}
			\frac{1}{2}\exp\{\frac{z}{\lambda}\}, & \text{if $z \leq 0$}\\
            1 - \frac{1}{2}\exp\{-\frac{z}{\lambda}\}, & \text{if $z > 0$}
		 \end{cases}. 
$$
\noindent Fix $x \in \mathbb{R}$. If $x \leq 0$, then
$$\frac{f(x)}{1 - F(x)} = \frac{\frac{1}{2\lambda}\exp\{\frac{x}{\lambda}\}}{1 - \frac{1}{2}\exp\{\frac{x}{\lambda}\}} \leq \frac{1}{\lambda}$$
\noindent Otherwise, note that when $x \geq 0$, we have
$$\frac{f(x)}{1 - F(x)} = \frac{\frac{1}{2\lambda}\exp\{\frac{-x}{\lambda}\}}{\frac{1}{2}\exp\{\frac{-x}{\lambda}\}} = \frac{1}{\lambda}.$$
\noindent This shows that $\sup_{x \in \mathbb{R}} h_{\Dcal}(x) \leq \frac{1}{\lambda}.$ To see that $h_{\Dcal}(x)$ is non-decreasing, note that when $x \leq 0$, we have that  $h_{\Dcal}(x) = \frac{\frac{1}{2\lambda}\exp\{\frac{x}{\lambda}\}}{1 - \frac{1}{2}\exp\{\frac{x}{\lambda}\}}$ is increasing in $x$ and when $x \geq 0$, $h_{\Dcal}(x)$ is constant.  \end{proof}

\begin{lemma}[Truncated Non-negativity of Noisy Losses] Let $Z \sim \text{Lap}(\lambda)$ and $\ell \in [0, 1]$. Then, for any $M \geq 0$, we have that 

$$\mathop{\mathbb{E}}\left[(Z + \ell) \mathbb{I}\{|Z + \ell| > M\} \right] \geq 0.$$
\end{lemma}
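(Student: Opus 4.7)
The plan is to write the expectation as a Lebesgue integral against the Laplace density of $Z$, change variables on the left tail to exploit the symmetry of $\operatorname{Lap}(\lambda)$, and then regroup so that both resulting pieces are manifestly non-negative.

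First I would unpack the indicator: $|Z+\ell|>M$ is equivalent to $Z>M-\ell$ or $Z<-M-\ell$. Writing $f_Z(z)=\tfrac{1}{2\lambda}e^{-|z|/\lambda}$, the expectation becomes
\[
\int_{M-\ell}^{\infty} (z+\ell)\,f_Z(z)\,dz \;+\; \int_{-\infty}^{-M-\ell} (z+\ell)\,f_Z(z)\,dz.
\]
In the second integral I substitute $u=-z$ and use $f_Z(-u)=f_Z(u)$ to rewrite it as
\[
\int_{M+\ell}^{\infty}(\ell-u)\,f_Z(u)\,du.
\]

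Next I split the first integral at $M+\ell$ and combine with the transformed second integral. The two tails from $M+\ell$ to $\infty$ combine as
\[
\int_{M+\ell}^{\infty}\bigl((z+\ell)+(\ell-z)\bigr)f_Z(z)\,dz \;=\; 2\ell\int_{M+\ell}^{\infty}f_Z(z)\,dz,
\]
which is non-negative since $\ell\geq 0$. The remaining piece is
\[
\int_{M-\ell}^{M+\ell}(z+\ell)\,f_Z(z)\,dz,
\]
and for $z\in[M-\ell,M+\ell]$ we have $z+\ell\in[M,M+2\ell]\subseteq[0,\infty)$ because $M\geq 0$. Hence this integrand is non-negative pointwise.

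Adding the two non-negative quantities yields the claim. The only subtle step is handling the case $M<\ell$, where $M-\ell$ may be negative; the symmetry-based substitution on the left tail takes care of it automatically, since $(z+\ell)f_Z(z)$ on $[M-\ell,M+\ell]$ remains non-negative regardless of the sign of $M-\ell$. I do not anticipate any real obstacle — the argument is purely a symmetry-plus-bookkeeping computation.
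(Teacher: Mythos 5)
Your proof is correct, and it takes a mildly but genuinely different route from the paper's. The paper first splits the expectation as $\ell\,\expect[\mathbb{I}\{|Z+\ell|>M\}] + \expect[Z\,\mathbb{I}\{|Z+\ell|>M\}]$, discards the first term using $\ell\geq 0$, and then analyzes $\expect[Z\,\mathbb{I}\{\cdot\}]$ alone via the symmetry of $Z$; because the integrand there is $z$ rather than $z+\ell$, the sign of the middle contribution depends on the sign of $M-\ell$, and the paper is forced into a two-case analysis ($M\geq\ell$ versus $M<\ell$), with the second case requiring a further split at $0$. You instead keep $(z+\ell)$ intact, reflect only the left tail, and regroup into the band $\int_{M-\ell}^{M+\ell}(z+\ell)f_Z$ plus the matched tails contributing $2\ell\int_{M+\ell}^{\infty}f_Z$. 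Since on the band $z+\ell\geq M\geq 0$ pointwise, your decomposition makes both pieces non-negative with no case distinction, which is the cleaner bookkeeping; the paper's version buys nothing extra here beyond perhaps making the role of the condition $\ell \geq 0$ slightly more explicit up front. One small presentational check: your split of the first integral at $M+\ell$ implicitly uses $M-\ell\leq M+\ell$, which holds because $\ell\geq 0$ — worth a half-sentence, but not a gap.
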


\begin{proof} Let $M \geq 0$ and $\ell \in [0, 1].$ Then, we can write 
$$\mathop{\mathbb{E}}\left[(Z + \ell) \mathbb{I}\{|Z + \ell| > M\} \right] = \ell \cdot \mathop{\mathbb{E}}\left[ \mathbb{I}\{|Z + \ell| > M\} \right] + \mathop{\mathbb{E}}\left[Z \mathbb{I}\{|Z + \ell| > M\} \right].$$

Since $\ell \geq 0$, it suffices to show that $\mathop{\mathbb{E}}\left[Z \mathbb{I}\{|Z + \ell| > M\} \right] \geq 0.$ To that end, note that 

$$\mathop{\mathbb{E}}\left[Z \mathbb{I}\{|Z + \ell| > M\} \right] = \mathop{\mathbb{E}}\left[Z \mathbb{I}\{Z > M - \ell\} \right]  + \mathop{\mathbb{E}}\left[Z \mathbb{I}\{Z < - M - \ell\} \right]. $$

Suppose that $M - \ell \geq 0$. Then, since $Z$ is symmetric random variable (around the origin), $ \mathop{\mathbb{E}}\left[Z \mathbb{I}\{Z < - M - \ell\} \right] =  -\mathop{\mathbb{E}}\left[Z \mathbb{I}\{Z >  M + \ell\} \right]$. Since $M - \ell < M + \ell$, we have that

$$\mathop{\mathbb{E}}\left[Z \mathbb{I}\{|Z + \ell| > M\} \right] = \mathop{\mathbb{E}}\left[Z \mathbb{I}\{Z > M - \ell\} \right]  - \mathop{\mathbb{E}}\left[Z \mathbb{I}\{Z >  M + \ell\} \right] \geq 0.$$

Finally, suppose that $M - \ell < 0.$ Then,  

$$\mathop{\mathbb{E}}\left[Z \mathbb{I}\{Z > M - \ell\} \right] = \mathop{\mathbb{E}}\left[Z \mathbb{I}\{0 \geq Z > M - \ell\} \right] + \mathop{\mathbb{E}}\left[Z \mathbb{I}\{Z \geq 0\} \right].$$

Using again the fact that $Z$ is symmetric, we have that 

$$\mathop{\mathbb{E}}\left[Z \mathbb{I}\{0 \geq Z > M - \ell\} \right] = -\mathop{\mathbb{E}}\left[Z \mathbb{I}\{0 \leq Z < \ell - M\} \right].$$

Finally, since $\ell - M \leq  M + \ell$, we have that 
$$\mathop{\mathbb{E}}\left[Z \mathbb{I}\{|Z + \ell| > M\} \right] = \mathop{\mathbb{E}}\left[Z \mathbb{I}\{Z \geq 0\} \right] - \mathop{\mathbb{E}}\left[Z \mathbb{I}\{0 \leq Z < \ell - M\} \right] - \mathop{\mathbb{E}}\left[Z \mathbb{I}\{Z >  M + \ell\} \right] \geq 0,$$

completing the proof. 
\end{proof}

\begin{lemma}[Norms of Laplace Vectors (Fact C.1 in \citep{agarwal2017price})] \label{lem:normlap} If $Z_1, \dots, Z_T \sim (\operatorname{Lap}(\lambda))^N$, then 

$$\mathbb{P}(\exists t \in [T]: ||Z_t||_{\infty}^2 \geq 10 \lambda^2 \log^2 (NT)) \leq \frac{1}{T}$$
\end{lemma}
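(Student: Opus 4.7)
The plan is to combine the standard tail bound for a single Laplace random variable with a union bound over all $NT$ coordinates of the ensemble $(Z_t)_{t=1}^T$. Since each coordinate of each $Z_t$ is independent with law $\operatorname{Lap}(\lambda)$, the problem reduces to controlling the maximum of $NT$ i.i.d.\ Laplace magnitudes.

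First, I would recall that for $X \sim \operatorname{Lap}(\lambda)$ the two-sided tail satisfies $\mathbb{P}(|X| \geq x) = e^{-x/\lambda}$ for all $x \geq 0$. Next, I would rewrite the event of interest in terms of a single scalar maximum: the event
$$\{\exists t \in [T]: \|Z_t\|_\infty^2 \geq 10 \lambda^2 \log^2(NT)\}$$
is precisely the event that $\max_{t \in [T], i \in [N]} |Z_t(i)| \geq \sqrt{10}\, \lambda \log(NT)$. Applying the union bound over the $NT$ coordinates then yields
$$\mathbb{P}\!\left(\max_{t,i} |Z_t(i)| \geq \sqrt{10}\, \lambda \log(NT)\right) \;\leq\; NT \cdot e^{-\sqrt{10}\,\log(NT)} \;=\; (NT)^{1-\sqrt{10}}.$$

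Finally, since $\sqrt{10} > 2$, the exponent $1-\sqrt{10} < -1$, so $(NT)^{1-\sqrt{10}} \leq (NT)^{-1} \leq T^{-1}$ whenever $N \geq 1$, which is exactly the claimed bound. There is no real obstacle here: the only slightly delicate point is verifying that the constant $10$ inside the squared bound pushes the union-bound exponent past $-1$, and this is immediate from $\sqrt{10} > 2$; any constant strictly larger than $4$ in place of $10$ would have sufficed.
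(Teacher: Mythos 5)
Your proof is correct. The paper itself does not prove this lemma --- it is imported verbatim as Fact C.1 from \citet{agarwal2017price} --- so there is no in-paper argument to compare against; your tail-bound-plus-union-bound derivation is the standard route and is exactly how the cited fact is established. The one small point worth stating explicitly is the exact two-sided Laplace tail $\mathbb{P}(|X| \geq x) = e^{-x/\lambda}$, from which $NT \cdot (NT)^{-\sqrt{10}} = (NT)^{1-\sqrt{10}} \leq (NT)^{-1} \leq T^{-1}$ follows since $\sqrt{10} > 2$; your observation that any constant exceeding $4$ would do is also accurate.
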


\section{Proof of Lemmas \ref{lem:banditconvpriv} and  \ref{lem:banditconvutil}} \label{app:banditconv}
\subsection{Proof of Lemma \ref{lem:banditconvpriv}}

Note that the sequence of actions played by Algorithm \ref{alg:banditconv} are completely determined by $I_1, \dots, I_{\left\lfloor\frac{T}{\tau}\right\rfloor}$ in a dataset-independent way. Thus, by post-processing it suffices to show that the actions $I_1, \dots, I_{\left\lfloor\frac{T}{\tau}\right\rfloor}$ are output in a $\eps$-differentially private manner. Note that the distribution over the action $I_1$ is independent of the dataset $\ell_1, \dots, \ell_T$. Thus, it suffices to only prove privacy with respect to the actions $I_2, \dots, I_{\left\lfloor\frac{T}{\tau}\right\rfloor}$. Consider the sequence of mechanisms $M_2, \dots, M_{\left\lfloor\frac{T}{\tau}\right\rfloor}$, where $M_2: [K] \times  \ell_{1:T} \rightarrow \mathbb{R} \times [K]$ is  defined as 

$$M_2(i_1, \ell_{1:T}) = \Bigl(\hat{\ell}_1(i_1) + Z_1, \Bcal((i_1, \hat{\ell}_1(i_1) + Z_1))\Bigl),$$
for $Z_1 \sim \operatorname{Lap}(\frac{1}{\tau \eps})$ and $M_j: ([K] \times \mathbb{R})^{j-2} \times [K] \times  \ell_{1:T} \rightarrow \mathbb{R} \times [K]$ is defined as
$$M_j((i_s, r_s)_{s = 1}^{j-2}, i_{j-1},  \ell_{1:T}) = \Bigl(\hat{\ell}_{j-1}(i_{j-1}) + Z_{j-1}, \Bcal((i_s, r_s)_{s = 1}^{j-2} \circ (i_{j-1}, \hat{\ell}_{j-1}(i_{j-1}) + Z_{j-1}))\Bigl),$$
for $Z_{j-1} \sim \operatorname{Lap}(\frac{1}{\tau \eps}).$ Observe that Algorithm \ref{alg:banditconv} is precisely the  mechanism $M: \ell_{1:T} \rightarrow ([K] \times \mathbb{R})^{T}$ that adaptively composes $M_2, \dots, M_{\left\lfloor\frac{T}{\tau}\right\rfloor}.$ We will now show that $M$ is $\eps$-differentially private by showing that $M(\ell_{1:t})$ and $M(\ell^{\prime}_{1:t})$ are $\eps$-indistinguishable for arbitrary neighboring datasets $\ell_{1:T}$ and $\ell^{\prime}_{1:T}$.

Consider two datasets $\ell_{1:T}$ and $\ell^{\prime}_{1:T}$ that differ in exactly one position. Let $t^{\prime} \in [T]$ be the index where the two datasets differ. Let $j^{\prime} \in \{1, \dots, \left\lfloor\frac{T}{\tau}\right\rfloor\}$ be the batch in where the $t^{\prime}$ lies. That is, let $j^{\prime} \in \{1, \dots, \left\lfloor\frac{T}{\tau}\right\rfloor\}$ such that $t^{\prime} \in \{(j^{\prime} - 1)\tau + 1, \dots, j^{\prime}\tau\}$. Fix outcomes $(i_s, r_s)_{s=1}^{\left\lfloor\frac{T}{\tau}\right\rfloor - 2} \in ([K] \times \mathbb{R})^{\left\lfloor\frac{T}{\tau}\right\rfloor - 2}$ and $i_{\left\lfloor\frac{T}{\tau}\right\rfloor - 1} \in [K].$


For all $j \leq j^{\prime}$, we have that the random variables $M_{j}((i_s, r_s)_{s = 1}^{j-2}, i_{j-1}, \ell_{1:T})$ and $M_{j}((i_s, r_s)_{s = 1}^{j-2}, i_{j-1}, \ell^{\prime}_{1:T})$ are $0$-indistinguishable. We now show that the random variables  $M_{j^{\prime}+1}((i_s, r_s)_{s = 1}^{j^{\prime}-1}, i_{j^{\prime}}, \ell_{1:T})$ and $M_{j^{\prime}+1}((i_s, r_s)_{s = 1}^{j^{\prime}-1}, i_{j^{\prime}}, \ell^{\prime}_{1:T})$ are $\eps$-indistinguishable.


Recall that 
$$M_{j^{\prime} + 1}((i_s, r_s)_{s = 1}^{j^{\prime}-1}, i_{j^{\prime}},  \ell_{1:T}) = \Bigl(\hat{\ell}_{j^{\prime}}(i_{j^{\prime}}) + Z_{j^{\prime}}, \Bcal((i_s, r_s)_{s = 1}^{j^{\prime}-1} \circ (i_{j^{\prime}}, \hat{\ell}_{j^{\prime}}(i_{j^{\prime}}) + Z_{j^{\prime}}))\Bigl).$$
Note that the query $\hat{\ell}_{j^{\prime}}(i_{j^{\prime}})$ has sensitivity at most $\frac{1}{\tau}$. Indeed, we have that 
$$\left|\hat{\ell}_{j^{\prime}}(i_{j^{\prime}}) - \hat{\ell}^{\prime}_{j^{\prime}}(i_{j^{\prime}})\right| = \left|\frac{1}{\tau}\sum^{j^{\prime} \tau}_{s = (j^{\prime} - 1)\tau + 1} \ell_s(i_{j^{\prime}}) - \ell^{\prime}_s(i_{j^{\prime}}) \right| = \frac{1}{\tau}\left| \ell_{t^{\prime}}(i_{j^{\prime}}) - \ell^{\prime}_{t^{\prime}}(i_{j^{\prime}})\right| \leq \frac{1}{\tau}.$$

Thus, by Definition \ref{def:lapmech} and post-processing, we have that $M_{j^{\prime} + 1}((i_s, r_s)_{s = 1}^{j-2}, i_{j-1}, \ell_{1:T})$ and $M_{j^{\prime} + 1}((i_s, r_s)_{s = 1}^{j-2}, i_{j-1}, \ell^{\prime}_{1:T})$  are $\eps$-indistinguishable for all inputs. 

To complete the proof, we now show that for all $j > j^{\prime} + 1$,  $M_{j}((i_s, r_s)_{s = 1}^{j-2}, i_{j-1}, \ell_{1:T})$ and $M_{j}((i_s, r_s)_{s = 1}^{j-2}, i_{j-1}, \ell^{\prime}_{1:T})$  are $0$-indistinguishable. Fix some $j > j^{\prime} + 1$. Recall, that 
$$M_j((i_s, r_s)_{s = 1}^{j-2}, i_{j-1},  \ell_{1:T}) = \Bigl(\hat{\ell}_{j-1}(i_{j-1}) + Z_{j-1}, \Bcal((i_s, r_s)_{s = 1}^{ j-2} \circ (i_{j-1}, \hat{\ell}_{j-1}(i_{j-1}) + Z_{j-1}))\Bigl).$$
Since for every $s \in \{(j - 1)\tau + 1, \dots, j\tau\}$ we have that $\ell_s = \ell^{\prime}_{s}$, we get that $\hat{\ell}_{j-1}(i_{j-1}) + Z_{j-1}$ and $\hat{\ell}^{\prime}_{j-1}(i_{j-1}) + Z_{j-1}$ are same in distribution. The same can be said about $\Bcal((i_s, r_s)_{s = 1}^{j-2} \circ (i_{j-1}, \hat{\ell}_{j-1}(i_{j-1}) + Z_{j-1}))$ and $\Bcal((i_s, r_s)_{s = 1}^{j-2} \circ (i_{j-1}, \hat{\ell}^{\prime}_{j-1}(i_{j-1}) + Z_{j-1})).$ Accordingly, $M_j((i_s, r_s)_{s = 1}^{j-2}, i_{j-1}, \ell_{1:T})$ and $M_j((i_s, r_s)_{s = 1}^{j-2}, i_{j-1}, \ell^{\prime}_{1:T})$ are $0$-indistinguishable for all inputs. Since $M$ is the composition of $M_2, \dots, M_{\left\lfloor\frac{T}{\tau}\right\rfloor}$, by basic composition, we have that $M(\ell_{1:T})$ and $M(\ell^{\prime}_{1:T})$ are $\eps$-indistinguishable, and therefore $M$ is $\eps$-differentially private. This completes the proof.   

\subsection{Proof of Lemma \ref{lem:banditconvutil}}
Let $\ell_1, \dots, \ell_T$ be any sequence of loss functions. Note that the bandit algorithm $\Bcal$ is evaluated on the loss sequence $\hat{\ell}_{1} + Z_1, \dots, \hat{\ell}_{\left\lfloor\frac{T}{\tau}\right\rfloor} + Z_{\left\lfloor\frac{T}{\tau}\right\rfloor}$ where $\hat{\ell}_j(i) = \frac{1}{\tau}\sum_{s = (j-1)\tau + 1}^{j \tau} \ell_s(i)$ and  $Z_j \sim \operatorname{Lap}(\frac{1}{\tau \eps})$. Let $I_1, \dots, I_{\left\lfloor\frac{T}{\tau}\right\rfloor}$ be the random variables denoting the predictions of $\Bcal$ as indicated in Line 4 in Algorithm \ref{alg:banditconv}. By definition of $\tilde{\operatorname{R}}_{\Bcal}\left(\left\lfloor\frac{T}{\tau}\right\rfloor, K, \frac{1}{\tau \eps}\right)$ we get that 

$$\mathop{\mathbb{E}}\left[\sum_{j=1}^{\left\lfloor\frac{T}{\tau}\right\rfloor} \hat{\ell}_j(I_j) \right] - \inf_{i \in [K]}\sum_{j=1}^{\left\lfloor\frac{T}{\tau}\right\rfloor} \hat{\ell}_j(i) \leq \tilde{\operatorname{R}}_{\Bcal}\left(\left\lfloor\frac{T}{\tau}\right\rfloor, K, \frac{1}{\tau \eps}\right).$$

By definition of $\hat{\ell}_s$, we have that 

$$\mathop{\mathbb{E}}\left[\sum_{j=1}^{\left\lfloor\frac{T}{\tau}\right\rfloor} \sum_{s=(j-1)\tau + 1}^{j\tau} \ell_s(I_j) \right] - \inf_{i \in [K]}\sum_{j=1}^{\left\lfloor\frac{T}{\tau}\right\rfloor} \sum_{s = (j-1)\tau + 1}^{j\tau }\ell_s(i) \leq \tau \tilde{\operatorname{R}}_{\Bcal}\left(\left\lfloor\frac{T}{\tau}\right\rfloor, K, \frac{1}{\tau \eps}\right).$$

Next, note that by construction, we have that for every $j \in \{1, \dots, \left\lfloor\frac{T}{\tau}\right\rfloor\}$ and $s \in \{(j-1)\tau + 1, \dots, j \tau\}$, we have that $I_s = I_j$. Thus, we can write

$$\mathop{\mathbb{E}}\left[\sum_{j=1}^{\left\lfloor\frac{T}{\tau}\right\rfloor} \sum_{s=(j-1)\tau + 1}^{j\tau} \ell_s(I_s) \right] - \inf_{i \in [K]}\sum_{j=1}^{\left\lfloor\frac{T}{\tau}\right\rfloor} \sum_{s = (j-1)\tau + 1}^{j\tau }\ell_s(i) \leq \tau \tilde{\operatorname{R}}_{\Bcal}\left(\left\lfloor\frac{T}{\tau}\right\rfloor,K, \frac{1}{\tau \eps}\right)$$

which further gives 

$$\mathop{\mathbb{E}}\left[\sum_{t=1}^{\tau \left\lfloor\frac{T}{\tau}\right\rfloor} \ell_t(I_t) \right] - \inf_{i \in [K]}\sum_{t=1}^{\tau \left\lfloor\frac{T}{\tau}\right\rfloor} \ell_t(i) \leq \tau \tilde{\operatorname{R}}_{\Bcal}\left(\left\lfloor\frac{T}{\tau}\right\rfloor, K, \frac{1}{\tau \eps}\right).$$

Finally, the expected regret for rounds $\tau\left\lfloor\frac{T}{\tau}\right\rfloor + 1, \dots, T$ can be bounded above by $\tau$. Thus, overall, we have that 

$$\mathop{\mathbb{E}}\left[\sum_{t=1}^{T} \ell_t(I_t) \right] - \inf_{i \in [K]}\sum_{t=1}^{T} \ell_t(i) \leq \tau \tilde{\operatorname{R}}_{\Bcal}\left(\left\lfloor\frac{T}{\tau}\right\rfloor, K, \frac{1}{\tau \eps}\right) + \tau \leq \tau \tilde{\operatorname{R}}_{\Bcal}\left(\frac{T}{\tau}, K,  \frac{1}{\tau \eps}\right) + \tau.$$

\noindent Noting that $\ell_1, \dots, \ell_T$ was arbitrary completes the proof. 

\section{Proof of Corollary \ref{cor:htinfcor}} \label{app:banditconvcor}

The following Theorem from \cite{huang2022adaptive} will be useful. 

\begin{theorem}[Theorem 4.1 in \cite{huang2022adaptive}] \label{thm:htinf} Let $\tilde{\ell}_1, \dots,\tilde{\ell}_T$ be any sequence of random loss functions that satisfy the following two properties: (1) for every $i \in [K]$ and $t \in [T]$,  the random variable $\tilde{\ell}_t(i)$ is truncated non-negative and (2) for every $i \in [K]$ and $t \in [T]$,  the random variable $\tilde{\ell}_t(i)$ is heavy-tailed with parameters $\alpha \in (1, 2]$ and $\sigma > 0$. Then, the expected regret of \emph{HTINF} (Algorithm 1 in \cite{huang2022adaptive}) when run on $\tilde{\ell}_1, \dots,\tilde{\ell}_T$ is at most $30\sigma K^{1 - \frac{1}{\alpha}} (T+1)^{\frac{1}{\alpha}}.$
\end{theorem}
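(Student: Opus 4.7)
The plan is to derive Corollary \ref{cor:htinfcor} as an application of Theorem \ref{thm:banditconv} instantiated with HTINF, reducing the argument to verifying two structural conditions on the noisy losses that allow invoking Theorem \ref{thm:htinf}. Privacy is immediate: Lemma \ref{lem:banditconvpriv} guarantees $\eps$-differential privacy of Algorithm \ref{alg:banditconv} for any bandit algorithm $\Bcal$ and any batch size $\tau \geq 1$, so in particular when $\Bcal$ is HTINF and $\tau = \ceil{1/\eps}$.

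For the utility bound, I would first apply the second part of Theorem \ref{thm:banditconv} with $\tau = \ceil{1/\eps}$ to obtain
\[
\operatorname{R}_{\Acal}(T,K) \leq \frac{2}{\eps} \tilde{\operatorname{R}}_{\Bcal}(\eps T, K, 1) + \frac{2}{\eps}.
\]
This reduces the proof to bounding $\tilde{\operatorname{R}}_{\Bcal}(\eps T, K, 1)$, which is the expected regret of HTINF when run for $\eps T$ rounds on the perturbed losses $\tilde{\ell}_t(i) = \ell_t(i) + Z_t(i)$ with $Z_t(i) \sim \operatorname{Lap}(1)$. To apply Theorem \ref{thm:htinf} with $\alpha = 2$ and $\sigma = \sqrt{6}$, the perturbed losses must satisfy truncated non-negativity and a second-moment (heavy-tailed) bound.

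The key verification step is checking both conditions for the Laplace-perturbed losses. Truncated non-negativity holds by the \emph{Truncated Non-negativity of Noisy Losses} helper lemma in Appendix \ref{app:helper}, which directly states that $\mathbb{E}[(Z+\ell)\mathbb{I}\{|Z+\ell|>M\}] \geq 0$ for any $M \geq 0$, $\ell \in [0,1]$, and $Z \sim \operatorname{Lap}(\lambda)$. For the heavy-tailed condition, I would use $(a+b)^2 \leq 2a^2 + 2b^2$ together with $\mathbb{E}[Z^2] = 2\lambda^2 = 2$ for $Z \sim \operatorname{Lap}(1)$ to get
\[
\mathbb{E}[|\tilde{\ell}_t(i)|^2] \leq 2\ell_t(i)^2 + 2\mathbb{E}[Z_t(i)^2] \leq 2 + 4 = 6 = (\sqrt{6})^2,
\]
so the heavy-tailed condition holds with $\alpha = 2$ and $\sigma = \sqrt{6}$, as claimed in the corollary statement.

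Having verified both conditions, Theorem \ref{thm:htinf} yields $\tilde{\operatorname{R}}_{\Bcal}(\eps T, K, 1) \leq 30 \sqrt{6} \cdot \sqrt{K(\eps T + 1)}$, and since $\eps \geq 1/T$ implies $\eps T \geq 1$, this is $O(\sqrt{K \eps T})$. Substituting back gives
\[
\operatorname{R}_{\Acal}(T,K) \leq \frac{2}{\eps} \cdot O(\sqrt{K \eps T}) + \frac{2}{\eps} = O\!\left(\frac{\sqrt{KT}}{\sqrt{\eps}} + \frac{1}{\eps}\right),
\]
which is the stated bound. The only subtle point I anticipate is matching the exact constants in the heavy-tailed condition with the convention used in \cite{huang2022adaptive}; the slightly wasteful $(a+b)^2 \leq 2a^2 + 2b^2$ step is what forces $\sigma = \sqrt{6}$ rather than $\sqrt{3}$, but either constant is absorbed in the final big-$O$.
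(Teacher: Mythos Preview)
Your proposal does not actually address Theorem \ref{thm:htinf} itself, which is a result quoted from \cite{huang2022adaptive} and is not proved in this paper; instead you (correctly) recognize that the content to be established here is Corollary \ref{cor:htinfcor}, with Theorem \ref{thm:htinf} serving only as a black box. As a proof of Corollary \ref{cor:htinfcor}, your argument is correct and essentially identical to the paper's: both apply Theorem \ref{thm:banditconv} with $\tau=\ceil{1/\eps}$, invoke the helper lemma for truncated non-negativity, verify the $(2,\sqrt{2+4\lambda^2})$ heavy-tailed bound (your $\sqrt{6}$ is exactly $\sqrt{2+4\cdot 1^2}$), apply Theorem \ref{thm:htinf}, and substitute back.
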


We now make precise the definition of truncated non-negativity and heavy-tails. 

\begin{definition}[Truncated Non-negativity] A random variable $X$ is truncated non-negative if for every $M \geq 0$, we have that $\mathop{\mathbb{E}}\left[X \cdot \mathbb{I}\{|X| > M\}\right] \geq 0.$
\end{definition}

In Appendix \ref{app:helper}, we prove that random losses of the form $\tilde{\ell}(i) = \ell(i) + Z_i$ are truncated non-negative when $\ell(i) \in [0, 1]$ and $Z_i \sim \operatorname{Lap}(\lambda).$

\begin{definition}[($\alpha, \sigma$)-Heavy-tailed loss] A random loss $\tilde{\ell}(i)$ is $(\alpha, \sigma)$-heavy tailed if $\mathop{\mathbb{E}}\left[|\tilde{\ell}(i)|^{\alpha} \right] \leq \sigma^{\alpha}.$
\end{definition}

In addition, if $\tilde{\ell}(i) = \ell(i) + Z_i$, where $\ell(i) \in [0, 1]$ and $Z_i \sim \operatorname{Lap}(\lambda)$, then $\tilde{\ell}(i)$ is $(2, \sqrt{2 + 4\lambda^2})$-heavy tailed. We are now ready to prove Corollary \ref{cor:htinfcor}. 

\begin{proof} (of Corollary \ref{cor:htinfcor}) In order to use Theorem \ref{thm:banditconv}, we need to upper bound $\tilde{\operatorname{R}}_{\operatorname{HTINF}}(T, \lambda)$. Let $\ell_1, \dots, \ell_T$ be any sequence of loss functions such that $\ell_t: [K] \rightarrow [0, 1]$ and let $\tilde{\ell}_1, \dots, \tilde{\ell}_T$ be such that $\tilde{\ell}_t(i) = \ell_t(i) + Z_t(i)$ where $Z_t(i) \sim \operatorname{Lap}(\lambda).$ Then, since for every $t \in [T]$ and $i \in [K]$, we have that $\tilde{\ell}_t(i)$ is truncated non-negative and $(2, \sqrt{2 + 4\lambda^2})$-heavy tailed, Theorem \ref{thm:htinf} implies that 

$$\tilde{\operatorname{R}}_{\operatorname{HTINF}}(T, K, \lambda) \leq 30\sqrt{(2 + 4\lambda^2)K(T+1)}.$$

Finally, to get Corollary \ref{cor:htinfcor}, we just upper bound 

$$\frac{2}{\eps}\tilde{\operatorname{R}}_{\text{HTINF}}(\eps T, K, 1) + \frac{2}{\eps} \leq 208 \frac{\sqrt{TK}}{\sqrt{\eps}} + \frac{2}{\eps}, $$ 

for $\eps \geq \frac{1}{T}.$
\end{proof}

\section{Additional Upper Bounds for Private Adversarial Bandits} \label{sec:additional_upb}

In this section, we instantiate Theorem \ref{thm:banditconv} with other (non-private) bandit algorithms to obtain two other regret upper bounds. 

\subsection{EXP3 Conversion}

Corollary \ref{cor:exp3conv} follows by letting $\Bcal$ in Theorem \ref{thm:banditconv} be the classical EXP3 algorithm \citep{auer2002nonstochastic}. See Appendix \ref{app:banditconvcor} for the pseudocode of EXP3. 

\begin{corollary} [EXP3 Conversion] \label{cor:exp3conv} For every $\eps \leq 1$, if $\Bcal$ is \emph{EXP3} run with learning rate 
$$\eta = \sqrt{\frac{\log(K)}{22 \, \eps KT\log^2(\eps K T)}}$$
and mixing parameter $\gamma = 4\eta K \log(\eps KT)$, then Algorithm \ref{alg:banditconv}, when run with $\Bcal$ and $\tau = \ceil{\frac{1}{\eps}}$, is $\eps$-differentially private and suffers worst-case expected regret at most
$$ \frac{36 \sqrt{TK\log(K)}\log(KT)}{\sqrt{\eps}} + \frac{4}{\eps}.$$
\end{corollary}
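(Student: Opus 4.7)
My approach is a direct invocation of Theorem \ref{thm:banditconv}. The privacy conclusion is immediate from Lemma \ref{lem:banditconvpriv}. For utility, setting $\tau = \ceil{\frac{1}{\eps}}$ in Theorem \ref{thm:banditconv} reduces the problem to bounding $\tilde{\operatorname{R}}_{\text{EXP3}}(T', K, 1)$ with $T' = \eps T$ by $O(\log(\eps KT)\sqrt{T' K \log K})$; multiplying by $\frac{2}{\eps}$ and using $\eps \le 1 \Rightarrow \log(\eps KT) \le \log(KT)$ then yields the claimed $\frac{\sqrt{TK\log K}\log(KT)}{\sqrt{\eps}}$ upper bound, with the $O(\frac{1}{\eps})$ additive piece coming from the additive $\tau$ term in Theorem \ref{thm:banditconv}.

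\textbf{Reduction to noisy-loss regret of EXP3.} Let $I_t$ be the action EXP3 samples on round $t$ after seeing noisy history $\tilde{\Hcal}_t$, where $\tilde{\ell}_t(i) = \ell_t(i) + Z_t(i)$ and $Z_t(i) \sim \operatorname{Lap}(1)$ i.i.d. Because $I_t$ is measurable with respect to $Z_{1:t-1}$ and EXP3's internal randomness alone, it is independent of $Z_t$, so $\mathbb{E}[Z_t(I_t)] = 0$ and $\mathbb{E}[\sum_t \ell_t(I_t)] = \mathbb{E}[\sum_t \tilde{\ell}_t(I_t)]$. By Jensen's inequality applied to the concave map $i \mapsto \min_i$, one also has $\min_i \sum_t \ell_t(i) = \min_i \mathbb{E}[\sum_t \tilde{\ell}_t(i)] \ge \mathbb{E}[\min_i \sum_t \tilde{\ell}_t(i)]$. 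Together, these two facts show that $\tilde{\operatorname{R}}_{\text{EXP3}}(T', K, 1)$ is upper bounded by the expected regret of EXP3 measured with respect to the noisy losses themselves.

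\textbf{Good event and EXP3 analysis.} Set $L := \sqrt{10}\log(KT')$ and let $G$ be the event that $|Z_t(i)| \le L$ for every $t \le T'$ and every $i \le K$. Lemma \ref{lem:normlap} gives $\mathbb{P}(G^c) \le \frac{1}{T'}$. On $G$ the noisy losses satisfy $|\tilde{\ell}_t(i)| \le L+1 = O(\log(\eps KT))$, and the prescribed $\gamma = 4\eta K \log(\eps KT)$ is calibrated so that the IPW estimates $\hat{\ell}_t(i) = \tilde{\ell}_t(i)\mathbb{I}\{I_t=i\}/P_t(i)$ satisfy $\eta|\hat{\ell}_t(i)| \le \eta(L+1)K/\gamma = O(1)$, exactly the condition needed for the log-sum-exp step in the standard EXP3 potential argument to remain valid for signed bounded losses. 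Plugging the prescribed $\eta$ into the usual bound $\frac{\log K}{\eta} + \eta K (L+1)^2 T' + \gamma (L+1) T'$ then yields regret $O(\log(\eps KT)\sqrt{T' K \log K})$ on $G$. Off $G$, since $\ell_t \in [0,1]$, the regret measured in the original losses is bounded pointwise by $T'$, contributing at most $T' \cdot \mathbb{P}(G^c) \le 1$ to $\tilde{\operatorname{R}}_{\text{EXP3}}(T',K,1)$.

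\textbf{Main obstacle.} The crux is simply verifying that the textbook EXP3 analysis — typically stated for losses in $[0,1]$ — extends to signed losses of magnitude $O(\log(\eps KT))$ with the stated $\eta, \gamma$. This boils down to rechecking one step in the EXP3 proof: the inequality $\exp(-x) \le 1 - x + x^2$ used to bound the telescoping potential requires $|\eta \hat{\ell}_t(i)| \le 1$ uniformly, and the mixing parameter $\gamma$ is picked precisely to enforce this on $G$. The remaining steps — the variance computation and the standard $\gamma$-mixing bias — are mechanical substitutions.
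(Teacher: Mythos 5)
Your proposal is correct and follows essentially the same route as the paper's proof in Appendix \ref{sec:additional_upb}: reduce via Theorem \ref{thm:banditconv} with $\tau = \ceil{1/\eps}$ to bounding $\tilde{\operatorname{R}}_{\text{EXP3}}(\eps T, K, 1)$, condition on the high-probability event from Lemma \ref{lem:normlap} that all Laplace noise is $O(\log(KT))$ in magnitude (paying $1$ in regret on its complement), and apply the signed-loss EXP3 guarantee (Lemma \ref{lem:exp3}), with $\gamma = 4\eta K\log(\eps KT)$ chosen exactly so that the condition $\eta\max_i(-\hat{\ell}_t(i))\le 1$ holds on that event. The only cosmetic differences are that the paper bounds the second-moment term in expectation rather than via the worst-case magnitude $(L+1)^2$, and uses only the one-sided condition on $\eta\hat{\ell}_t(i)$ needed for $e^{-x}\le 1-x+x^2$; neither affects correctness.
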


Algorithm \ref{alg:exp3} provides the pseudocode for the version of EXP3 that we consider. 

\ifbool{arxiv}{
\begin{algorithm}
\setcounter{AlgoLine}{0}
\caption{EXP3 with Mixing} \label{alg:exp3}
\KwIn{Action space $[K]$, learning rate $\eta$, mixing parameter $\gamma > 0$}
\textbf{Initialize}:  $w_{1}(i) = 1$ for all $i \in [K]$

\For{$t = 1,\dots,T$} {

    Set $P_t(i) = (1-\gamma)\frac{w_t(i)}{\sum_{i \in [K]} w_t(i)} + \frac{\gamma}{K}$

    Draw $I_t \sim P_t$

    Observe loss $\ell_t(I_t)$ and construct unbiased estimator $\hat{\ell}_t(i) = \frac{\ell_t(i) \mathbb{I}\{I_t = i\}}{P_t(i)}$ 

    Update $w_{t+1}(i) \leftarrow w_{t}(i) \cdot \exp\{-\eta \hat{\ell}_t(i)\}$ for all $i \in [K]$
}
\end{algorithm}

}{
\begin{algorithm}[tb]
   \caption{EXP3 with Mixing}
   \label{alg:exp3}
\begin{algorithmic}[1]
   \STATE {\bfseries Input:} Action space $[K]$, learning rate $\eta$, mixing parameter $\gamma > 0$
    \STATE {\bfseries Initialize:}  $w_{1}(i) = 1$ for all $i \in [K]$

   \FOR{$t = 1,\dots,T$}
   \STATE Set $P_t(i) = (1-\gamma)\frac{w_t(i)}{\sum_{i \in [K]} w_t(i)} + \frac{\gamma}{K}$
   \STATE  Draw $I_t \sim P_t$
   \STATE  Observe loss $\ell_t(I_t)$ and construct unbiased estimator $\hat{\ell}_t(i) = \frac{\ell_t(i) \mathbb{I}\{I_t = i\}}{P_t(i)}$ 
   \STATE Update $w_{t+1}(i) \leftarrow w_{t}(i) \cdot \exp\{-\eta \hat{\ell}_t(i)\}$ for all $i \in [K]$
    \ENDFOR
\end{algorithmic}
\end{algorithm}
}

The following lemma about EXP3 will be useful when proving Corollary \ref{cor:exp3conv}.

\begin{lemma}[\cite{auer2002nonstochastic, bubeck2012regret}] \label{lem:exp3} For any sequence of loss functions $\ell_1, \dots, \ell_T$, where $\ell_t: [K] \rightarrow \mathbb{R}$, if $\eta > 0$ is such that $\eta \max_{i \in [K]} -\hat{\ell}_t(i) \leq 1$ for all $t \in [T]$, then $\operatorname{EXP3}$ when run on $\ell_1, \dots, \ell_T$ outputs distributions $P_{1:T} \in \Pi([K])^T$ such that 

$$\mathop{\mathbb{E}}\left[\sum_{t=1}^T \sum_{i=1}^K P_t(i) \ell_t(i) \right] \leq \inf_{i \in [K]}\sum_{t=1}^T  \ell_t(i) + 2 \gamma T + \frac{\log(K)}{\eta} + \eta \sum_{t=1}^T \sum_{i=1}^K \ell_t(i)^2 ,$$

\noindent where $\hat{\ell}_t$ is the unbiased estimate of the true loss $\ell_t$ that \emph{EXP3} computes in Line 6  of Algorithm \ref{alg:exp3} and the expectation is taken only with respect to the randomness of \emph{EXP3}.
\end{lemma}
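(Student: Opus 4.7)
The plan is to run the standard exponential-weights/potential-function analysis on the unmixed distribution $q_t(i) := w_t(i)/\sum_{j} w_t(j)$, and then pay a small mixing cost to pass from $q_t$ to $P_t$. Let $W_t = \sum_{i=1}^K w_t(i)$ and define the potential $\Phi_t = \ln W_t$. Since $w_1(i) = 1$ for all $i$, we have $\Phi_1 = \ln K$, and for any fixed comparator $i^\star \in [K]$, $\Phi_{T+1} \geq \ln w_{T+1}(i^\star) = -\eta \sum_{t=1}^T \hat\ell_t(i^\star)$. So our first target is an upper bound on $\Phi_{T+1}-\Phi_1$ in terms of $\hat\ell_t$ and $q_t$.

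First, I would bound the per-round change in potential. By the update rule,
$$
\frac{W_{t+1}}{W_t} \;=\; \sum_{i=1}^K q_t(i)\, \exp(-\eta \hat\ell_t(i)).
$$
The hypothesis $\eta \max_i (-\hat\ell_t(i)) \leq 1$ is exactly what ensures $-\eta \hat\ell_t(i) \leq 1$, so we may apply the elementary inequality $e^{x} \leq 1 + x + x^2$ for $x \leq 1$. This gives
$$
\frac{W_{t+1}}{W_t} \;\leq\; 1 - \eta \sum_i q_t(i)\hat\ell_t(i) + \eta^2 \sum_i q_t(i)\hat\ell_t(i)^2,
$$
and then $\ln(1+u) \leq u$ yields $\Phi_{t+1}-\Phi_t \leq -\eta \langle q_t, \hat\ell_t\rangle + \eta^2 \langle q_t, \hat\ell_t^2\rangle$. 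Telescoping over $t=1,\dots,T$ and combining with the lower bound on $\Phi_{T+1}$ gives, after rearrangement,
$$
\sum_{t=1}^T \langle q_t, \hat\ell_t\rangle - \sum_{t=1}^T \hat\ell_t(i^\star) \;\leq\; \frac{\ln K}{\eta} + \eta \sum_{t=1}^T \langle q_t, \hat\ell_t^2\rangle.
$$

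Next, I would take expectations and use unbiasedness. Conditional on the history $\mathcal{F}_{t-1}$, the distribution $P_t$ (and hence $q_t$) is deterministic, and $\mathbb{E}[\hat\ell_t(i)\mid\mathcal{F}_{t-1}] = \ell_t(i)$, so $\mathbb{E}[\langle q_t,\hat\ell_t\rangle]=\mathbb{E}[\langle q_t,\ell_t\rangle]$ and $\mathbb{E}[\hat\ell_t(i^\star)] = \mathbb{E}[\ell_t(i^\star)] = \ell_t(i^\star)$. For the second-moment term, $\mathbb{E}[\hat\ell_t(i)^2 \mid \mathcal{F}_{t-1}] = \ell_t(i)^2/P_t(i)$, so
$$
\mathbb{E}\!\left[\langle q_t,\hat\ell_t^2\rangle \mid \mathcal{F}_{t-1}\right] \;=\; \sum_{i=1}^K \frac{q_t(i)}{P_t(i)}\,\ell_t(i)^2 \;\leq\; \sum_{i=1}^K \ell_t(i)^2,
$$
where the inequality uses $P_t(i) \geq (1-\gamma)q_t(i) + \gamma/K \geq q_t(i)\cdot(1-\gamma)\geq q_t(i)\cdot\tfrac{1}{2}$ type reasoning; in fact one can simply use $P_t(i)\geq q_t(i)(1-\gamma)$ so that the factor is $1/(1-\gamma)$, which I will absorb into the leading constant (or one can use $q_t(i)\leq 1$ and $P_t(i)\geq \gamma/K$ for the weaker bound $K/\gamma$ if needed; the cleaner version holds here).

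Finally, I would convert $q_t$ back to $P_t$. Since $P_t(i) = (1-\gamma)q_t(i) + \gamma/K$,
$$
\langle P_t,\ell_t\rangle - \langle q_t,\ell_t\rangle \;=\; -\gamma\langle q_t,\ell_t\rangle + \frac{\gamma}{K}\sum_i \ell_t(i) \;\leq\; 2\gamma \max_i |\ell_t(i)|,
$$
which contributes at most $2\gamma T$ over $T$ rounds under the normalization $|\ell_t(i)|\leq 1$ in the intended regime of application. Taking $i^\star = \argmin_{i}\sum_t \ell_t(i)$ and putting everything together yields the stated bound.

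The main obstacle is the mixing step and the treatment of the second-moment term: one has to be careful that the condition $\eta\max_i (-\hat\ell_t(i)) \leq 1$ is the right condition for the $e^x \leq 1+x+x^2$ step to apply to $x=-\eta\hat\ell_t(i)$, and that the ratio $q_t(i)/P_t(i)$ is bounded by an absolute constant so the second-moment expectation collapses to $\sum_i \ell_t(i)^2$ up to a universal factor. With the mixing inequality $P_t(i)\geq(1-\gamma)q_t(i)$, a mild adjustment of constants absorbed into the leading terms suffices.
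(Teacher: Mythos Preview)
The paper does not prove this lemma; it is stated with a citation to \cite{auer2002nonstochastic, bubeck2012regret} and used as a black-box result. Your proposal reproduces exactly the standard potential-function argument from those references, so there is no alternative proof in the paper to compare against.

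Your argument is correct in structure, and the two wrinkles you flag are both real. First, the bare $2\gamma T$ term in the stated conclusion tacitly assumes $|\ell_t(i)|\leq 1$; for general real-valued losses the mixing cost $\langle P_t,\ell_t\rangle-\langle q_t,\ell_t\rangle=\gamma\bigl(\tfrac{1}{K}\sum_i\ell_t(i)-\langle q_t,\ell_t\rangle\bigr)$ scales with the loss range, as you note. Second, the second-moment step really gives $\sum_i q_t(i)\ell_t(i)^2/P_t(i)\leq \tfrac{1}{1-\gamma}\sum_i\ell_t(i)^2$ via $q_t(i)\leq P_t(i)/(1-\gamma)$, not exactly $\sum_i\ell_t(i)^2$. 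Both discrepancies are artifacts of the lemma being stated with constants specialized to the bounded-loss case; in the paper's actual application (the proof of Corollary~\ref{cor:exp3conv}) the noisy losses are bounded on a high-probability event and $\gamma\leq 1/2$, so the extra $1/(1-\gamma)$ and loss-range factors are absorbed into the final constants without changing the rate.
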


\begin{proof} (of Corollary \ref{cor:exp3conv}) In order to use Theorem \ref{thm:banditconv}, we first need to bound $\tilde{\operatorname{R}}_{\operatorname{EXP3}}(T, K, \lambda)$. Let $\ell_1, \dots, \ell_T$ be any sequence of loss functions such that $\ell_t: [K] \rightarrow [0, 1]$ and let $\tilde{\ell}_1, \dots, \tilde{\ell}_T$ be such that $\tilde{\ell}_t(i) = \ell_t(i) + Z_t(i)$ where $Z_t(i) \sim \operatorname{Lap}(\lambda).$ Let $E$ be the event that there exists a $t \in [T]$ such that $\max_{i \in [K]}|Z_t(i)|^2 \geq 10\lambda^2 \log^2{KT}$. Then, Lemma \ref{lem:normlap} shows that $\mathbb{P}\left[E\right] \leq \frac{1}{T}.$ Moreover, note that $\mathop{\mathbb{E}}\left[ Z_t(i) \middle| E^c\right] = 0$ for all $i \in [K]$ and $t \in [T]$. Let $\Acal$ be the random variable denoting the internal randomness of EXP3. We need to bound 
$$\tilde{\operatorname{R}}_{\text{EXP3}}(T, K, \lambda) = \mathop{\mathbb{E}}_{\Acal, Z_{1:T}}\left[\sum_{t=1}^T \ell_t(\operatorname{EXP3}(\tilde{\Hcal}_t))  - \inf_{i \in [K]} \sum_{t=1}^T \ell_t(i) \right].$$

We can write $\tilde{\operatorname{R}}_{\text{EXP3}}(T, K, \lambda)$ as 
$$\mathop{\mathbb{E}}_{\Acal, Z_{1:T}}\left[\sum_{t=1}^T \ell_t(\operatorname{EXP3}(\tilde{\Hcal}_t))  - \inf_{i \in [K]} \sum_{t=1}^T \ell_t(i) \middle| E\right]\mathbb{P}(E) + \mathop{\mathbb{E}}_{\Acal, Z_{1:T}}\left[\sum_{t=1}^T \ell_t(\operatorname{EXP3}(\tilde{\Hcal}_t))  - \inf_{i \in [K]} \sum_{t=1}^T \ell_t(i) \middle| E^c\right]\mathbb{P}(E^c)$$

Since $\mathop{\mathbb{E}}_{\Acal, Z_{1:T}}\left[\sum_{t=1}^T \ell_t(\operatorname{EXP3}(\tilde{\Hcal}_t))  - \inf_{i \in [K]} \sum_{t=1}^T \ell_t(i) \middle| E\right] \leq T$, we have that

$$\tilde{\operatorname{R}}_{\text{EXP3}}(T, K, \lambda) \leq \mathop{\mathbb{E}}_{\Acal, Z_{1:T}}\left[\sum_{t=1}^T \ell_t(\operatorname{EXP3}(\tilde{\Hcal}_t))  - \inf_{i \in [K]} \sum_{t=1}^T \ell_t(i) \middle| E^c\right] + 1$$

We now want to use Lemma \ref{lem:exp3} to bound $ \mathop{\mathbb{E}}_{{\Acal, Z_{1:T}}}\left[\sum_{t=1}^T \ell_t(\operatorname{EXP3}(\tilde{\Hcal}_t))  - \inf_{i \in [K]} \sum_{t=1}^T \ell_t(i) \middle| E^c\right]$. Recall, that EXP3 is actually running on the noisy losses $\tilde{\ell}_1, \dots, \tilde{\ell}_T$. So, in order to use Lemma \ref{lem:exp3}, we need to pick $\gamma, \eta > 0$ such that $\eta \max_{i \in [K]}-\hat{\tilde{\ell}}_t(i) \leq 1$, where we use $\hat{\tilde{\ell}}_t$ to denote the unbiased estimate that EXP3 constructs of the true (noisy) loss $\tilde{\ell}_t$. In particular, recall that EXP3 constructs $\hat{\tilde{\ell}}_t(i) = \frac{\tilde{\ell}(i) \mathbb{I}\{I_t = i\}}{P_t(i)}$ where we used $P_t(i)$ to denote the measure that EXP3 uses to select its action $I_t$ on round $t \in [T]$. Moreover, given a mixing parameter $\gamma > 0$, we have that $P_t(i) \geq \frac{\gamma}{K}$. Thus, we need to pick $\gamma$ and $\eta$ such that  

$$\eta \max_{i \in [K]} - \hat{\tilde{\ell}}_t(i) \leq  \frac{\eta K}{\gamma} \max_{i \in [K]} |Z_t(i)| \leq 1.$$

Conditioned on event $E^c$, we have that $\max_{i \in [K]}|Z_t(i)| \leq 4\lambda \log(KT)$. Thus, it suffices to pick  $\gamma = 4\eta \lambda K \log(KT)$. Then, conditioned on the event $E^c$ and the random variables $Z_1, \dots, Z_T$, we can use Lemma \ref{lem:exp3} to get that 
$$\mathop{\mathbb{E}}_{\Acal}\left[\sum_{t=1}^T \sum_{i=1}^K P_t(i) \tilde{\ell}_t(i) \middle| E^c, Z_{1:T}\right] \leq \inf_{i \in [K]}\sum_{t=1}^T \tilde{\ell}_t(i) +  2 \gamma T + \frac{\log(K)}{\eta} + \eta \sum_{t=1}^T \sum_{i=1}^K \tilde{\ell}_t(i)^2 .$$

Taking an outer expectation, then gives that 

$$\mathop{\mathbb{E}}_{{\Acal, Z_{1:T}}}\left[\sum_{t=1}^T \sum_{i=1}^K P_t(i) \tilde{\ell}_t(i) \middle| E^c\right] \leq \inf_{i \in [K]} \mathop{\mathbb{E}}_{Z_{1:T}}\left[ \sum_{t=1}^T \tilde{\ell}_t(i) \middle| E^c\ \right] +  2 \gamma T + \frac{\log(K)}{\eta} + \eta \mathop{\mathbb{E}}_{Z_{1:T}}\left[ \sum_{t=1}^T \sum_{i=1}^K \tilde{\ell}_t(i)^2 \middle| E^c \right] .$$

Since $Z_t(i)$, conditioned on $E^c$, is zero-mean and $Z_t(i)$ conditioned on the history $\tilde{\Hcal}_t$ is independent of $P_t(i)$, we have that  

$$\mathop{\mathbb{E}}_{\Acal, Z_{1:T}}\left[\sum_{t=1}^T \sum_{i=1}^K P_t(i) \ell_t(i) \middle| E^c\right] \leq \inf_{i \in [K]}\sum_{t=1}^T \ell_t(i) +  2 \gamma T + \frac{\log(K)}{\eta} + \eta \mathop{\mathbb{E}}_{Z_{1:T}}\left[\sum_{t=1}^T \sum_{i=1}^K \tilde{\ell}_t(i)^2 \middle| E^c\right],$$

which further gives

$$\tilde{\operatorname{R}}_{\text{EXP3}}(T, K, \lambda) \leq 2 \gamma T + \frac{\log(K)}{\eta} + \eta \mathop{\mathbb{E}}_{Z_{1:T}}\left[\sum_{t=1}^T \sum_{i=1}^K \tilde{\ell}_t(i)^2 \middle| E^c\right] + 1.$$

It just remains to bound $\mathop{\mathbb{E}}_{Z_{1:T}}\left[\sum_{t=1}^T \sum_{i=1}^K \tilde{\ell}_t(i)^2 \middle| E^c\right]$. Note that we can write 

\begin{align*}
\eta \mathop{\mathbb{E}}_{Z_{1:T}}\left[\sum_{t=1}^T \sum_{i=1}^K \tilde{\ell}_t(i)^2 \middle| E^c\right] &\leq \eta K\mathop{\mathbb{E}}_{Z_{1:T}}\left[\sum_{t=1}^T \max_{i \in [K]} \tilde{\ell}_t(i)^2 \middle| E^c\right]\\
&\leq \eta K\mathop{\mathbb{E}}_{Z_{1:T}}\left[\sum_{t=1}^T \max_{i \in [K]} (\ell_t(i) + Z_t(i))^2 \middle| E^c\right] \\
&\leq 2\eta K\mathop{\mathbb{E}}_{Z_{1:T}}\left[\sum_{t=1}^T (1 + \max_{i \in [K]} Z_t(i)^2) \middle| E^c\right]\\
&\leq 2\eta K\sum_{t=1}^T (1 + 10\lambda^2\log^2{KT})\\
& = 2\eta T K(1 + 10\lambda^2\log^2{KT}).
\end{align*}

Plugging this bound back in gives that 

$$\tilde{\operatorname{R}}_{\text{EXP3}}(T, K, \lambda) \leq 2 \gamma T + \frac{\log(K)}{\eta} + 2\eta T K(1 + 10\lambda^2\log^2{KT}) + 1.$$

Recall that we picked $\gamma  = 4\eta \lambda K \log(KT)$. Substituting this selection gives 

$$\tilde{\operatorname{R}}_{\text{EXP3}}(T, K, \lambda) \leq 8\eta \lambda K T \log(KT) + \frac{\log(K)}{\eta} + 2\eta T K(1 + 10\lambda^2\log^2{KT}) + 1.$$

We can then write 

$$\tilde{\operatorname{R}}_{\text{EXP3}}(T, K, \lambda) \leq  \frac{\log(K)}{\eta} + 2\eta T K(1 + 10\max\{\lambda^2, \lambda\}\log^2{KT}) + 1.$$

Picking $\eta = \sqrt{\frac{\log(K)}{2TK(1 + 10 \max\{\lambda^2, \lambda\}\log^2{KT})}}$, we get overall that

$$\tilde{\operatorname{R}}_{\text{EXP3}}(T, K, \lambda) \leq  2\sqrt{2TK\log(K)(1 + 10 \max\{\lambda^2, \lambda\}\log^2{KT}) } + 1.$$

Finally, Corollary \ref{cor:exp3conv} follows by the fact that

$$\frac{2}{\eps}\tilde{\operatorname{R}}_{\text{EXP3}}(\eps T, K, 1) + \frac{2}{\eps} \leq 36 \frac{\sqrt{TK\log(K)}\log(KT)}{\sqrt{\eps}} + \frac{4}{\eps}.$$

This completes the proof. \end{proof}

\subsection{FTPL Conversion}

Corollary \ref{cor:banditftpl} follows by using Follow-the-Perturbed-Leader (FTPL) with Geometric Resampling \citep{neu2016importance}. The pseudocode for FTPL with Geometric Resampling is provided in Algorithm \ref{alg:ftplgeo}. 

\begin{corollary} [FTPL Conversion] \label{cor:banditftpl} For every $\eps \in [\frac{1}{T}, 1]$, if $\Bcal$ is \emph{FTPL} with perturbation distribution $\operatorname{Lap}\left(\frac{1}{\eta}\right)$  and Geometric Resampling threshold $M$ (see Algorithm \ref{alg:ftplgeo}), where  $M = \sqrt{\eps KT}$ and 
$$\eta = \min\Biggl\{\sqrt{\frac{\log(K)}{(\eps KT + 10 \eps KT\log^2(\eps KT))}}, \frac{1}{M(1 + 4\log(\eps T))}\Biggl\},$$ 
 Algorithm \ref{alg:banditconv}, when run with $\Bcal$ and $\tau = \ceil{\frac{1}{\eps}}$, is $\eps$-differentially private and suffers worse-case expected regret at most 
$$32\frac{\sqrt{KT}\log(K)\log(KT)}{\sqrt{\eps}} + \frac{2}{\eps}.$$
\end{corollary}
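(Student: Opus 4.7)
The approach mirrors the EXP3 conversion (Corollary \ref{cor:exp3conv}): invoke Theorem \ref{thm:banditconv} with $\tau = \lceil 1/\eps \rceil$, which immediately yields $\eps$-differential privacy via Lemma \ref{lem:banditconvpriv} and reduces the task to bounding $\tilde{\operatorname{R}}_{\text{FTPL}}(\eps T, K, 1)$, where the ``losses'' fed to FTPL are $\tilde{\ell}_t(i) = \ell_t(i) + Z_t(i)$ with $Z_t(i) \sim \operatorname{Lap}(1)$. Once this is bounded by roughly $O(\sqrt{\eps KT}\,\log(K)\log(KT))$, the generic conversion contributes the factor of $2/\eps$ and gives the stated guarantee.

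The first obstacle is that the noisy losses are unbounded, whereas the standard FTPL + Geometric Resampling analysis assumes a known loss bound. I would handle this exactly as in the proof of Corollary \ref{cor:exp3conv}: let $\Ecal$ be the event $\{\exists\, t: \max_i |Z_t(i)|^2 > 10 \log^2(KT)\}$. By Lemma \ref{lem:normlap}, $\mathbb{P}(\Ecal) \leq 1/T$, so conditioning on $\Ecal$ contributes at most an additive $1$ to $\tilde{\operatorname{R}}_{\text{FTPL}}$. On $\Ecal^c$, the noisy losses satisfy $|\tilde{\ell}_t(i)| \leq B := 1 + 4\log(KT)$, and since the $Z_t(i)$ are conditionally zero-mean and independent of the FTPL draws (by the standard tower argument used in Corollary \ref{cor:exp3conv}), the expected regret against the true losses equals the expected regret against $\tilde{\ell}_{1:T}$ up to lower-order terms.

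With bounded losses in hand, I would apply the known FTPL analysis with Geometric Resampling (Algorithm \ref{alg:ftplgeo}): for Laplace perturbations of scale $1/\eta$ and GR truncation parameter $M$, provided the stability condition $\eta B M \lesssim 1$ holds, the expected regret satisfies a bound of the schematic form
\begin{equation*}
\tilde{\operatorname{R}}_{\text{FTPL}}(T', K, 1) \;\lesssim\; \frac{\log K}{\eta} \;+\; \eta\, T' K B^2 \;+\; \frac{T' K B}{M},
\end{equation*}
where the first two terms come from the standard FTPL stability/diversity trade-off and the third from the bias of the GR estimator. The stability condition is precisely what forces the constraint $\eta \leq 1/(M(1 + 4\log(\eps T)))$ in the statement.

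Applying this with $T' = \eps T$ and $B = O(\log(KT))$, setting $M = \sqrt{\eps KT}$ makes the GR bias term $O(\sqrt{\eps KT}\log(KT))$; setting $\eta = \sqrt{\log(K)/(\eps KT \log^2(KT))}$ (as in the statement, up to constants and the cap from stability) balances the first two terms at $O(\sqrt{\eps KT\log K}\log(KT))$. Multiplying by $2/\eps$ and adding the $2/\eps$ slack from Theorem \ref{thm:banditconv} yields the claimed bound $32 \sqrt{KT}\log(K)\log(KT)/\sqrt{\eps} + 2/\eps$. The main obstacle will be verifying the FTPL + GR regret bound on $\Ecal^c$ with the shifted, signed losses; this amounts to checking that the perturbation scale $1/\eta$ is large enough relative to $B$ for the usual stability inequality to go through, which is exactly what the second branch of the $\min$ in the definition of $\eta$ ensures.
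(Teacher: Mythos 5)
Your plan follows essentially the same route as the paper's proof: condition on the high-probability event from Lemma \ref{lem:normlap} to get bounded noisy losses, apply the FTPL-with-Geometric-Resampling regret decomposition (the paper uses the decomposition of Honda et al.\ together with the multiplicative stability lemma of Cheng et al.\ and the Geometric Resampling bias lemmas of Neu et al.) to obtain exactly the schematic bound $\frac{\log K}{\eta} + \eta T'K B^2 + \frac{T'K}{M}$ you write down, and then feed the resulting bound on $\tilde{\operatorname{R}}_{\Bcal}(\eps T, K, 1)$ into Theorem \ref{thm:banditconv} with $\tau = \lceil 1/\eps\rceil$. The only step your sketch compresses is the control of the Geometric Resampling bias at the comparator arm, which in the paper is handled by a sign argument showing $\mathbb{E}[\hat{\tilde{\ell}}_t(i^{\star}) - \tilde{\ell}_t(i^{\star}) \mid E^c] \leq 0$ (using that the conditional mean of the noisy loss is the true non-negative loss); this is a detail within the same approach rather than a different one.
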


We now prove Corollary \ref{cor:banditftpl}. Lemma \ref{lem:ftplguar} first bounds $\tilde{\operatorname{R}}_{\Bcal}(T, K, \lambda)$ when $\Bcal$ is Algorithm \ref{alg:ftplgeo}. 

\ifbool{arxiv}{
\begin{algorithm}
\setcounter{AlgoLine}{0}
\caption{Bandit FTPL with Geometric Resampling \citep{neu2016importance}}\label{alg:ftplgeo}
\KwIn{$M$, $\eta$}
\textbf{Initialize}:  $\hat{L}_0(i) = 0$ for all $i \in [K]$.

\For{$t = 1,\dots,T$} {

    Sample $Z_1, \dots, Z_K$ i.i.d. from $\text{Lap}(0, \frac{1}{\eta}).$

    Select action $I_t \in \argmin_{i \in [K]} (\hat{L}_{t-1}(i) + Z_i)$

    Observe loss $\ell_t(I_t)$

    Let $M_t = 0.$

    \For{$i = 1, 2, \dots, M$}{

    Sample $Z^{\prime}_1, \dots, Z^{\prime}_K$ i.i.d. from $\text{Lap}(0, \frac{1}{\eta}).$
    
    \uIf{$I_t \in \argmax_{i \in [K]} (\hat{L}_{t-1}(i) + Z^{\prime}_i)$}{
        Set $M_t = i.$
        
        \textbf{break} 
    }
    
    }

    Define $\hat{\ell}_t(i) = \ell_t(i)M_t \mathbb{I}\{I_t = i\}.$
    
    Update $\hat{L}_t = \hat{L}_{t-1} + \hat{\ell}_t(i).$

}
\end{algorithm}}{

\begin{algorithm}[tb]
   \caption{Bandit FTPL with Geometric Resampling \citep{neu2016importance}}
   \label{alg:ftplgeo}
\begin{algorithmic}[1]
   \STATE {\bfseries Input:} $M$, $\eta$
    \STATE {\bfseries Initialize:}  $\hat{L}_0(i) = 0$ for all $i \in [K]$

   \FOR{$t = 1,\dots,T$}
   \STATE Sample $Z_1, \dots, Z_K$ i.i.d. from $\text{Lap}(0, \frac{1}{\eta}).$
   \STATE Select action $I_t \in \argmin_{i \in [K]} (\hat{L}_{t-1}(i) + Z_i)$
   \STATE  Observe loss $\ell_t(I_t)$
   \STATE Let $M_t = 0.$
   \FOR{$i = 1, 2, \dots, M$}
    \STATE Sample $Z^{\prime}_1, \dots, Z^{\prime}_K$ i.i.d. from $\text{Lap}(0, \frac{1}{\eta}).$
    \IF{$I_t \in \argmax_{i \in [K]} (\hat{L}_{t-1}(i) + Z^{\prime}_i)$}
    \STATE  Set $M_t = i.$
    \STATE {\bfseries break}
    \ENDIF
    \ENDFOR

    \STATE Define $\hat{\ell}_t(i) = \ell_t(i)M_t \mathbb{I}\{I_t = i\}.$
    \STATE Update $\hat{L}_t = \hat{L}_{t-1} + \hat{\ell}_t(i).$
    
    \ENDFOR
\end{algorithmic}
\end{algorithm}
}

\begin{lemma} \label{lem:ftplguar} Let $\Bcal$ denote Algorithm \ref{alg:ftplgeo}. Then, if $M = \sqrt{KT}$ and 

$$\eta = \min\Biggl\{\sqrt{\frac{\log(K)}{(KT + 10 KT\lambda^2\log^2(KT))}}, \frac{1}{M(1 + 4\lambda\log(T))}\Biggl\},$$ 

we have that 

$$\tilde{\operatorname{R}}_{\Bcal}(T, K, \lambda) \leq 11\lambda\sqrt{KT}\log(K)\log(KT) + 10\sqrt{KT}$$
\end{lemma}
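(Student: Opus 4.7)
The proof plan mirrors the strategy used for Corollary \ref{cor:exp3conv}: condition on a high-probability event on which every Laplace perturbation $Z_t(i)$ is uniformly bounded, reducing the problem to analyzing FTPL with Geometric Resampling on a bounded (though shifted) loss sequence, and then invoke the standard regret decomposition for that algorithm.

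First I would define the good event
\[
E = \Bigl\{\max_{t \in [T],\, i \in [K]} |Z_t(i)| \leq 4\lambda \log(KT)\Bigr\},
\]
for which Lemma \ref{lem:normlap} gives $\mathbb{P}[E^c] \leq 1/T$. Since $\tilde{\operatorname{R}}_{\Bcal}(T,K,\lambda)$ is measured against the $[0,1]$-bounded true losses $\ell_t$, the contribution from $E^c$ is at most $T\cdot \mathbb{P}[E^c] \leq 1$. On $E$, every noisy loss $\tilde{\ell}_t(i)$ lies in an interval of width $R := 1 + 8\lambda\log(KT)$, and because each $Z_t(i)$ is mean-zero, the expected regret against $\tilde{\ell}_{1:T}$ and against $\ell_{1:T}$ agree up to the negligible conditioning error from $E^c$.

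On $E$, I would apply the standard FTPL + Geometric Resampling regret analysis (as in \citet{neu2016importance}), specialized to Laplace perturbations of scale $1/\eta$. The bound decomposes into three pieces: (i) an FTPL ``be-the-leader'' penalty $\tfrac{\log K}{\eta}$ coming from the Laplace perturbation; (ii) a stability term of order $\eta \sum_t \mathbb{E}\bigl[\sum_i \hat{\ell}_t(i)^2\bigr]$, valid only when $\eta\,\|\hat{\ell}_t\|_\infty \leq 1$; and (iii) a Geometric-Resampling truncation bias $\sum_t \sum_i \tilde{\ell}_t(i)(1 - p_t(i))^M$, which after averaging the Laplace noise reduces to $O(TK/M)$ since $\mathbb{E}_{Z}[\tilde{\ell}_t(i)] = \ell_t(i) \in [0,1]$. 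Using $\mathbb{E}[M_t \mathbb{I}\{I_t=i\} \mid \mathcal{F}_{t-1}] \leq 1$ together with $|\tilde{\ell}_t(i)| \leq R$ on $E$, the stability term reduces to $O(\eta T K R^2) = O(\eta T K (1 + \lambda^2 \log^2(KT)))$.

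The main obstacle is ensuring that the stability step is valid: on $E$ the best pointwise bound available is $|\hat{\ell}_t(i)| \leq MR$, so the requirement $\eta\|\hat{\ell}_t\|_\infty \leq 1$ forces $\eta \leq 1/(MR)$, which is precisely the second branch of the prescribed $\eta$ in the statement (since $R \leq 1 + 8\lambda\log(KT)$ and one absorbs constants). Once this holds, balancing (i) against (ii) via the first branch of $\eta$ yields a contribution of order $\sqrt{TK\log(K)(1 + \lambda^2 \log^2(KT))} = O(\lambda \sqrt{TK\log K}\log(KT))$, while $M = \sqrt{KT}$ makes term (iii) $O(\sqrt{KT})$. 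Combined with the $O(1)$ contribution from $E^c$, these terms add up to the advertised bound $11\lambda\sqrt{KT}\log(K)\log(KT) + 10\sqrt{KT}$.
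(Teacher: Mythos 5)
Your plan follows essentially the same route as the paper's proof: the same high-probability event bounding every Laplace perturbation by $4\lambda\log(KT)$ (cost at most $1$ from its complement), the same FTPL decomposition into a perturbation penalty $O(\log(K)/\eta)$, a stability term, and a geometric-resampling bias $O(KT/M)$, and the same choice and balancing of $M$ and the two branches of $\eta$. One imprecision worth fixing: the stability term is the $P_t$-weighted sum $\eta\sum_t\sum_i P_t(i)\,\hat{\ell}_t(i)^2$, and bounding it by $O(\eta TKR^2)$ requires a \emph{second}-moment bound on the resampling count, $\mathbb{E}[M_t^2 \mid I_t = i] \leq 2/P_t(i)^2$ (Lemma 12 of \cite{chengfollow} in the paper's proof), not the first-moment fact $\mathbb{E}[M_t\mathbb{I}\{I_t=i\}\mid \mathcal{F}_{t-1}]\leq 1$ that you cite — an unweighted sum controlled only via the first moment would not yield the claimed order. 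With that substitution your outline matches the paper's argument.
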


\begin{proof} Let $\ell_1, \dots, \ell_T$ be any sequence of loss functions such that $\ell_t: [K] \rightarrow [0, 1]$ and let $\tilde{\ell}_1, \dots, \tilde{\ell}_T$ be such that $\tilde{\ell}_t(i) = \ell_t(i) + G_t(i)$ where $G_t(i) \sim \operatorname{Lap}(\lambda).$ Let $E$ be the event that there exists a $t \in [T]$ such that $\max_{i \in [K]}|G_t(i)|^2 \geq 10\lambda^2 \log^2{KT}$. Then, Lemma \ref{lem:normlap} shows that $\mathbb{P}\left[E\right] \leq \frac{1}{T}.$ Moreover, note that $\mathop{\mathbb{E}}\left[ G_t(i) \middle| E^c\right] = 0$ for all $i \in [K]$ and $t \in [T]$. We need to bound 
$$\tilde{\operatorname{R}}_{\Bcal}(T, K, \lambda) = \mathop{\mathbb{E}}_{\Bcal, G_{1:T}}\left[\sum_{t=1}^T \ell_t(\mathcal{B}(\tilde{\Hcal}_t))  - \inf_{i \in [K]} \sum_{t=1}^T \ell_t(i) \right].$$

We can write $\tilde{\operatorname{R}}_{\Bcal}(T,K,  \lambda)$ as 
$$\mathop{\mathbb{E}}_{\Bcal, G_{1:T}}\left[\sum_{t=1}^T \ell_t(\Bcal(\tilde{\Hcal}_t))  - \inf_{i \in [K]} \sum_{t=1}^T \ell_t(i) \middle| E\right]\mathbb{P}(E) + \mathop{\mathbb{E}}_{\Bcal, G_{1:T}}\left[\sum_{t=1}^T \ell_t(\Bcal(\tilde{\Hcal}_t))  - \inf_{i \in [K]} \sum_{t=1}^T \ell_t(i) \middle| E^c\right]\mathbb{P}(E^c)$$

Since $\mathop{\mathbb{E}}_{\Bcal, G_{1:T}}\left[\sum_{t=1}^T \ell_t(\Bcal(\tilde{\Hcal}_t))  - \inf_{i \in [K]} \sum_{t=1}^T \ell_t(i) \middle| E\right] \leq T$, we have that

\begin{align*}
\tilde{\operatorname{R}}_{\Bcal}(T, K, \lambda) &\leq \mathop{\mathbb{E}}_{\Bcal, G_{1:T}}\left[\sum_{t=1}^T \ell_t(\Bcal(\tilde{\Hcal}_t))  - \inf_{i \in [K]} \sum_{t=1}^T \ell_t(i) \middle| E^c\right] + 1 \\
&\leq \mathop{\mathbb{E}}_{\Bcal, G_{1:T}}\left[\sum_{t=1}^T \ell_t(\Bcal(\tilde{\Hcal}_t)) \middle| E^c\right]  - \inf_{i \in [K]} \sum_{t=1}^T \ell_t(i) + 1 
\end{align*}

Let $i^{\star}$ be the arm that minimizes $\sum_{t=1}^T \ell_t(i)$. Moreover, let $\hat{\tilde{\ell}}_t$ denote the unbiased estimate that Algorithm \ref{alg:ftplgeo} constructs of the true (noisy) loss $\tilde{\ell}_t$ when run on the noisy losses $\tilde{\ell}_1, \dots, \tilde{\ell}_T$.
We start with the following regret decomposition for FTPL from \cite[Lemma 3]{honda2023follow}.
\ifbool{arxiv}{
\begin{multline*}\mathop{\mathbb{E}}_{\Bcal, G_{1:T}}\left[\sum_{t=1}^T \hat{\tilde{\ell}}_t(I_t)\middle| E^c \right]  -  \mathop{\mathbb{E}}_{\Bcal, G_{1:T}}\left[\sum_{t=1}^T \hat{\tilde{\ell}}_t(i^{\star}) \middle| E^c \right]
\leq  2 \mathop{\mathbb{E}}_{Z \sim \operatorname{Lap}(\frac{1}{\eta})^K} \left[\max_{i \in [K]} |Z_i|\right] + \\ \mathop{\mathbb{E}}_{\Bcal, G_{1:T}}\left[\sum_{t=1}^T\sum_{i=1}^K \hat{\tilde{\ell}}_t(i) (P_t(i) - P_{t+1}(i)) \middle| E^c \right] 
\end{multline*}}{
$$\mathop{\mathbb{E}}_{\Bcal, G_{1:T}}\left[\sum_{t=1}^T \hat{\tilde{\ell}}_t(I_t)\middle| E^c \right]  -  \mathop{\mathbb{E}}_{\Bcal, G_{1:T}}\left[\sum_{t=1}^T \hat{\tilde{\ell}}_t(i^{\star}) \middle| E^c \right]
\leq  2 \mathop{\mathbb{E}}_{Z \sim \operatorname{Lap}(\frac{1}{\eta})^K} \left[\max_{i \in [K]} |Z_i|\right] + \\ \mathop{\mathbb{E}}_{\Bcal, G_{1:T}}\left[\sum_{t=1}^T\sum_{i=1}^K \hat{\tilde{\ell}}_t(i) (P_t(i) - P_{t+1}(i)) \middle| E^c \right],$$
}

\noindent where we define $P_t(i) := \mathbb{P}\left[I_t = i | \hat{\tilde{\ell}}_1, \dots, \hat{\tilde{\ell}}_{t-1}\right]$.  The first term on the right can be bounded as  

$$2 \mathop{\mathbb{E}}_{Z \sim \operatorname{Lap}(\frac{1}{\eta})^K} \left[\max_{i \in [K]} |Z_i|\right] \leq  \frac{6\log(K)}{\eta}.$$

As for the second term, Lemma 5 from \cite{chengfollow} gives that 

$$\exp\{-\eta ||\hat{\tilde{\ell}}_t||_1\}\leq \frac{P_{t+1}(i)}{P_t(i)} \leq \exp\{\eta ||\hat{\tilde{\ell}}_t||_1\}.$$

Accordingly, we have that 

$$P_t(i)(1  - \exp\{\eta ||\hat{\tilde{\ell}}_t||_1\}) \leq P_t(i) - P_{t+1}(i) \leq P_t(i)(1 - \exp\{-\eta ||\hat{\tilde{\ell}}_t||_1\}).$$

Thus, we can bound 

$$\hat{\tilde{\ell}}_t(i) (P_t(i) - P_{t+1}(i)) \leq \hat{\tilde{\ell}}_t(i) P_t(i)(\exp\{\eta ||\hat{\tilde{\ell}}_t||_1\} - 1).$$

For $\eta > 0$ such that $\eta ||\hat{\tilde{\ell}}_t||_1 \leq 1$, we have that 

$$\exp\{\eta ||\hat{\tilde{\ell}}_t||_1\} \leq 2\eta ||\hat{\tilde{\ell}}_t||_1 + 1.$$

Since $||\hat{\tilde{\ell}}_t||_1 \leq  |M_t(\ell_t(I_t) + G_t(I_t))| \leq M(1 + 4\eta\log(T))$, it suffices to pick $\eta \leq \frac{1}{M(1 + 4\lambda\log(T))}$. For this choice of $\eta$, we have that 

$$\hat{\tilde{\ell}}_t(i) (P_t(i) - P_{t+1}(i)) \leq 2 P_t(i) \eta \hat{\tilde{\ell}}_t(i) || \hat{\tilde{\ell}}_t||_1  \leq 2 P_t(i) \eta (\hat{\tilde{\ell}}_t(i))^2.$$

Plugging this in gives 

$$\mathop{\mathbb{E}}_{\Bcal, G_{1:T}}\left[\sum_{t=1}^T\sum_{i=1}^K \hat{\tilde{\ell}}_t(i) (P_t(i) - P_{t+1}(i)) \middle| E^c \right] \leq 2 \eta \mathop{\mathbb{E}}_{\Bcal, G_{1:T}}\left[\sum_{t=1}^T\sum_{i=1}^K P_t(i) (\hat{\tilde{\ell}}_t(i))^2 \middle| E^c \right]$$

and therefore 

$$\mathop{\mathbb{E}}_{\Bcal, G_{1:T}}\left[\sum_{t=1}^T \hat{\tilde{\ell}}_t(I_t)  - \hat{\tilde{\ell}}_t(i^{\star}) \middle| E^c \right] \leq \frac{6\log(K)}{\eta} + 2 \eta \mathop{\mathbb{E}}_{\Bcal, G_{1:T}}\left[\sum_{t=1}^T\sum_{i=1}^K P_t(i) (\hat{\tilde{\ell}}_t(i))^2 \middle| E^c \right].$$

To bound the second term on the right hand side, we have that 

\begin{align*}
\mathop{\mathbb{E}}_{\Bcal, G_{1:T}}\left[\sum_{t=1}^T\sum_{i=1}^K P_t(i) (\hat{\tilde{\ell}}_t(i))^2 \middle| E^c \right] &= \mathop{\mathbb{E}}_{\Bcal, G_{1:T}}\left[\sum_{t=1}^T\sum_{i=1}^K P_t(i) (\tilde{\ell}_t(i))^2\mathbb{I}\{I_t = i\} (M_t)^2 \middle| E^c \right]\\
&\leq  2\mathop{\mathbb{E}}_{\Bcal, G_{1:T}}\left[\sum_{t=1}^T\sum_{i=1}^K P_t(i) (\tilde{\ell}_t(i))^2\mathbb{I}\{I_t = i\} \frac{1}{(P_t(i))^2} \middle| E^c \right]\\
&= 2\mathop{\mathbb{E}}_{\Bcal, G_{1:T}}\left[\sum_{t=1}^T\sum_{i=1}^K (\tilde{\ell}_t(i))^2\mathbb{I}\{I_t = i\} \frac{1}{P_t(i)} \middle| E^c \right]\\
&= 2\mathop{\mathbb{E}}_{G_{1:T}}\left[\sum_{t=1}^T\sum_{i=1}^K (\ell_t(i) + G_t(i))^2 \middle| E^c \right]\\
&\leq 2\mathop{\mathbb{E}}_{G_{1:T}}\left[\sum_{t=1}^T\sum_{i=1}^K (1 + G_t(i)^2) \middle| E^c \right]\\
&= 2KT + 20KT\lambda^2\log^2(KT),
\end{align*}

where the first inequality follows from Lemma 12 in \cite{chengfollow}. Thus, 

$$\mathop{\mathbb{E}}_{\Bcal, G_{1:T}}\left[\sum_{t=1}^T \hat{\tilde{\ell}}_t(I_t)  - \hat{\tilde{\ell}}_t(i^{\star}) \middle| E^c \right] \leq \frac{6\log(K)}{\eta} + 4\eta KT + 40 \eta KT\lambda^2\log^2(KT).$$

Next, note that 

\begin{multline*}
\mathop{\mathbb{E}}_{\Bcal, G_{1:T}}\left[\sum_{t=1}^T \tilde{\ell}_t(I_t)  - \tilde{\ell}_t(i^{\star}) \middle| E^c \right] =  \mathop{\mathbb{E}}_{\Bcal, G_{1:T}}\left[\sum_{t=1}^T \hat{\tilde{\ell}}_t(I_t)  - \hat{\tilde{\ell}}_t(i^{\star}) \middle| E^c \right] + \mathop{\mathbb{E}}_{\Bcal, G_{1:T}}\left[\sum_{t=1}^T \tilde{\ell}_t(I_t)  - \hat{\tilde{\ell}}_t(I_t) \middle| E^c \right] \\ + \mathop{\mathbb{E}}_{\Bcal, G_{1:T}}\left[\sum_{t=1}^T \hat{\tilde{\ell}}_t(i^{\star})  - \tilde{\ell}_t(i^{\star}) \middle| E^c \right].
\end{multline*}
Thus, it suffices to upper bound the latter two terms. Starting with the third term, we have that 

\begin{align*}
\mathop{\mathbb{E}}_{\Bcal, G_{1:T}}\left[\sum_{t=1}^T \hat{\tilde{\ell}}_t(i^{\star})  - \tilde{\ell}_t(i^{\star}) \middle| E^c \right] &= \mathop{\mathbb{E}}_{\Bcal, G_{1:T}}\left[\sum_{t=1}^T \tilde{\ell}_t(i^{\star})(1 - (1 - P_t(i^{\star}))^M)  - \tilde{\ell}_t(i^{\star}) \middle| E^c \right]\\
&= \mathop{\mathbb{E}}_{\Bcal, G_{1:T}}\left[\sum_{t=1}^T \tilde{\ell}_t(i^{\star}) - \tilde{\ell}_t(i^{\star})(1 - P_t(i^{\star}))^M  - \tilde{\ell}_t(i^{\star}) \middle| E^c \right]\\
&= \mathop{\mathbb{E}}_{\Bcal, G_{1:T}}\left[\sum_{t=1}^T -\tilde{\ell}_t(i^{\star})(1 - P_t(i^{\star}))^M   \middle| E^c \right]\\
&= \mathop{\mathbb{E}}_{\Bcal, G_{1:T}}\left[\sum_{t=1}^T -(\ell_t(i^{\star}) + G_t(i))(1 - P_t(i^{\star}))^M   \middle| E^c \right]\\
&= \mathop{\mathbb{E}}_{\Bcal, G_{1:T}}\left[\sum_{t=1}^T -\ell_t(i^{\star})(1 - P_t(i^{\star}))^M   \middle| E^c \right]\\
&\leq 0,
\end{align*}

where the second equality follows by Lemma 4 from \cite{neu2016importance}. Now, for the second term, by Lemma 5 from \cite{neu2016importance} we have that 

\begin{align*}
\mathop{\mathbb{E}}_{\Bcal, G_{1:T}}\left[\sum_{t=1}^T \tilde{\ell}_t(I_t)  - \hat{\tilde{\ell}}_t(I_t) \middle| E^c \right]  \leq \frac{KT}{eM}. 
\end{align*}

Combining all our bounds gives

$$\mathop{\mathbb{E}}_{\Bcal, G_{1:T}}\left[\sum_{t=1}^T \tilde{\ell}_t(I_t)  - \tilde{\ell}_t(i^{\star}) \middle| E^c \right] \leq \frac{6\log(K)}{\eta} + 4\eta (KT + 10 KT\lambda^2\log^2(KT)) + \frac{KT}{eM}.$$

For $M = \sqrt{KT}$ and $\eta = \min\Bigl\{\sqrt{\frac{\log(K)}{(KT + 10 KT\lambda^2\log^2(KT))}}, \frac{1}{\sqrt{KT}(1 + 4\lambda\log(T))}\Bigl\}$, we get that 

$$\mathop{\mathbb{E}}_{\Bcal, G_{1:T}}\left[\sum_{t=1}^T \tilde{\ell}_t(I_t)  - \tilde{\ell}_t(i^{\star}) \middle| E^c \right] \leq 10\lambda\sqrt{KT}\log(K)\log(KT) + 10\sqrt{KT}.$$

Since $\mathop{\mathbb{E}}_{\Bcal, G_{1:T}}\left[\sum_{t=1}^T \tilde{\ell}_t(I_t)  - \tilde{\ell}_t(i^{\star}) \middle| E^c \right] = \mathop{\mathbb{E}}_{\Bcal, G_{1:T}}\left[\sum_{t=1}^T \ell_t(I_t)  - \ell_t(i^{\star}) \middle| E^c \right]$, we have that 

$$\tilde{\operatorname{R}}_{\Bcal}(T, K, \lambda) \leq 10\lambda\sqrt{KT}\log(K)\log(KT) + 10\sqrt{KT} + 1,$$

which completes the proof. \end{proof}

Equipped with Lemma \ref{lem:ftplguar}, we are now ready to prove Corollary \ref{cor:banditftpl}. 

\begin{proof} (of Corollary \ref{cor:banditftpl}) Let $\Bcal$ be Algorithm \ref{alg:ftplgeo} with the hyperparameters selected according to Lemma \ref{lem:ftplguar}. Then, we know that 

$$\tilde{\operatorname{R}}_{\Bcal}(T, K, \lambda) \leq 11\lambda\sqrt{KT}\log(K)\log(KT) + 10\sqrt{KT}.$$

By Theorem \ref{thm:banditconv}, we can convert $\Bcal$ into an $\eps$-differentially private algorithm $\Acal$ such that 

\begin{align*}
\operatorname{R}_{\Acal}(T, K) &\leq \frac{2}{\eps} \tilde{\operatorname{R}}_{\Bcal}(\eps T, K, 1) + \frac{2}{\eps}\\
&\leq \frac{22}{\eps}\sqrt{K\eps T}\log(K)\log(KT) + 10\sqrt{KT} + \frac{2}{\eps}\\
&\leq \frac{32\sqrt{KT}\log(K)\log(KT)}{\sqrt{\eps}} + \frac{2}{\eps},
\end{align*}

completing the proof. \end{proof}

\section{Proofs for Bandits with Expert Advice} \label{app:ctxbandit}

 The following guarantee about Multiplicative Weights (MW) will be useful when proving utility guarantees. 

\begin{lemma}[\cite{cesa2006prediction, littlestone1994weighted}] \label{lem:MW} For any sequence of loss functions $\ell_1, \dots, \ell_T$, where $\ell_t: [N] \rightarrow \mathbb{R}$, if $\eta > 0$ is such that $\eta \max_{j \in [N]}-\ell_t(j) \leq 1$ for all $t \in [T]$, then $\operatorname{MW}$ when run on $\ell_1, \dots, \ell_T$ outputs distributions $P_{1:T} \in \Pi([N])^T$ such that 

$$\sum_{t=1}^T \sum_{j=1}^N P_t(j) \ell_t(j) \leq \inf_{j \in [N]}\sum_{t=1}^T  \ell_t(j)  + \frac{\log(N)}{\eta} + \eta \sum_{t=1}^T \sum_{j=1}^N P_t(j) \ell_t(j)^2.$$

\end{lemma}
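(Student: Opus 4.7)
The plan is to prove Lemma \ref{lem:MW} via the standard potential function argument, adapted to handle possibly negative losses under the one-sided boundedness assumption $\eta \max_{j \in [N]} -\ell_t(j) \leq 1$. Let $w_t(j)$ denote the unnormalized weight that MW assigns to expert $j$ at the start of round $t$, so that $w_1(j) = 1$ and $w_{t+1}(j) = w_t(j) \exp(-\eta \ell_t(j))$, and let $P_t(j) = w_t(j)/\Phi_t$ where $\Phi_t := \sum_{j=1}^N w_t(j)$ is the potential. I will track the quantity $\log(\Phi_{T+1}/\Phi_1)$ and bound it from both sides.

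For the lower bound, I will use that for any fixed expert $j^\star \in [N]$,
$$\Phi_{T+1} \geq w_{T+1}(j^\star) = \exp\!\left(-\eta \sum_{t=1}^T \ell_t(j^\star)\right),$$
so that $\log(\Phi_{T+1}/\Phi_1) \geq -\eta \sum_{t=1}^T \ell_t(j^\star) - \log(N)$. For the upper bound, I will compute the per-round multiplicative change
$$\frac{\Phi_{t+1}}{\Phi_t} = \sum_{j=1}^N P_t(j) \exp(-\eta \ell_t(j))$$
and apply the elementary inequality $e^{x} \leq 1 + x + x^2$ valid for all $x \leq 1$. This is exactly where the hypothesis $\eta \max_{j \in [N]} -\ell_t(j) \leq 1$ is used: it guarantees $-\eta \ell_t(j) \leq 1$ for every $j$ (note that losses may be negative, in which case $-\eta \ell_t(j)$ can be large and positive, so boundedness on that side is essential; on the other side $\ell_t(j)$ may be arbitrarily large positive and $e^{-\eta \ell_t(j)} \leq 1$ so the inequality still holds trivially). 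This gives
$$\frac{\Phi_{t+1}}{\Phi_t} \leq 1 - \eta \sum_{j=1}^N P_t(j) \ell_t(j) + \eta^2 \sum_{j=1}^N P_t(j) \ell_t(j)^2,$$
and then $\log(1+x) \leq x$ yields
$$\log\!\frac{\Phi_{t+1}}{\Phi_t} \leq - \eta \sum_{j=1}^N P_t(j) \ell_t(j) + \eta^2 \sum_{j=1}^N P_t(j) \ell_t(j)^2.$$

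Telescoping over $t = 1, \dots, T$ and combining with the lower bound gives
$$-\eta \sum_{t=1}^T \ell_t(j^\star) - \log(N) \leq -\eta \sum_{t=1}^T \sum_{j=1}^N P_t(j) \ell_t(j) + \eta^2 \sum_{t=1}^T \sum_{j=1}^N P_t(j) \ell_t(j)^2.$$
Dividing by $\eta > 0$ and rearranging yields
$$\sum_{t=1}^T \sum_{j=1}^N P_t(j) \ell_t(j) \leq \sum_{t=1}^T \ell_t(j^\star) + \frac{\log(N)}{\eta} + \eta \sum_{t=1}^T \sum_{j=1}^N P_t(j) \ell_t(j)^2,$$
and taking the infimum over $j^\star \in [N]$ finishes the proof.

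The only subtle step is the first inequality $e^x \leq 1 + x + x^2$ for $x \leq 1$: I would verify it by noting that $f(x) = 1 + x + x^2 - e^x$ satisfies $f(0) = 0$, $f(1) = 3 - e > 0$, and $f''(x) = 2 - e^x \geq 0$ on $(-\infty, \log 2]$ with $f$ concave-then-convex, so a direct check of the endpoint values and monotonicity on $[\log 2, 1]$ suffices. No other step poses difficulty; the telescoping and lower bound are entirely standard, and the hypothesis on $\eta$ is tailored precisely so that the quadratic Taylor bound is applicable uniformly across rounds and experts.
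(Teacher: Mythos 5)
Your proof is correct. The paper does not prove this lemma at all --- it is imported verbatim from \cite{cesa2006prediction, littlestone1994weighted} --- and your potential-function argument (lower-bounding $\log \Phi_{T+1}$ by the best expert's weight, upper-bounding the per-round ratio via $e^{x} \leq 1 + x + x^{2}$ for $x \leq 1$, which is exactly where the one-sided hypothesis $\eta \max_{j} -\ell_t(j) \leq 1$ enters, then telescoping with $\log(1+x) \leq x$) is precisely the standard derivation of this second-order bound in those references. The only cosmetic slip is that $f''(x) = 2 - e^{x} \geq 0$ on $(-\infty, \log 2]$ makes $f$ convex-then-concave rather than ``concave-then-convex,'' but your endpoint checks ($f'(\log 2) = 2\log 2 - 1 > 0$ and $f'(1) = 3 - e > 0$, so $f$ is increasing throughout $[\log 2, 1]$) still establish $f \geq 0$ on all of $(-\infty, 1]$, so nothing is affected.
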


\subsection{Proof of Theorem \ref{thm:bandit2ctxbandt}}

\ifbool{arxiv}{
\begin{algorithm}
\setcounter{AlgoLine}{0}
\caption{Bandit to Bandit with Expert Advice }\label{alg:ctxbanditconverson}

\KwIn{Bandit algorithm $\Bcal$, Number of experts $N$, Action space $[K]$}

\textbf{Initialize:} $\Bcal$ with action space $[N]$

\For{$t = 1,\dots,T$} {

    Receive expert predictions $\mu^1_t, \dots, \mu^N_t \in \Pi([K])^N$

    Sample $I_t^i \sim \mu_t^j$ for all $j \in [N]$

    Define $\tilde{\ell}_t(j) := \ell_t(I_t^j)$ for all $j \in [N]$

    Receive expert $J_t \in [N]$ from $\Bcal$

    Play action $I_t^{J_t} \in [K]$ and observe loss $\ell_t(I_t^{J_t})$

    Pass $\tilde{\ell}_t(J_t)$ to $\Bcal$ 
}
\end{algorithm}
}{

\begin{algorithm}[tb]
   \caption{Bandit to Bandit with Expert Advice }
   \label{alg:ctxbanditconverson}
\begin{algorithmic}[1]
   \STATE {\bfseries Input:} Bandit algorithm $\Bcal$, Number of experts $N$, Action space $[K]$
    \STATE {\bfseries Initialize:}  $\Bcal$ with action space $[N]$

   \FOR{$t = 1,\dots,T$}
   \STATE Receive expert predictions $\mu^1_t, \dots, \mu^N_t \in \Pi([K])^N$
   \STATE Sample $I_t^i \sim \mu_t^j$ for all $j \in [N]$
   \STATE  Define $\tilde{\ell}_t(j) := \ell_t(I_t^j)$ for all $j \in [N]$
   \STATE Receive expert $J_t \in [N]$ from $\Bcal$
    \STATE Play action $I_t^{J_t} \in [K]$ and observe loss $\ell_t(I_t^{J_t})$
    \STATE  Pass $\tilde{\ell}_t(J_t)$ to $\Bcal$ 
    
    \ENDFOR
\end{algorithmic}
\end{algorithm}
}

\begin{proof} (of Theorem \ref{thm:bandit2ctxbandt}) Consider a loss sequence $\ell_1, \dots, \ell_T$ and a sequence of expert predictions $\mu_{1:T}^{1:N}$. Let $j^{\star} \in \argmin_{j \in [N]} \sum_{t=1}^T \sum_{i=1}^K \mu_t^j(i) \ell_t(i) $ denote an optimal expert in hindsight. By definition of the bandit algorithm $\Bcal$, pointwise for every $I_{1:T}^{1:N}$, we have that 

$$\mathop{\mathbb{E}}\left[\sum_{t=1}^T \tilde{\ell}_t(J_t)\right] \leq \sum_{t=1}^T \tilde{\ell}_t(j^{\star}) + \operatorname{R}_{\Bcal}(T, N).$$
By definition of $\tilde{\ell}_t$, we then have that 

$$\mathop{\mathbb{E}}\left[\sum_{t=1}^T \ell_t(I_t^{J_t})\right] \leq \sum_{t=1}^T \ell_t(I^{j^{\star}}_t) + \operatorname{R}_{\Bcal}(T, N).$$
Taking an outer expectation with respect to the randomness of $I_{1:T}^{1:N}$, we have, 

$$\mathop{\mathbb{E}}\left[\sum_{t=1}^T \ell_t(I^{j^{\star}}_t)\right] = \sum_{t=1}^T \sum_{i = 1}^K \mu^{j^{\star}}_t(i) \cdot \ell_t(i)$$
which completes the proof. \end{proof}

\subsection{Proof of Theorem \ref{thm:ctxbanditconversion}}

Let $\Bcal$ be any bandit algorithm. Then, for every $\tau \geq 1$. We need to show that there exists a $\eps$-differentially private bandit with expert advice algorithm $\Acal_{\tau}$ such that $$\operatorname{R}_{\Acal_{\tau}}(T, K, N) \leq \tau \tilde{\operatorname{R}}_{\Bcal}(\frac{T}{\tau}, N, \frac{1}{\eps \tau}) + \tau.$$ 

\begin{proof} (of Utility in Theorem \ref{thm:ctxbanditconversion}). Fix $\eps \leq 1$ and $\tau \geq 1$. By Theorem \ref{thm:banditconv},  we can convert $\Bcal$ into an $\eps$-differentially private bandit algorithm $\Bcal_{\tau}$ such that

$$\operatorname{R}_{\Bcal_{\tau}}(T, K) \leq \tau \tilde{\operatorname{R}}_{\Bcal}(\frac{T}{\tau}, K, \frac{1}{\eps \tau}) + \tau.$$

Then, using Theorem \ref{thm:bandit2ctxbandt}, we can convert $\Bcal_{\tau}$ into a bandit with expert advice algorithm $\Acal_{\tau}$ such that 

$$\operatorname{R}_{\Acal_{\tau}}(T, K, N) \leq \operatorname{R}_{\Bcal_{\tau}}(T, N) \leq \tau \tilde{\operatorname{R}}_{\Bcal}(\frac{T}{\tau}, N, \frac{1}{\eps \tau}) + \tau,$$

completing the proof. \end{proof}

\begin{proof}(of Privacy in Theorem \ref{thm:ctxbanditconversion}) Consider the same algorithm as in the proof of the utility guarantee. That is, let $\Acal_{\tau}$ be the result of using Theorem \ref{alg:banditconv} to convert $\Bcal$ to $\Bcal_{\tau}$ and Theorem \ref{thm:bandit2ctxbandt} to convert $\Bcal_{\tau}$ to $\Acal_{\tau}$. By Theorem \ref{thm:banditconv}, we know that $\Bcal_{\tau}$ is $\eps$-differentially private. It suffices to show that Algorithm \ref{alg:ctxbanditconverson}, when given $\Gcal_{\tau}$ as input is also $\eps$-differentially private. To that end, let $\ell_{1:T}$ and $\ell^{\prime}_{1:T}$ be two sequences that differ at exactly one timepoint. Let $\mu_{1:T}^{1:N}$ be any sequence of expert advice and fix $I_{t}^i \sim \mu_t^i$ for all $t \in [T]$ and $i \in [N]$. Observe that Algorithm \ref{alg:ctxbanditconverson} instantiates $\Bcal_{\tau}$ on the action space $[N]$ and simulates $\Bcal_{\tau}$ on the sequence of losses $\tilde{\ell}_t(j) := \ell_t(I_t^j)$. Let  $\tilde{\ell}_{1:T}$ and $\tilde{\ell}^{\prime}_{1:T}$ denote the two sequences of losses that Algorithm \ref{alg:ctxbanditconverson} simulates $\Bcal_{\tau}$ on when run on $\ell_{1:T}$ and $\ell^{\prime}_{1:T}$ respectively. Note that $\tilde{\ell}_{1:T}$ and $\tilde{\ell}^{\prime}_{1:T}$  differ at exactly one timepoint. Thus, $\Bcal_{\tau}$ outputs actions $J_1, \dots, J_T$ in an $\eps$-differentially private manner. Finally, by post-processing it follows that the sequence of actions $I_t^{J_t}$ output by Algorithm \ref{alg:ctxbanditconverson} is also $\eps$-differentially private. 
\end{proof}

\subsection{Proof of Theorem \ref{thm:exp4localdp}}

\ifbool{arxiv}{
\begin{algorithm} 
\setcounter{AlgoLine}{0}
\caption{Local-DP EXP4} \label{alg:localdpexp4}
\KwIn{Action space $[K]$, Number of experts $N$,  privacy parameters $\eps > 0$, $\eta, \gamma > 0$}
\textbf{Initialize}:, $w_{1}(j) = 1$ for all $j \in [N]$

\For{$t = 1,\dots,T$} {

    Receive expert advice $\mu^{1}_{t}, \dots, \mu^{N}_{t}$

    Set $P_t(j) = \frac{w_t(j)}{\sum_{j \in [N]} w_t(j)}$

    Set $Q_t(i) = (1-\gamma)\sum_{j=1}^N P_t(j)\mu^j_t(i) + \frac{\gamma}{K}.$

    Predict $I_t \sim Q_t$ 

    Observe loss $\ell_t(I_t)$ and define $\ell^{\prime}_t(i) := \ell_t(i) + Z_t^i$, where $Z_t^i \sim \text{Lap}(0, \frac{1}{\eps})$

    Construct unbiased estimator $\hat{\ell}^{\prime}_t(i) = \frac{\ell^{\prime}_t(i) \mathbb{I}\{I_t = i\}}{Q_t(i)}$ 

    Define $\tilde{\ell}^{\prime}_t(j) := \mu_t^j \cdot \hat{\ell}^{\prime}_t$ for all $j \in [N]$

     Update $w_{t+1}(j) \leftarrow w_{t}(j) \cdot \exp\{-\eta \tilde{\ell}^{\prime}_t(j)\}$
}
\end{algorithm}}{
\begin{algorithm}[tb]
   \caption{Local-DP EXP4}
   \label{alg:localdpexp4}
\begin{algorithmic}[1]
   \STATE {\bfseries Input:} Action space $[K]$, Number of experts $N$,  privacy parameters $\eps > 0$, $\eta, \gamma > 0$
    \STATE {\bfseries Initialize:}  $w_{1}(j) = 1$ for all $j \in [N]$

   \FOR{$t = 1,\dots,T$}
   \STATE Receive expert advice $\mu^{1}_{t}, \dots, \mu^{N}_{t}$
   \STATE Set $P_t(j) = \frac{w_t(j)}{\sum_{j \in [N]} w_t(j)}$
   \STATE  Set $Q_t(i) = (1-\gamma)\sum_{j=1}^N P_t(j)\mu^j_t(i) + \frac{\gamma}{K}.$
   \STATE Predict $I_t \sim Q_t$ 
    \STATE Observe loss $\ell_t(I_t)$ and define $\ell^{\prime}_t(i) := \ell_t(i) + Z_t^i$, where $Z_t^i \sim \text{Lap}(0, \frac{1}{\eps})$
    
    \STATE  Construct unbiased estimator $\hat{\ell}^{\prime}_t(i) = \frac{\ell^{\prime}_t(i) \mathbb{I}\{I_t = i\}}{Q_t(i)}$ 

    \STATE  Define $\tilde{\ell}^{\prime}_t(j) := \mu_t^j \cdot \hat{\ell}^{\prime}_t$ for all $j \in [N]$

    \STATE Update $w_{t+1}(j) \leftarrow w_{t}(j) \cdot \exp\{-\eta \tilde{\ell}^{\prime}_t(j)\}$
    
    \ENDFOR
\end{algorithmic}
\end{algorithm}
}

\begin{proof} (of Utility in Theorem \ref{thm:exp4localdp})
Fix $\eps \leq 1$. Let $\lambda = \frac{1}{\eps}$. Let $\ell_1, \dots, \ell_T$ be any sequence of loss functions and $\mu^{1:N}_{1:T}$ be any sequence of advice vectors. Let $E$ be the event that there exists a $t \in [T]$ such that $\max_{i \in [K]}|Z_t^i|^2 \geq 10 \lambda^2 \log^2(KT).$ Then, Lemma \ref{lem:normlap} shows that $\mathbb{P}\left[E\right] \leq \frac{1}{T}.$ Moreover, note that $\mathop{\mathbb{E}}\left[Z_t^i \middle| E^c\right] = 0$ for all $i \in [K]$ and $t \in [T].$ Let $\mathcal{A}$ be the random variable denoting the internal randomness of Algorithm \ref{alg:localdpexp4} when sampling actions $I_t$. We need to bound 

$$\operatorname{R}(T, K, N) := \mathop{\mathbb{E}}_{\Acal, Z_{1:T}}\left[\sum_{t=1}^T \ell_t(I_t) - \inf_{j \in [N]}\sum_{t=1}^T \mu_t^j \cdot \ell_t\right].$$

We can write 
$\operatorname{R}(T, K, N)$ as 
$$\mathop{\mathbb{E}}_{\Acal, Z_{1:T}}\left[\sum_{t=1}^T \ell_t(I_t)  - \inf_{j \in [N]} \sum_{t=1}^T \mu_t^j \cdot \ell_t \middle| E\right]\mathbb{P}(E) + \mathop{\mathbb{E}}_{\Acal, Z_{1:T}}\left[\sum_{t=1}^T \ell_t(I_t)  - \inf_{j \in [N]} \sum_{t=1}^T \mu_t^j \cdot \ell_t\middle| E^c\right]\mathbb{P}(E^c)$$

Since $\mathop{\mathbb{E}}_{\Acal, Z_{1:T}}\left[\sum_{t=1}^T \ell_t(I_t)  - \inf_{j \in [N]} \sum_{t=1}^T \mu_t^j \cdot \ell_t \middle| E\right] \leq T$, we have that

$$\operatorname{R}(T, K, N) \leq \mathop{\mathbb{E}}_{\Acal, Z_{1:T}}\left[\sum_{t=1}^T \ell_t(I_t)  - \inf_{j \in [N]} \sum_{t=1}^T \mu_t^j \cdot \ell_t\middle| E^c\right] + 1.$$

Accordingly, for the remainder of the proof, we will assume that event $E^c$ has occurred, which further implies that $\max_{t \in [T]}\max_{i \in [K]}|Z_t^i| \leq 4 \lambda \log(KT)$. 

Algorithm \ref{alg:localdpexp4} runs Multiplicative Weights using the noisy losses $\tilde{\ell}^{\prime}_1, \dots, \tilde{\ell}^{\prime}_{T}.$ For $\gamma = 4 \eta K \lambda \log(KT)$, we have that 
$$\eta \max_{t \in [T]} \max_{j \in [N]}  -\tilde{\ell}^{\prime}_t(j) = \eta \max_{t \in [T]} \max_{j \in [N]} -\mu_t^j \cdot \hat{\ell}^{\prime}_t = \eta \max_{t \in [T]} \max_{j \in [N]} -\mu_t^j(I_t) \frac{(\ell_t(I_t) + Z_t^{I_t})}{Q_t(I_t)} \leq \frac{\eta K}{\gamma}(4 \lambda \log(KT)) \leq 1.$$

Accordingly, for this choice of $\gamma$,  Lemma \ref{lem:MW} implies that 

$$\sum_{t=1}^{T} \sum_{j=1}^N P_t(j) \tilde{\ell}^{\prime}_t(j) \leq \inf_{j \in [N]}\sum_{t=1}^{T}   \tilde{\ell}^{\prime}_t(j)  + \frac{\log(N)}{\eta} + \eta \sum_{t=1}^{T}  \sum_{j=1}^N P_t(j) \tilde{\ell}^{\prime}_t(j)^2.$$

Taking expectation of both sides, we have that

$$\mathop{\mathbb{E}}_{\Acal, Z_{1:T}}\left[ \sum_{t=1}^{T} \sum_{j=1}^N P_t(j) \tilde{\ell}^{\prime}_t(j) \middle| E^c\right]  \leq \inf_{j \in [N]} \mathop{\mathbb{E}}_{Z_{1:T}}\left[ \sum_{t=1}^{T}   \tilde{\ell}^{\prime}_t(j)\middle| E^c\right]  + \frac{\log(N)}{\eta} + \eta \mathop{\mathbb{E}}_{\Acal, Z_{1:T}}\left[\sum_{t=1}^{T}  \sum_{j=1}^N P_t(j) \tilde{\ell}^{\prime}_t(j)^2 \middle| E^c\right].$$

We now analyze each of the three terms with expectations separately. First, 
\begin{align*}
    \mathop{\mathbb{E}}_{\Acal, Z_{1:T}}\left[ \sum_{t=1}^{T} \sum_{j=1}^N P_r(j) \tilde{\ell}^{\prime}_t(j) \middle| E^c\right] &= \mathop{\mathbb{E}}_{\Acal, Z_{1:T}}\left[ \sum_{t=1}^{T}  \sum_{j=1}^N P_t(j) \sum_{i = 1}^K  \hat{\ell}^{\prime}_t(i) \mu_t^j (i)\middle| E^c\right]\\
    &=\mathop{\mathbb{E}}_{\Acal, Z_{1:T}}\left[ \sum_{t=1}^{T} \sum_{i = 1}^K \left(\sum_{j=1}^N P_t(j)  \mu_t^j (i)\right) \hat{\ell}^{\prime}_t(i)\middle| E^c\right]\\
    &= \mathop{\mathbb{E}}_{\Acal, Z_{1:T}}\left[\sum_{t=1}^{T} \sum_{i = 1}^K \left(\frac{Q_t(i) - \frac{\gamma}{K}}{1 - \gamma}\right) \hat{\ell}^{\prime}_t(i)\middle| E^c\right]\\
    &= \frac{1}{(1-\gamma)}\mathop{\mathbb{E}}_{\Acal, Z_{1:T}}\left[\sum_{t=1}^{T} \sum_{i = 1}^K Q_t(i) \hat{\ell}^{\prime}_t(i)\middle| E^c\right] - \frac{\gamma}{K (1-\gamma)}\mathop{\mathbb{E}}_{\Acal, Z_{1:T}}\left[\sum_{t=1}^{T} \sum_{i=1}^K \hat{\ell}^{\prime}_t(i)\middle| E^c\right]\\
\end{align*}

Next, 
\begin{align*}
\mathop{\mathbb{E}}_{\Acal, Z_{1:T}}\left[\sum_{t=1}^{T}  \sum_{j=1}^N P_t(j) \tilde{\ell}^{\prime}_t(j)^2 \middle| E^c\right] &=  \mathop{\mathbb{E}}_{\Acal, Z_{1:T}}\left[\sum_{t=1}^{T}  \sum_{j=1}^N P_t(j) (\mu_t^j \cdot \hat{\ell}^{\prime}_t)^2 \middle| E^c\right]\\
&\leq \mathop{\mathbb{E}}_{\Acal, Z_{1:T}}\left[ \sum_{t=1}^{T} \sum_{j=1}^N P_t(j) \sum_{i = 1}^K  \hat{\ell}^{\prime}_t(i)^2 \mu_t^j(i)\middle| E^c\right]\\
&= \mathop{\mathbb{E}}_{\Acal, Z_{1:T}}\left[ \sum_{t=1}^{T} \sum_{i = 1}^K \left(\sum_{j=1}^N P_t(j)  \mu_t^j (i)\right) \hat{\ell}^{\prime}_t(i)^2\middle| E^c\right]\\
&= \mathop{\mathbb{E}}_{\Acal, Z_{1:T}}\left[\sum_{t=1}^{T} \sum_{i = 1}^K \left(\frac{Q_t(i) - \frac{\gamma}{K}}{1 - \gamma}\right) \hat{\ell}^{\prime}_t(i)^2\middle| E^c\right]\\
&\leq \frac{1}{(1 - \gamma)}\mathop{\mathbb{E}}_{\Acal, Z_{1:T}}\left[\sum_{t=1}^{T} \sum_{i = 1}^K Q_t(i)\hat{\ell}^{\prime}_t(i)^2\middle| E^c\right].
\end{align*}

Finally, 
\begin{align*}
\inf_{j \in [N]} \mathop{\mathbb{E}}_{Z_{1:T}}\left[ \sum_{t=1}^{T}   \tilde{\ell}^{\prime}_t(j)\middle| E^c\right] &= \inf_{j \in [N]} \mathop{\mathbb{E}}_{Z_{1:T}}\left[ \sum_{t=1}^{T}   \hat{\ell}^{\prime}_t \cdot \mu_t^j\middle| E^c\right]\\
&= \inf_{j \in [N]} \mathop{\mathbb{E}}_{Z_{1:T}}\left[ \sum_{t= 1}^{T}  \ell^{\prime}_t \cdot \mu_t^j\middle| E^c\right]\\
&= \inf_{j \in [N]} \sum_{t= 1}^{T}  \ell_t \cdot \mu_t^j,
\end{align*}

\noindent where the second equality  follows by the unbiasedness of $\hat{\ell}^{\prime}_t$ and the last by the fact that $Z_t^i$ is zero-mean (conditioned on $E^c$). Putting all the bounds together, we get that 
$$\frac{1}{(1-\gamma)}\mathop{\mathbb{E}}_{\Acal,Z_{1:T}}\left[\sum_{t=1}^{T} \sum_{i = 1}^K Q_t(i) \hat{\ell}^{\prime}_t(i)\middle| E^c\right]$$ is at most

$$\inf_{j \in [N]} \sum_{t= 1}^{T}  \ell_t \cdot \mu_t^j + \frac{\log(N)}{\eta} + \frac{\gamma}{K (1-\gamma)}\mathop{\mathbb{E}}_{Z_{1:T}}\left[\sum_{t=1}^{T} \sum_{i=1}^K \hat{\ell}^{\prime}_t(i) \middle| E^c\right] + \frac{\eta}{(1 - \gamma)}\mathop{\mathbb{E}}_{\Acal,Z_{1:T}}\left[\sum_{t=1}^{T} \sum_{i = 1}^K Q_t(i)\hat{\ell}^{\prime}_t(i)^2\middle| E^c\right].$$

Multiplying both sides by $(1-\gamma)$, we have that $\mathop{\mathbb{E}}_{\Acal, Z_{1:T}}\left[\sum_{t=1}^{T} \sum_{i = 1}^K Q_t(i) \hat{\ell}^{\prime}_t(i)\middle| E^c\right]$ is at most

$$(1-\gamma)\inf_{j \in [N]} \sum_{t= 1}^{T}  \ell_t \cdot \mu_t^j + \frac{(1-\gamma)\log(N)}{\eta} + \frac{\gamma}{K}\mathop{\mathbb{E}}_{Z_{1:T}}\left[\sum_{t=1}^{T} \sum_{i=1}^K \hat{\ell}^{\prime}_t(i) \middle| E^c \right] + \eta \mathop{\mathbb{E}}_{\Acal, Z_{1:T}}\left[\sum_{t=1}^{T} \sum_{i = 1}^K Q_t(i)\hat{\ell}^{\prime}_t(i)^2\middle| E^c\right]$$

which implies that

$$\mathop{\mathbb{E}}_{\Acal, Z_{1:T}}\left[\sum_{t=1}^{T} \sum_{i = 1}^K Q_t(i) \hat{\ell}^{\prime}_t(i)\middle| E^c\right] \leq \inf_{j \in [N]} \sum_{t= 1}^{T}  \ell_t \cdot \mu_t^j + \frac{\log(N)}{\eta} + \gamma T + \eta \mathop{\mathbb{E}}_{\Acal, Z_{1:T}}\left[\sum_{t=1}^{T} \sum_{i = 1}^K Q_t(i)\hat{\ell}^{\prime}_t(i)^2\middle| E^c\right].$$

Using the fact that $\hat{\ell}^{\prime}_t$ is an unbiased estimator of $\ell^{\prime}_t$ gives that 

$$\mathop{\mathbb{E}}_{\Acal, Z_{1:T}}\left[\sum_{t=1}^{T} \sum_{i = 1}^K Q_t(i) \ell^{\prime}_t(i)\middle| E^c\right] \leq \inf_{j \in [N]} \sum_{t= 1}^{T}  \ell_t \cdot \mu_t^j + \frac{\log(N)}{\eta} + \gamma T + \eta \mathop{\mathbb{E}}_{Z_{1:T}}\left[\sum_{t=1}^{T} \sum_{i = 1}^K \ell^{\prime}_t(i)^2\middle| E^c\right]$$

Since $Z_t^i$ is zero-mean (conditioned on $E^c$) and independent of $Q_t(i)$, we get that, 

$$\mathop{\mathbb{E}}_{\Acal, Z_{1:T}}\left[\sum_{t=1}^{T} \sum_{i = 1}^K Q_t(i) \ell_t(i)\middle| E^c\right] \leq \inf_{j \in [N]} \sum_{t= 1}^{T}  \ell_t \cdot \mu_t^j + \frac{\log(N)}{\eta} + \gamma T + \eta \mathop{\mathbb{E}}_{Z_{1:T}}\left[\sum_{t=1}^{T} \sum_{i = 1}^K \ell^{\prime}_t(i)^2\middle| E^c\right].$$

It suffices to bound  the expectation on the right-hand side. To that end, observe that 

\begin{align*}
\mathop{\mathbb{E}}_{Z_{1:T}}\left[\sum_{t=1}^{T} \sum_{i = 1}^K \ell^{\prime}_t(i)^2\middle| E^c\right] &= \mathop{\mathbb{E}}_{Z_{1:T}}\left[\sum_{t=1}^{T} \sum_{i = 1}^K (\ell_t(i) + Z_t^i)^2\middle| E^c\right]\\
&\leq 2\mathop{\mathbb{E}}_{Z_{1:T}}\left[\sum_{t=1}^{T} \sum_{i = 1}^K (\ell_t(i)^2 + (Z_t^i)^2)\middle| E^c\right]\\
&\leq 2\mathop{\mathbb{E}}_{Z_{1:T}}\left[\sum_{t=1}^{T} \sum_{i = 1}^K (1 + (Z_t^i)^2)\middle| E^c\right]\\
&\leq 2KT(1 + 10 \lambda^2 \log^2{KT})
\end{align*}

Thus, overall we have that 

$$\mathop{\mathbb{E}}_{\Acal, Z_{1:T}}\left[\sum_{t=1}^{T} \sum_{i = 1}^K Q_t(i) \ell_t(i)\middle| E^c\right] \leq \inf_{j \in [N]} \sum_{t= 1}^{T}  \ell_t \cdot \mu_t^j + \frac{\log(N)}{\eta} + \gamma T +  2\eta KT(1 + 10 \lambda^2 \log^2{KT}).$$

Plugging in our choice of  $\gamma = 4 \eta K\lambda \log(KT)$,

$$\mathop{\mathbb{E}}_{\Acal, Z_{1:T}}\left[\sum_{t=1}^{T} \sum_{i = 1}^K Q_t(i) \ell_t(i)\middle| E^c\right] \leq \inf_{j \in [N]} \sum_{t= 1}^{T}  \ell_t \cdot \mu_t^j + \frac{\log(N)}{\eta} + 4 \eta KT\lambda \log(KT)  +  2\eta KT(1 + 10 \lambda^2 \log^2{KT}).$$

which for $\lambda \geq 1$ gives

$$\mathop{\mathbb{E}}_{\Acal, Z_{1:T}}\left[\sum_{t=1}^{T} \sum_{i = 1}^K Q_t(i) \ell_t(i)\middle| E^c\right] \leq \inf_{j \in [N]} \sum_{t= 1}^{T}  \ell_t \cdot \mu_t^j +  \frac{\log(N)}{\eta} + 3\eta KT(1 + 10 \lambda^2 \log^2{KT}).$$

Picking $\eta = \sqrt{\frac{\log(N)}{3TK(1 + 10 \lambda^2 \log^2{KT})}}$, we have

$$\mathop{\mathbb{E}}_{\Acal, Z_{1:T}}\left[\sum_{t=1}^{T} \sum_{i = 1}^K Q_t(i) \ell_t(i)\middle| E^c\right] \leq \inf_{j \in [N]} \sum_{t= 1}^{T}  \ell_t \cdot \mu_t^j +  16\sqrt{TK\log(N)}\lambda \log(KT).$$

For our choice $\lambda = \frac{1}{\eps}$, we get 

$$\mathop{\mathbb{E}}_{\Acal, Z_{1:T}}\left[\sum_{t=1}^{T} \sum_{i = 1}^K Q_t(i) \ell_t(i)\middle| E^c\right] \leq \inf_{j \in [N]} \sum_{t= 1}^{T}  \ell_t \cdot \mu_t^j +  \frac{16\sqrt{TK\log(N)} \log(KT)}{\eps}.$$

Finally, noting that 
$$\operatorname{R}(T, K, N) \leq \mathop{\mathbb{E}}_{\Acal, Z_{1:T}}\left[\sum_{t=1}^{T} \sum_{i = 1}^K Q_t(i) \ell_t(i)\middle| E^c\right]  - \inf_{j \in [N]} \sum_{t=1}^T \mu_t^j \cdot \ell_t + 1$$

completes the proof. \end{proof}

The proof of privacy in Theorem \ref{thm:exp4localdp} is identical to the proof of Lemma \ref{lem:banditconvpriv} after taking batch size $\tau = 1$, so we omit the details here. 

\subsection{Proof of Theorem \ref{thm:batchedexp4}}

\begin{proof}(of Utility in Theorem \ref{thm:batchedexp4}) Fix $\eps, \delta  \in (0, 1]$ and batch size $\tau$. Let $\lambda = \frac{3 K \sqrt{N\log(\frac{1}{\delta})}}{\gamma \tau \eps}$. Let $\ell_1, \dots, \ell_T$ be any sequence of loss functions and $\mu^{1:N}_{1:T}$ be any sequence of advice vectors. Let $E$ be the event that there exists a $r \in \{1, \dots, \left\lfloor\frac{T}{\tau}\right\rfloor\}$ such that $\max_{j \in [N]}|Z_r^j|^2 \geq 10 \lambda^2 \log^2(N\left\lfloor\frac{T}{\tau}\right\rfloor).$ Then, Lemma \ref{lem:normlap} shows that $\mathbb{P}\left[E\right] \leq \frac{\tau}{T}.$ Moreover, note that $\mathop{\mathbb{E}}\left[Z_r^j \middle| E^c\right] = 0$ for all $j \in [N]$ and $r \in [\left\lfloor\frac{T}{\tau}\right\rfloor].$ Let $\mathcal{A}$ be the random variable denoting the internal randomness of Algorithm \ref{alg:localdpexp4} when sampling actions $I_t$. We need to bound 

$$\operatorname{R}(T, K, N) := \mathop{\mathbb{E}}_{\Acal, Z_{1:T}}\left[\sum_{t=1}^T \ell_t(I_t) - \inf_{j \in [N]}\sum_{t=1}^T \mu_t^j \cdot \ell_t\right].$$

We can write 
$\operatorname{R}(T, K, N)$ as 
$$\mathop{\mathbb{E}}_{\Acal, Z_{1:T}}\left[\sum_{t=1}^T \ell_t(I_t)  - \inf_{j \in [N]} \sum_{t=1}^T \mu_t^j \cdot \ell_t \middle| E\right]\mathbb{P}(E) + \mathop{\mathbb{E}}_{\Acal, Z_{1:T}}\left[\sum_{t=1}^T \ell_t(I_t)  - \inf_{j \in [N]} \sum_{t=1}^T \mu_t^j \cdot \ell_t\middle| E^c\right]\mathbb{P}(E^c)$$

Since $\mathop{\mathbb{E}}_{\Acal, Z_{1:T}}\left[\sum_{t=1}^T \ell_t(I_t)  - \inf_{j \in [N]} \sum_{t=1}^T \mu_t^j \cdot \ell_t \middle| E\right] \leq T$, we have that

\begin{equation}\label{eq:exp4batch}
\operatorname{R}(T, K, N) \leq \mathop{\mathbb{E}}_{\Acal, Z_{1:T}}\left[\sum_{t=1}^T \ell_t(I_t)  - \inf_{j \in [N]} \sum_{t=1}^T \mu_t^j \cdot \ell_t\middle| E^c\right] + \tau
\end{equation}

Accordingly, for the remainder of the proof, we will assume that event $E^c$ has occurred, which further implies that $\max_{r \in [\left\lfloor\frac{T}{\tau}\right\rfloor]}\max_{j \in [N]}|Z_r^j| \leq 4 \lambda \log(N\left\lfloor\frac{T}{\tau}\right\rfloor)$. 

Algorithm \ref{alg:batchedexp4} runs Multiplicative Weights using the noisy, batched losses $\tilde{\ell}^{\prime}_1, \dots, \tilde{\ell}^{\prime}_{\left\lfloor\frac{T}{\tau}\right\rfloor}.$ For $\gamma \geq \frac{12 \eta K \sqrt{N \log(\frac{1}{\delta})} \log(N T) }{\eps \tau}$, we have that 

$$\max_{r \in [\left\lfloor\frac{T}{\tau}\right\rfloor]} \max_{j \in [N]} -\eta(\tilde{\ell}^{\prime}_r(j)) \leq \max_{r \in [\left\lfloor\frac{T}{\tau}\right\rfloor]} \max_{j \in [N]} -\eta(\tilde{\ell}_r(j) + Z^{j}_r) \leq \max_{r \in [\left\lfloor\frac{T}{\tau}\right\rfloor]} \max_{j \in [N]} -\eta Z^{j}_r \leq \eta \frac{12 K \sqrt{N \log\left(\frac{1}{\delta}\right)} \log(NT)}{\eps \tau \gamma} \leq 1.$$

Accordingly, for any choice $\gamma \geq \frac{12 \eta K \sqrt{N \log(\frac{1}{\delta})} \log(N T)}{\eps \tau}$, Lemma \ref{lem:MW} implies that 

$$\sum_{r=1}^{\left\lfloor\frac{T}{\tau}\right\rfloor} \sum_{j=1}^N P_r(j) \tilde{\ell}^{\prime}_r(j) \leq \inf_{j \in [N]}\sum_{r=1}^{\left\lfloor\frac{T}{\tau}\right\rfloor}   \tilde{\ell}^{\prime}_r(j)  + \frac{\log(N)}{\eta} + \eta \sum_{r=1}^{\left\lfloor\frac{T}{\tau}\right\rfloor}  \sum_{j=1}^N P_r(j) \tilde{\ell}^{\prime}_r(j)^2.$$

Taking expectation of both sides, we have that

$$\mathop{\mathbb{E}}_{\Acal, Z_{1:T}}\left[ \sum_{r=1}^{\left\lfloor\frac{T}{\tau}\right\rfloor} \sum_{j=1}^N P_r(j) \tilde{\ell}^{\prime}_r(j) \middle| E^c\right]  \leq \inf_{j \in [N]} \mathop{\mathbb{E}}_{Z_{1:T}}\left[ \sum_{r=1}^{\left\lfloor\frac{T}{\tau}\right\rfloor}   \tilde{\ell}^{\prime}_r(j)\middle| E^c\right]  + \frac{\log(N)}{\eta} + \eta \mathop{\mathbb{E}}_{\Acal, Z_{1:T}}\left[\sum_{r=1}^{\left\lfloor\frac{T}{\tau}\right\rfloor}  \sum_{j=1}^N P_r(j) \tilde{\ell}^{\prime}_r(j)^2 \middle| E^c\right].$$

Using the fact that $Z_r^j$ is zero-mean and conditionally independent of $P_r$ given the history of the game up to and including time point $(r-1)\tau$, we have that  

$$\mathop{\mathbb{E}}_{\Acal, Z_{1:T}}\left[ \sum_{r=1}^{\left\lfloor\frac{T}{\tau}\right\rfloor} \sum_{j=1}^N P_r(j) \tilde{\ell}_r(j) \middle| E^c\right]  \leq \inf_{j \in [N]} \mathop{\mathbb{E}}_{Z_{1:T}}\left[ \sum_{r=1}^{\left\lfloor\frac{T}{\tau}\right\rfloor}   \tilde{\ell}_r(j)\middle| E^c\right]  + \frac{\log(N)}{\eta} + \eta \mathop{\mathbb{E}}_{\Acal, Z_{1:T}}\left[\sum_{r=1}^{\left\lfloor\frac{T}{\tau}\right\rfloor}  \sum_{j=1}^N P_r(j) \tilde{\ell}^{\prime}_r(j)^2 \middle| E^c\right].$$

We now analyze each of the three terms with expectations separately. First, 

\begin{align*}
    \mathop{\mathbb{E}}_{\Acal, Z_{1:T}}\left[ \sum_{r=1}^{\left\lfloor\frac{T}{\tau}\right\rfloor} \sum_{j=1}^N P_r(j) \tilde{\ell}_r(j) \middle| E^c\right] &= \frac{1}{\tau}\mathop{\mathbb{E}}_{\Acal, Z_{1:T}}\left[ \sum_{r=1}^{\left\lfloor\frac{T}{\tau}\right\rfloor} \sum_{s = (r-1)\tau  + 1}^{r\tau} \sum_{j=1}^N P_r(j) \sum_{i = 1}^K  \hat{\ell}_s(i) \mu_s^j (i)\middle| E^c\right]\\
    &= \frac{1}{\tau}\mathop{\mathbb{E}}_{\Acal, Z_{1:T}}\left[ \sum_{r=1}^{\left\lfloor\frac{T}{\tau}\right\rfloor} \sum_{s = (r-1)\tau  + 1}^{r\tau} \sum_{i = 1}^K \left(\sum_{j=1}^N P_r(j)  \mu_s^j (i)\right) \hat{\ell}_s(i)\middle| E^c\right]\\
    &= \frac{1}{\tau}\mathop{\mathbb{E}}_{\Acal, Z_{1:T}}\left[\sum_{t=1}^{\tau\left\lfloor\frac{T}{\tau}\right\rfloor} \sum_{i = 1}^K \left(\frac{Q_t(i) - \frac{\gamma}{K}}{1 - \gamma}\right) \hat{\ell}_t(i)\middle| E^c\right]\\
    &\geq \frac{1}{\tau(1-\gamma)}\mathop{\mathbb{E}}_{\Acal, Z_{1:T}}\left[\sum_{t=1}^{\tau\left\lfloor\frac{T}{\tau}\right\rfloor} \sum_{i = 1}^K Q_t(i) \hat{\ell}_t(i)\middle| E^c\right] - \frac{\gamma}{(1-\gamma)}\left\lfloor\frac{T}{\tau}\right\rfloor.
\end{align*}

Next, 

\begin{align*}
\mathop{\mathbb{E}}_{\Acal, Z_{1:T}}\left[\sum_{r=1}^{\left\lfloor\frac{T}{\tau}\right\rfloor}  \sum_{j=1}^N P_r(j) \tilde{\ell}^{\prime}_r(j)^2 \middle| E^c\right] &= \mathop{\mathbb{E}}_{\Acal, Z_{1:T}}\left[\sum_{r=1}^{\left\lfloor\frac{T}{\tau}\right\rfloor}  \sum_{j=1}^N P_r(j) (\tilde{\ell}_r(j) + Z^{j}_r)^2 \middle| E^c\right]\\
&= \mathop{\mathbb{E}}_{\Acal, Z_{1:T}}\left[\sum_{r=1}^{\left\lfloor\frac{T}{\tau}\right\rfloor}  \sum_{j=1}^N P_r(j) (\tilde{\ell}_r(j)^2 + (Z^{j}_r)^2) \middle| E^c\right]\\
&\leq \mathop{\mathbb{E}}_{\Acal, Z_{1:T}}\left[\sum_{r=1}^{\left\lfloor\frac{T}{\tau}\right\rfloor}  \sum_{j=1}^N P_r(j) (\tilde{\ell}_r(j)^2 + 10 \lambda^2 \log^2(N\left\lfloor\frac{T}{\tau}\right\rfloor)) \middle| E^c\right]\\
&= \mathop{\mathbb{E}}_{\Acal, Z_{1:T}}\left[\sum_{r=1}^{\left\lfloor\frac{T}{\tau}\right\rfloor}  \sum_{j=1}^N P_r(j)  \tilde{\ell}_r(j)^2 \middle| E^c\right] + 10\left\lfloor\frac{T}{\tau}\right\rfloor  \lambda^2 \log^2(N\left\lfloor\frac{T}{\tau}\right\rfloor)
\end{align*}

To bound the first of the two terms above, note that:

\begin{align*}
\mathop{\mathbb{E}}_{\Acal, Z_{1:T}}\left[\sum_{r=1}^{\left\lfloor\frac{T}{\tau}\right\rfloor}  \sum_{j=1}^N P_r(j) \tilde{\ell}_r(j)^2 \middle| E^c\right] &\leq  \mathop{\mathbb{E}}_{\Acal, Z_{1:T}}\left[\sum_{r=1}^{\left\lfloor\frac{T}{\tau}\right\rfloor}  \sum_{j=1}^N P_r(j) \left(\frac{1}{\tau}\sum_{s = (r-1)\tau  + 1}^{r\tau} \hat{\ell}_s \cdot \mu_s^j \right)^2 \middle| E^c\right] \\
&\leq  \mathop{\mathbb{E}}_{\Acal, Z_{1:T}}\left[\sum_{r=1}^{\left\lfloor\frac{T}{\tau}\right\rfloor}  \sum_{j=1}^N P_r(j) \frac{1}{\tau^2}\left(\sum_{s = (r-1)\tau  + 1}^{r\tau} \hat{\ell}_s \cdot \mu_s^j \right)^2 \middle| E^c\right]\\
&\leq \mathop{\mathbb{E}}_{\Acal, Z_{1:T}}\left[\sum_{r=1}^{\left\lfloor\frac{T}{\tau}\right\rfloor}  \sum_{j=1}^N P_r(j) \frac{1}{\tau}\sum_{s = (r-1)\tau  + 1}^{r\tau} \left(\hat{\ell}_s \cdot \mu_s^j \right)^2 \middle| E^c\right]\\
&\leq \frac{1}{\tau} \mathop{\mathbb{E}}_{\Acal, Z_{1:T}}\left[\sum_{r=1}^{\left\lfloor\frac{T}{\tau}\right\rfloor}  \sum_{j=1}^N P_r(j) \sum_{s = (r-1)\tau  + 1}^{r\tau} \hat{\ell}^2_s \cdot \mu_s^j \middle| E^c\right]\\
&= \frac{1}{\tau}\mathop{\mathbb{E}}_{\Acal, Z_{1:T}}\left[ \sum_{r=1}^{\left\lfloor\frac{T}{\tau}\right\rfloor} \sum_{s = (r-1)\tau  + 1}^{r\tau} \sum_{j=1}^N P_r(j) \sum_{i = 1}^K  \hat{\ell}^2_s(i) \mu_s^j (i)\middle| E^c\right]\\
&= \frac{1}{\tau}\mathop{\mathbb{E}}_{\Acal, Z_{1:T}}\left[ \sum_{r=1}^{\left\lfloor\frac{T}{\tau}\right\rfloor} \sum_{s = (r-1)\tau  + 1}^{r\tau} \sum_{i = 1}^K \left(\sum_{j=1}^N P_r(j)  \mu_s^j (i)\right) \hat{\ell}^2_s(i)\middle| E^c\right]\\
&= \frac{1}{\tau}\mathop{\mathbb{E}}_{\Acal, Z_{1:T}}\left[\sum_{t=1}^{\tau\left\lfloor\frac{T}{\tau}\right\rfloor} \sum_{i = 1}^K \left(\frac{Q_t(i) - \frac{\gamma}{K}}{1 - \gamma}\right) \hat{\ell}^2_t(i)\middle| E^c\right]\\
&\leq \frac{1}{\tau(1 - \gamma)}\mathop{\mathbb{E}}_{\Acal, Z_{1:T}}\left[\sum_{t=1}^{\tau\left\lfloor\frac{T}{\tau}\right\rfloor} \sum_{i = 1}^K Q_t(i)\hat{\ell}^2_t(i)\middle| E^c\right].
\end{align*}

Finally, 

\begin{align*}
\inf_{j \in [N]} \mathop{\mathbb{E}}_{Z_{1:T}}\left[ \sum_{r=1}^{\left\lfloor\frac{T}{\tau}\right\rfloor}   \tilde{\ell}_r(j)\middle| E^c\right] &= \frac{1}{\tau} \inf_{j \in [N]} \mathop{\mathbb{E}}_{Z_{1:T}}\left[ \sum_{r=1}^{\left\lfloor\frac{T}{\tau}\right\rfloor}   \sum_{s = (r-1)\tau  + 1}^{r\tau} \hat{\ell}_s \cdot \mu_s^j\middle| E^c\right]\\
&= \frac{1}{\tau} \inf_{j \in [N]} \mathop{\mathbb{E}}_{Z_{1:T}}\left[ \sum_{t= 1}^{\tau\left\lfloor\frac{T}{\tau}\right\rfloor}  \hat{\ell}_t \cdot \mu_t^j\middle| E^c\right]\\
&= \frac{1}{\tau} \inf_{j \in [N]} \sum_{t= 1}^{\tau\left\lfloor\frac{T}{\tau}\right\rfloor}  \ell_t \cdot \mu_t^j,
\end{align*}

\noindent where the last equality  follows by the unbiasedness of $\hat{\ell}_t$. Putting all the bounds together, we get that

\begin{align*}
 \frac{1}{\tau(1-\gamma)}\mathop{\mathbb{E}}_{\Acal, Z_{1:T}}\left[\sum_{t=1}^{\tau\left\lfloor\frac{T}{\tau}\right\rfloor} \sum_{i = 1}^K Q_t(i) \hat{\ell}_t(i)\middle| E^c\right] \leq \frac{1}{\tau} \inf_{j \in [N]} \sum_{t= 1}^{\tau\left\lfloor\frac{T}{\tau}\right\rfloor}  \ell_t \cdot \mu_t^j &+ \frac{\log(N)}{\eta} + \frac{\gamma}{(1-\gamma)}\left\lfloor\frac{T}{\tau}\right\rfloor  \\
   &+ \frac{\eta}{\tau(1 - \gamma)}\mathop{\mathbb{E}}_{\Acal, Z_{1:T}}\left[\sum_{t=1}^{\tau\left\lfloor\frac{T}{\tau}\right\rfloor} \sum_{i = 1}^K Q_t(i)\hat{\ell}^2_t(i)\middle| E^c\right] \\
   &+ 10\eta \left\lfloor\frac{T}{\tau}\right\rfloor  \lambda^2 \log^2(N\left\lfloor\frac{T}{\tau}\right\rfloor).
\end{align*}

Multiplying both sides by $\tau (1-\gamma)$, gives

\begin{align*}
 \mathop{\mathbb{E}}_{\Acal, Z_{1:T}}\left[\sum_{t=1}^{\tau\left\lfloor\frac{T}{\tau}\right\rfloor} \sum_{i = 1}^K Q_t(i) \hat{\ell}_t(i)\middle| E^c\right] \leq (1-\gamma) \inf_{j \in [N]} \sum_{t= 1}^{\tau\left\lfloor\frac{T}{\tau}\right\rfloor}  \ell_t \cdot \mu_t^j &+ \frac{\tau (1-\gamma) \log(N)}{\eta} + \tau \gamma \left\lfloor\frac{T}{\tau}\right\rfloor  \\
   &+ \eta \mathop{\mathbb{E}}_{\Acal, Z_{1:T}}\left[\sum_{t=1}^{\tau\left\lfloor\frac{T}{\tau}\right\rfloor} \sum_{i = 1}^K Q_t(i)\hat{\ell}^2_t(i)\middle| E^c\right]  \\
   &+ 10\eta (1-\gamma) \tau \left\lfloor\frac{T}{\tau}\right\rfloor  \lambda^2 \log^2(N\left\lfloor\frac{T}{\tau}\right\rfloor),
\end{align*}

which implies that 

\begin{align*}
 \mathop{\mathbb{E}}_{\Acal, Z_{1:T}}\left[\sum_{t=1}^{\tau\left\lfloor\frac{T}{\tau}\right\rfloor} \sum_{i = 1}^K Q_t(i) \hat{\ell}_t(i)\middle| E^c\right] \leq \inf_{j \in [N]} \sum_{t= 1}^{\tau\left\lfloor\frac{T}{\tau}\right\rfloor}  \ell_t \cdot \mu_t^j &+ \frac{\tau \log(N)}{\eta} + \gamma T  \\
   &+ \eta \mathop{\mathbb{E}}_{\Acal, Z_{1:T}}\left[\sum_{t=1}^{\tau\left\lfloor\frac{T}{\tau}\right\rfloor} \sum_{i = 1}^K Q_t(i)\hat{\ell}^2_t(i)\middle| E^c\right]  \\
   &+ 10\eta T  \lambda^2 \log^2(N\left\lfloor\frac{T}{\tau}\right\rfloor).
\end{align*}

Using the fact that $\hat{\ell}_t$ is an unbiased estimator of $\ell_t$ gives that 
\ifbool{arxiv}{
\begin{multline*}
\mathop{\mathbb{E}}_{\Acal, Z_{1:T}}\left[\sum_{t=1}^{\tau\left\lfloor\frac{T}{\tau}\right\rfloor} \sum_{i = 1}^K Q_t(i) \ell_t(i)\middle| E^c\right] \leq \inf_{j \in [N]} \sum_{t= 1}^{\tau\left\lfloor\frac{T}{\tau}\right\rfloor}  \ell_t \cdot \mu_t^j + \frac{\tau \log(N)}{\eta} + \gamma T + \eta \mathop{\mathbb{E}}_{\Acal, Z_{1:T}}\left[\sum_{t=1}^{\tau\left\lfloor\frac{T}{\tau}\right\rfloor} \sum_{i = 1}^K \ell^2_t(i)\middle| E^c\right] \\+ 10\eta T \lambda^2 \log^2(N\left\lfloor\frac{T}{\tau}\right\rfloor).
\end{multline*}
}{
$$\mathop{\mathbb{E}}_{\Acal, Z_{1:T}}\left[\sum_{t=1}^{\tau\left\lfloor\frac{T}{\tau}\right\rfloor} \sum_{i = 1}^K Q_t(i) \ell_t(i)\middle| E^c\right] \leq \inf_{j \in [N]} \sum_{t= 1}^{\tau\left\lfloor\frac{T}{\tau}\right\rfloor}  \ell_t \cdot \mu_t^j + \frac{\tau \log(N)}{\eta} + \gamma T + \eta \mathop{\mathbb{E}}_{\Acal, Z_{1:T}}\left[\sum_{t=1}^{\tau\left\lfloor\frac{T}{\tau}\right\rfloor} \sum_{i = 1}^K \ell^2_t(i)\middle| E^c\right] + 10\eta T \lambda^2 \log^2(N\left\lfloor\frac{T}{\tau}\right\rfloor).$$}

By the boundedness of the loss, we have 

$$\mathop{\mathbb{E}}_{\Acal, Z_{1:T}}\left[\sum_{t=1}^{\tau\left\lfloor\frac{T}{\tau}\right\rfloor} \sum_{i = 1}^K Q_t(i) \ell_t(i)\middle| E^c\right] \leq \inf_{j \in [N]} \sum_{t= 1}^{\tau\left\lfloor\frac{T}{\tau}\right\rfloor}  \ell_t \cdot \mu_t^j + \frac{\tau \log(N)}{\eta} + \gamma T + \eta K  \tau\left\lfloor\frac{T}{\tau}\right\rfloor + 10\eta T \lambda^2 \log^2(N\left\lfloor\frac{T}{\tau}\right\rfloor).$$

Bounding the regret in the last $\tau$ rounds by $\tau$, gives

\begin{align*}
\mathop{\mathbb{E}}_{\Acal, Z_{1:T}}\left[\sum_{t=1}^{T} \sum_{i = 1}^K Q_t(i) \ell_t(i)\middle| E^c\right] &\leq \inf_{j \in [N]} \sum_{t= 1}^{T}  \ell_t \cdot \mu_t^j + \frac{\tau \log(N)}{\eta} + \gamma T + \eta T K + 10 \eta T \lambda^2 \log^2(NT) +  \tau \\
&\leq \inf_{j \in [N]} \sum_{t= 1}^{T}  \ell_t \cdot \mu_t^j + \frac{\tau \log(N)}{\eta} + \gamma T + \eta T K +  \frac{90 \eta T N K^2 \log(\frac{1}{\delta}) \log^2(NT)}{\eps ^2 \gamma^2 \tau^2 }  +  \tau
\end{align*}

Using Equation \ref{eq:exp4batch}, then gives that 

$$\operatorname{R}(T, K, N) \leq \frac{\tau \log(N)}{\eta} + \gamma T + \eta T K +  \frac{90 \eta T N K^2 \log(\frac{1}{\delta}) \log^2(NT)}{\eps ^2 \gamma^2 \tau^2 }  +  2\tau$$

Since $\eta < 1$, we trivially have that 

$$\operatorname{R}(T, K, N)  \leq \frac{3\tau \log(N)}{\eta} + \gamma T + \eta T K +  \frac{90 \eta T N K^2 \log(\frac{1}{\delta}) \log^2(NT)}{\eps ^2 \gamma^2 \tau^2 } .$$

Now, choosing $\gamma = \max\left\{\frac{\eta^{1/3}N^{1/3}K^{2/3}\log^{2/3}(NT)}{\eps^{2/3} \tau^{2/3}}, \frac{12\eta K \sqrt{N \log(\frac{1}{\delta})} \log(NT)}{\eps \tau}\right\}$, gives 

\begin{align*}
\operatorname{R}(T, K, N) &\leq \frac{3\tau \log(N)}{\eta}
\\  &+ 90\max\left\{\frac{\eta^{1/3}(N\log(\frac{1}{\delta}))^{1/3}K^{2/3}\log^{2/3}(NT)}{\eps^{2/3} \tau^{2/3}}, \frac{\eta K \sqrt{N\log(\frac{1}{\delta}))} \log(NT)}{\eps \tau}\right\}T + \eta T K . 
\end{align*}

Choosing $\eta = \frac{(N\log(\frac{1}{\delta}))^{1/6} \log^{1/3}(NT) \log^{1/3}(N)}{T^{1/3}K^{1/2}\eps^{1/3}}$ and $\tau = \frac{(N\log(\frac{1}{\delta}))^{1/3}\log^{2/3}(NT)T^{1/3}}{\eps^{2/3}\log^{1/3}(N)}$ gives

\begin{align*}
\operatorname{R}(T, K, N) &\leq  \frac{95(N\log(\frac{1}{\delta}))^{1/6}K^{1/2}\log^{1/3}(NT)\log^{1/3}(N) T^{2/3} }{\eps^{1/3}}\\
&+ \frac{(95N\log(\frac{1}{\delta}))^{1/3} K^{1/2} \log^{2/3}(NT)\log^{2/3}(N) T^{1/3}}{\eps^{2/3}}\\
&\leq  \frac{100N^{1/6} K^{1/2}  T^{2/3} \, \log^{1/6}(\frac{1}{\delta})\log^{1/3}(NT)\log^{1/3}(N) }{\eps^{1/3}} \\
&+ \frac{N^{1/2} \log(\frac{1}{\delta})^{1/2}\log(NT)\log(N)}{\eps}.
\end{align*}

which completes the proof. 
\end{proof}

\begin{proof}(of Privacy in Theorem \ref{thm:batchedexp4}) Fix $\eps, \delta \in (0, 1]$. Note that the sequence of actions played by Algorithm \ref{alg:batchedexp4} are completely determined by the noisy loss vectors $\tilde{\ell}^{\prime}_1, \dots, \tilde{\ell}^{\prime}_{\frac{T}{\tau}}.$ Thus, by post-processing it suffices to show that these vectors are output in a $\eps$-differentially private manner. 
From this perspective, Algorithm \ref{alg:batchedexp4} can be viewed as the adaptive composition $M$ of the sequence of mechanisms $M_1, \dots, M_{\left\lfloor\frac{T}{\tau}\right\rfloor}$, where $M_1: ([K] \times \Pi([K]))^{\tau} \times  \ell_{1:T} \rightarrow \mathbb{R}^N$ is  defined as 

$$M_1(I_{1:\tau}, \mu^{1:\tau}_{1:T}, \ell_{1:T}) = (\tilde{\ell}^{\prime}_1(1), \dots,\tilde{\ell}^{\prime}_1(N))$$

 for $\tilde{\ell}^{\prime}_1(j)$ defined as in Line 10 of Algorithm \ref{alg:batchedexp4}. Likewise, for $s \in \{2, \dots, \left\lfloor\frac{T}{\tau}\right\rfloor\}$, define $M_s: (\mathbb{R}^N)^{s-1} \times (\Pi([K]) \times [K])^{\tau} \times  \ell_{1:T} \rightarrow \mathbb{R}^N$ such that 

$$M_s(\tilde{\ell}^{\prime}_{1:s-1}, \mu^{1:N}_{(s-1)\tau + 1: s\tau}, I_{(s-1)\tau + 1: s\tau}, \ell_{1:T}) = (\tilde{\ell}^{\prime}_{s\tau}(1), \dots,\tilde{\ell}^{\prime}_{s\tau}(N)).$$

We will prove that $M(\ell_{1:T})$ and $M(\ell^{\prime}_{1:T})$ are $(\epsilon, \delta)$-indistinguishable.  To do so, fix two neighboring data sets $\ell_{1:T}$ and $\ell^{\prime}_{1:T}$. Let $t^{\prime}$ be the index where the two datasets differ. Let $r^{\prime} \in \{1, \dots, \left\lfloor\frac{T}{\tau}\right\rfloor\}$ be the batch in where $t^{\prime}$ lies. Fix a sequence of outcomes $x_{1:\frac{T}{\tau}} \in \left(\mathbb{R}^N\right)^{\frac{T}{\tau}}$, $\mu_{1:T}^{1:N} \in \Pi([K])^{NT}$, and $I_{1:T} \in [K]^T.$

 For all $r < r^{\prime}$, we have that the random variables $M_r(x_{1:r-1}, \mu^{1:N}_{(r-1)\tau + 1: r\tau}, I_{(r-1)\tau + 1: r\tau}, \ell_{1:T})$ and  $M_r(x_{1:r-1}, \mu^{1:N}_{(r-1)\tau + 1: r\tau}, I_{(r-1)\tau + 1: r\tau}, \ell^{\prime}_{1:T})$ are $0$-indistinguishable. We now show that $M_{r^{\prime}}(x_{1:r^{\prime}-1}, \mu^{1:N}_{(r^{\prime}-1)\tau + 1: r^{\prime}\tau}, I_{(r^{\prime}-1)\tau + 1: r^{\prime}\tau}, \ell_{1:T})$ and $M_{r^{\prime}}(x_{1:r^{\prime}-1}, \mu^{1:N}_{(r^{\prime}-1)\tau + 1: r^{\prime}\tau}, I_{(r^{\prime}-1)\tau + 1: r^{\prime}\tau}, \ell^{\prime}_{1:T})$ are $(\eps, \delta)$-indistinguishable. On input $x_{1:r^{\prime}-1}, \mu^{1:N}_{(r^{\prime}-1)\tau + 1: r^{\prime}\tau}$, and $I_{(r^{\prime}-1)\tau + 1: r^{\prime}\tau}$, the mechanism $M_{r^{\prime}}(\cdot, \ell_{1:T})$ computes $\tilde{\ell}^{\prime}_{r^{\prime}\tau}(j) = \tilde{\ell}_{r^{\prime}\tau}(j) + Z^j_{r^{\prime}}$, where $Z^j_{r^{\prime}} \sim \text{Lap}\left(0, \frac{3 K \sqrt{N\log(\frac{1}{\delta})}}{\gamma \tau \eps}\right)$ and
$$\tilde{\ell}_{r^{\prime}\tau}(j) = \frac{1}{\tau}\sum_{m = (r^{\prime}-1)\tau + 1}^{r^{\prime} \tau} \sum_{i=1}^K \frac{\mu_m^j(i) \ell_m(i) \mathbb{I}\{I_m = i\}}{Q_m(i)},$$
for every $j \in [N]$. Note that $x_{1:r^{\prime}-1}$ complete determines $Q_m(i)$. Moreover, the global sensitivity of $\tilde{\ell}_{r^{\prime}\tau}(j)$, with respect to neighboring datasets, is at most $\frac{K}{\gamma \tau}$ since $Q_t(i) \geq \frac{\gamma}{K}$ for all $t \in [T]$. Accordingly, by the Laplace Mechanism and advanced composition, we have that the outputs of  $M_{r^{\prime}}(\cdot, \ell_{1:T})$ and $M_{r^{\prime}}(\cdot, \ell^{\prime}_{1:T})$ are $(\eps, \delta)$-indistinguishable. 

To complete the proof, it suffices to show that for all $r \geq r^{\prime} + 1$, we have that the outputs of, $M_{r}(\cdot, \ell_{1:T})$ and $M_{r}(\cdot, \ell^{\prime}_{1:T})$  are $0$-indistinguishable. However, this follows from the fact that for every $r \geq r^{\prime} + 1$, we have that $\ell_{(r-1)\tau + 1:r\tau + 1} = \ell^{\prime}_{(r-1)\tau + 1: r\tau}$ and that mechanism $M_r$ does not access the true data $\ell_{1:(r-1)\tau}$, but only the privatized, published outputs of the previous mechanisms $M_1, \dots, M_{r-1}.$ Thus, by advanced composition, we have that the entire mechanism $M$ is $(\eps, \delta)$-differentially private. 
\end{proof}

\section{Barriers to Private Adversarial Bandits} \label{app:lb}

\subsection{Privacy leakage in EXP3} \label{app:exp3lb}

To better understand its per-round privacy loss, it is helpful to view EXP3 as the adaptive composition of $T-1$ mechanisms $M_2, \dots, M_T$ where $M_t: [K]^{t-1} \times \ell_{1:T} \rightarrow [K]$. For every $t \in \{2, \dots, T\}$, the mechanism $M_t$, given as input the previously selected actions $I_1, \dots, I_{t-1}$ and the dataset $\ell_{1:T}$, computes the distribution 

$$P_t(i) = (1-\gamma)\frac{w_t(i)}{\sum_{j = 1}^K w_t(j)} + \frac{\gamma}{K}$$

where $w_t(j) = \exp\{-\eta \sum_{s = 1}^{t-1} \hat{\ell}_s(j)\}$ and $\hat{\ell}_s(j) = \frac{\ell_s(j) \mathbb{I}\{I_s = j\}}{P_s(j)}.$ Then, $M_t$ samples an action $I_t \sim P_t$. The mechanism $M_t$ is $\eps_t$-differentially private if for any pair of neighboring data sets $\ell_{1:T}$ and $\ell^{\prime}_{1:T}$, we have that 

$$\sup_{I_1, \dots, I_{t-1} \in [K]^{t-1}} \sup_{i \in [K]}\frac{\mathbb{P}[M_t(I_{1:t-1}, \ell_{1:T})=i]}{\mathbb{P}[M_t(I_{1:t-1}, \ell^{\prime}_{1:T})=i]} \leq e^{\eps_t}.$$

Now, consider two neighboring datasets $\ell_{1:T}$ and $\ell^{\prime}_{1:T}$ that differ at the first time point $t = 1$. Let $P_1, \dots, P_T$ denote the sequence of probabilities output by the mechanisms when run on $\ell_{1:T}$ and let $P^{\prime}_1, \dots, P^{\prime}_T$ denote the same for $\ell^{\prime}_{1:T}$. Since $\ell_1 \neq \ell^{\prime}_1$, we have that $\hat{\ell}_1 \neq \hat{\ell}^{\prime}_1$. Accordingly, $P_2 \neq P^{\prime}_2.$ The key insight now is that because $P_2 \neq P^{\prime}_2$, we have that $\hat{\ell}_2 \neq \hat{\ell}^{\prime}_2$, and so $P_3 \neq P^{\prime}_3$. Continuing this process gives that $P_t \neq P^{\prime}_t$ and $\hat{\ell}_t \neq \hat{\ell}^{\prime}_t$ for all $t \geq 2$. Unfortunately, this difference in probabilities can cause the privacy loss to grow with $t$. To get some intuition,  fix some $t \geq 2$ and sequence $I_1, \dots, I_{t-1} \in [K]^{t-1}$. Consider the ratio 

$$\sup_{i \in [K]}\frac{P_t(i)}{P^{\prime}_t(i)} \approx \sup_{i \in [K]}\frac{w_t(i)}{w^{\prime}_t(i)} \frac{\sum_{j=1}^K w^{\prime}_t(j)}{\sum_{j=1}^K w_t(j)} \approx \sup_{i \in [K]}\frac{w_t(i)}{w^{\prime}_t(i)}.$$

Observe that 

\begin{align*}
\sup_{i \in [K]}\frac{w_t(i)}{w^{\prime}_t(i)} &= \sup_{i \in [K]}\exp\Bigl\{\eta \sum_{s = 1}^{t-1} \hat{\ell}_s^{\prime}(i) - \hat{\ell}_s(i)\Bigl\}\\
&= \sup_{i \in [K]}\exp\Bigl\{\eta \frac{(\ell^{\prime}_1(i) - \ell_1(i))\mathbb{I}\{I_1 = i\}}{P_1(i)}  + \eta \sum_{s = 2}^{t-1} \ell_s(i) \mathbb{I}\{I_s = i\} (\frac{1}{P^{\prime}_s(i)} - \frac{1}{P_s(i)})\Bigl\}.
\end{align*}

Since $P^{\prime}_s(i) \neq P_s(i)$ for every $s \leq t-1$, we can pick two neighboring sequences of losses and a sequence of actions $I_1, \dots, I_T$ such that $\sup_{i \in [K]}\frac{w_s(i)}{w^{\prime}_s(i)}$ grows very quickly with $s$. For example, the following choices for neighboring datasets and sequences of actions will do. Let $K = 2$ and pick $\ell_{1:T}$ such that $\ell_1(1) = 1$, $\ell_1(2) = 0$, and $\ell_t(1) = \ell_t(2) = 1$ for all $t \in \{2, \dots, T\}.$ Pick neighboring dataset $\ell^{\prime}_{1:T}$ such that $\ell^{\prime}_t(1) = \ell^{\prime}_t(2) =  1$, for all $t \in [T].$ Finally, consider the sequence of actions $I_1, \dots, I_T$ such that $I_t = 2$ if $t$ is odd and $I_t = 1$ if $t$ is even. That is, the sequence of actions $I_1, \dots, I_T$ alternates between $2$ and $1$, starting with action $2$.

We claim that for this choice of neighboring datasets and sequences of actions, $P_t(2)$ and $P^{\prime}_t(2)$ diverge rapidly with $P_t(2)$ approaching $1-\frac{\gamma}{2}$ and $P^{\prime}_t(2)$ approaching $\frac{\gamma}{2}$. To see why,  fix $t \geq 2$ and suppose that $I_t = 1$. Then, by definition, for the loss sequence $\ell_{1:T}$, we have that $w_{t+2}(1) = w_{t+1}(1) = w_t(1) \exp{\frac{-\eta}{P_t(1)}}$ and $w_{t+2}(2) = w_t(2) \exp{\frac{-\eta}{1 - P_{t+1}(1)}}.$ Since $I_t = 1$, we know that $P_{t+1}(1) < P_t(1)$ and therefore $w_{t+2}(2) > w_t(2) \exp{\frac{-\eta}{1-P_t(1)}}$. Now, if $P_t(1) < \frac{1}{2}$, then $w_{t+2}(1) / w_{t+2}(2) < w_t(1) / w_t(2) < 1$. Accordingly, $P_{t+2}(1) < 1/2$, and repeating the analysis would eventually show that $w_t(1)/w_t(2) \rightarrow 0$, implying that $P_t(2) \rightarrow 1 - \frac{\gamma}{2}.$ A symmetric argument shows that if $P_t(1) > 1/2$, then $w_t(2)/w_t(1) \rightarrow 0$, and therefore $P_t(2) \rightarrow \frac{\gamma}{2}.$ Since the two loss sequences $\ell_{1:T}$ and $\ell^\prime_{1:T}$ are identical after time point $t = 2$, an identical argument holds for $\ell^\prime_{1:T}$. To complete the proof sketch, note that for loss sequence $\ell_{1:T}$, $I_4 = 1$ and $P_4(1) < \frac{1}{2}$, thus $P_t(2) \rightarrow 1 - \frac{\gamma}{2}.$ On the other hand, for loss sequence $\ell^\prime_{1:T}$, $I_2 = 1$ and $P^\prime_2(1) > \frac{1}{2}$, giving that $P^\prime_t(2) \rightarrow \frac{\gamma}{2}.$

We also verify this claim empirically in Figure \ref{fig:exp3}, which gives a better sense of the rate of divergence between $P_t(2)$ and $P^{\prime}_t(2)$. The code generating the figure above is provided below.

\begin{figure}
    \centering
    \includegraphics[width=0.5\linewidth]{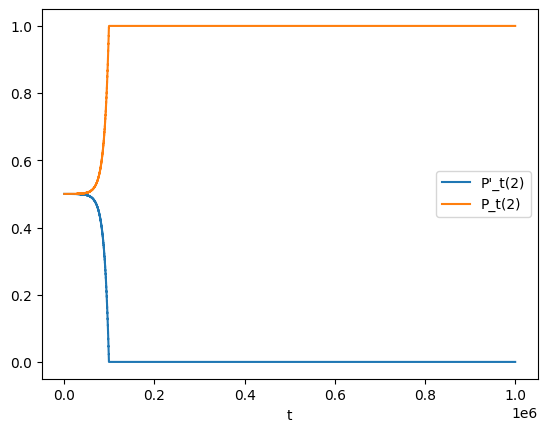}
    \caption{Probabilities on action $2$ assigned by EXP3 when run with $\gamma = \eta = 0.0001$, and $T = 100 \cdot \frac{1}{\eta}$ on datasets $\ell_{1:T}$ and $\ell^{\prime}_{1:T}$.}
    \label{fig:exp3}
\end{figure}

\newpage

\begin{lstlisting}
import numpy as np
import matplotlib.pyplot as plt

eta = 0.0001
T = 100 * int(1/eta)
gamma = eta

# Execute EXP3 on loss sequence l_1, \dots, l_T
w_1 = 1
w_2 = 1
P_2 = 0
P_2_hist = []

for t in range(T):
    Q_2 = (w_2/(w_2 + w_1)) #unmixed prob.
    P_2 = (1-gamma) * Q_2  + gamma/2 #mixed prob.
    P_2_hist.append(P_2)
    if t == 0:
        w_2 = w_2 * np.exp(0*eta/(P_2))
    elif t % 2 == 0:
        w_2 = w_2 * np.exp(-1*eta/(P_2)) #pull action 2 in even rounds
    else:
        w_1 = w_1 * np.exp(-1*eta/((1-P_2))) #pull action 1 in odd rounds

plt.plot(P_2_hist, label= "P_t(2)")

# Execute EXP3 on loss sequence l'_1, \dots, l'_T
w_1 = 1
w_2 = 1
P_2 = 0
P_2_hist = []

for t in range(T):
    Q_2 = (w_2/(w_2 + w_1))
    P_2 = (1-gamma) * Q_2  + gamma/2
    P_2_hist.append(P_2)
    if t % 2 == 0: 
        w_2 = w_2 * np.exp(-1*eta/(P_2)) #pull action 2 in even rounds
    else:
        w_1 = w_1 * np.exp(-1*eta/((1-P_2))) #pull action 1 in odd rounds


plt.plot(P_2_hist, label= "P'_t(2)")

plt.xlabel("t")
plt.legend()
plt.show()
\end{lstlisting}

 We note that the authors of \cite{tossou2017achieving} acknowledge that this issue was overlooked when stating Theorem 3.3 in \cite{tossou2017achieving}. Therefore, we are unable to verify the Theorem 3.3. Unfortunately, \cite{tossou2017achieving} use Theorem 3.3 in the proof of Corollary 3.3, which claims to give a private adversarial bandit algorithm with expected regret $\tilde{O}\left(\frac{T^{2/3}\sqrt{K\ln(K)}}{\eps^{1/3}}\right)$, ignoring log factors in $\frac{1}{\delta}$. Thus, we are unable to verify whether Corollary 3.3 is correct.

\subsection{Proof of Lemma \ref{lem:lb}}
\label{app:alg_sp_lb}




\begin{proof} (of Lemma \ref{lem:lb})
Let $\Acal$ be any $\eps$-differentially private algorithm (for $\eps \leq 1$) that satisfies condition (1) and (2) with parameters $\gamma \in [0, \frac{1}{2}]$, $\tau < \frac{T}{2}$ and $2 \gamma \tau \leq p \leq \gamma (T - \tau)$. Consider the alternate sequence of loss functions $\ell^{\prime}_1, \dots, \ell^{\prime}_T$ such that $\ell^{\prime}_{1:\tau} = \ell_{1:\tau}$ but $\ell^{\prime}_{\tau+1:T}$ is such that $\ell^{\prime}_{t}(2) = 0$ and $\ell^{\prime}_{t}(1) = \frac{1}{2}$ for all $t \in \{\tau + 1, \dots, T\}$.

It suffices to  show that \begin{equation} \label{eq:lb}
 \mathbb{P}(I^{\prime}_1, \dots, I^{\prime}_T \notin E^p_{\gamma, \tau}) \leq  e^{\eps p} \cdot \mathbb{P}(I_1, \dots, I_T \notin  E^p_{\gamma, \tau}) \leq   \frac{1}{2}  e^{\eps p}
\end{equation}
where $I_{1:T}$ and $I^{\prime}_{1:T}$ are the random variables denoting the selected actions of $\Acal$ when run on $\ell_{1:T}$ and $\ell^{\prime}_{1:T}$ respectively. Indeed, when $I^{\prime}_1, \dots, I^{\prime}_T \in  E^p_{\gamma, \tau}$, we have that the regret of $\Acal$ when run on $\ell^{\prime}_{1:T}$ is at least $\frac{p}{2\gamma} -\frac{p}{2} - \frac{\tau}{2}$. On the other hand, if $I_{1:T} \in E_{\gamma}$, we have that the regret of $\Acal$ on $\ell_{1:T}$ is at least $\frac{\gamma}{2} T$. So with probability $\frac{1}{2}$, the regret of $\Acal$ on $\ell_{1:T}$ is $\frac{\gamma}{2} T$ and with probability at least $1 - \frac{1}{2} e^{\eps p}$, the regret of $\Acal$ on $\ell_{1:T}^{\prime}$ is at least $\frac{p}{2\gamma} - \frac{p}{2} - \frac{\tau}{2} \geq \frac{p}{4 \gamma} - \frac{\tau}{2}$, where the inequality follows from the fact that $\gamma \leq \frac{1}{2}$. Therefore, the worst-case \emph{expected} regret is at least 

$$\max\Biggl\{\frac{1}{2} \cdot \frac{\gamma T}{2}, (1 - \frac{1}{2} e^{\eps p}) \left(\frac{p}{4\gamma} - \frac{\tau}{2}\right) - \left(\frac{1}{2}e^{\eps p}\right)\frac{\tau}{2}\Bigg\} \geq  \max\Bigg\{\frac{\gamma T}{4}, \left(1 - \frac{1}{2} e^{\eps p}\right)\frac{p}{4\gamma} - \frac{\tau}{2} \Biggl\}.$$

To prove Equation \ref{eq:lb}, recall that we may write any randomized algorithm $\Acal$ as a deterministic function of an input $x$ and an infinite sequence of bits $b_1, b_2, \dots$ generated uniformly at random. From this perspective, we can think of a randomized bandit algorithm $\Acal$ as a deterministic mapping from a sequence of losses $\ell_{1:T}$  and an infinite sequence of bits $b \in \{0, 1\}^{\mathbb{N}}$ to a sequence of $T$ actions. That is, 

$$\Acal: \{0, 1\}^{\mathbb{N}} \times \left([0, 1]^K\right)^T \rightarrow [K]^T.$$

Using this perspective, Equation  \ref{eq:lb} is equivalent to showing that:

$$ \mathbb{P}_{_{b \sim \{0, 1\}^{\mathbb{N}}}}(\Acal(b, \ell^{\prime}_{1:T}) \notin E^p_{\gamma, \tau}) \leq e^{\eps p} \mathbb{P}_{b \sim \{0, 1\}^{\mathbb{N}}}(\Acal(b, \ell_{1:T}) \notin E^p_{\gamma, \tau}).$$

Consider the following sequence of losses parameterized by  $S \subset \{\tau + 1, \dots, T\}$, $|S| \leq p$:

$$
\ell^S_t(i) =\begin{cases}
			1/2, & \text{if $i = 1$}\\
            0, & \text{if $i = 2$ and $ t \in S$}\\
            1, & \text{$i = 2$ and $t \notin S$} 
		 \end{cases}
$$

Let $\Lcal := \{\ell^S_{1:T} : S \subset \{\tau + 1, \dots, T\}, S \leq p$ \} be the collection of all such sequences of loss functions. Note that every $\ell^S_{1:T} \in \Lcal$ differs from $\ell_{1:T}$ only at time points $t \in S$. Thus, by group privacy (see Lemma \ref{lem:grouppriv}), we have that

\begin{align*}
\sup_{\ell^S_{1:T} \in \Lcal}\mathbb{P}_{_{b \sim \{0, 1\}^{\mathbb{N}}}}(\Acal(b, \ell^S_{1:T}) \notin E^p_{\gamma, \tau}) \leq e^{\eps p}\mathbb{P}_{_{b \sim \{0, 1\}^{\mathbb{N}}}}(\Acal(b, \ell_{1:T}) \notin E^p_{\gamma, \tau}).
\end{align*}

Now, fix the sequence of random bits $b \in \{0, 1\}^{\mathbb{N}}$. Let $i^{\prime}_{1:T} = \Acal(b, \ell^{\prime}_{1:T}).$ Define $S^{\prime} := \{t \geq \tau + 1: i^{\prime}_{t} = 2\}$ and $S^{\prime}_{\leq p}$ be the first $p$ such time points. Let $i^{S^{\prime}_{\leq p}}_{1:T} = \Acal(b, \ell^{S^{\prime}_{\leq p}}_{1:T}).$ Let $t^{\prime} = \max \{{t \geq \tau + 1}: \sum_{s= \tau + 1}^t \mathbb{I}\{i^{\prime}_s = 2\} \leq p\}$ and  $t^{S^{\prime}_{\leq p}} = \max \{{t \geq \tau + 1}: \sum_{s= \tau + 1}^t \mathbb{I}\{i^{S^{\prime}_{\leq p}}_s = 2\} \leq p\}$. Because bandit algorithms only observe the losses of the selected action, we have that  $t^{\prime} = t^{S^{\prime}_{\leq p}}$. In addition, we have that $i^{\prime}_{1:T} \in E^p_{\gamma, \tau}$ if and only if $t^{\prime} \geq \tau + \frac{p}{\gamma}$, and likewise for $i^{S^{\prime}_{\leq p}}_{1:T}.$ Therefore, 

$$\mathbb{I}\{i^{\prime}_{1:T} \in  E^p_{\gamma, \tau}\} = \mathbb{I}\{i^{S^{\prime}_{\leq p}}_{1:T} \in E^p_{\gamma, \tau}\}$$

and therefore

$$\mathbb{I}\{i^{\prime}_{1:T} \notin  E^p_{\gamma, \tau}\} = \mathbb{I}\{i^{S^{\prime}_{\leq p}}_{1:T} \notin E^p_{\gamma, \tau}\}.$$

Taking expectation on both sides with respect to $b \sim \{0, 1\}^{\mathbb{N}}$, gives that

\begin{align*}
\mathbb{P}_{b \sim \{0, 1\}^{\mathbb{N}}}(\Acal(b, \ell^{\prime}_{1:T}) \notin E^p_{\gamma, \tau}) &=  \mathbb{P}_{b \sim \{0, 1\}^{\mathbb{N}}}(\Acal(b, \ell^{S^{\prime}_{\leq p}}_{1:T}) \notin E^p_{\gamma, \tau})\\
&\leq \sup_{\ell^S_{1:T} \in \Lcal}\mathbb{P}_{_{b \sim \{0, 1\}^{\mathbb{N}}}}(\Acal(b, \ell^S_{1:T}) \notin E^p_{\gamma, \tau})\\
&\leq e^{\eps p} \mathbb{P}_{_{b \sim \{0, 1\}^{\mathbb{N}}}}(\Acal(b, \ell_{1:T}) \notin E^p_{\gamma, \tau}),
\end{align*}

completing the proof.
\end{proof}

\end{document}